\DeclareMathOperator*{\argmin}{argmin}
\newtheorem{theorem}{Theorem}[section]
\newtheorem{lemma}[theorem]{Lemma}
\newtheorem{proposition}[theorem]{Proposition}
\theoremstyle{definition}
\newtheorem{definition}{Definition}[section]
\newtheorem*{remark}{Remark}
\newtheorem{assumption}{Assumption}
\newcommand\ie{\textit{i.e.}}
\newcommand\resp{\textit{resp.}}
\newcommand\wrt{\textit{w.r.t.}}
\newcommand\eg{\textit{e.g.}}
\newcommand{\quoting}[1]{``#1''}
\newcommand{\iid}{\text{i.i.d.}}
\newcommand{\aka}{\textit{\text{a.k.a.}}}
\newcommand{\innerprod}[2]{\left<#1, \, #2 \right>}
\newcommand{\straightd}{\mathrm{d}}
\newcommand{\tildeGamma}{\Tilde{\gamma}}
\newcommand{\tildeT}{\Tilde{T}}
\newcommand{\wstar}{w^\star}
\newcommand{\lambdamin}{\lambda_{\text{\tiny min}}}
\numberwithin{equation}{section}
\begin{document}

% \title{Learning structure in the presence of symmetries with infinitely wide neural networks}

\title{On the symmetries in the dynamics of wide two-layer neural networks}
% \title{Symmetries in the dynamics of wide two-layer neural networks}

\author[,1]{Karl Hajjar\thanks{Corresponding author: \texttt{hajjarkarl@gmail.com}}}
\author[2]{Lénaïc Chizat}
% \author[1]{Christophe Giraud}
\affil[1]{Laboratoire de Mathématiques d'Orsay\\ Université Paris-Saclay\\ 91405 Orsay, France}
\affil[2]{Institut de Mathématiques\\
       École Polytechnique Fédérale de Lausanne\\
       Lausanne, Switzerland}

\maketitle

% \begin{abstract}%
% \looseness=-1
%     The success of neural networks is believed to be in part due to the learning of representations adapted to the structure of the problem and allowing to predict the correct output. We study the relationship between the symmetries of the problem and the structure of the learned predictor for infinitely wide two-layer ReLU networks trained with gradient flow. We show that the orthogonal symmetries of the target function $f^*$ are shared by the learned predictor under gradient flow. In particular, when $f^*$ is odd, these symmetries lead to a linear predictor for which exponential convergence to the global minimum can be obtained. When $f^*$ has a hidden low-dimensional structure, we prove that the gradient flow PDE reduces to a lower-dimensional PDE. Furthermore, we present informal and numerical arguments pointing towards the adaption of predictor to the lower-dimensional structure of the problem.
% \end{abstract}

\begin{abstract}
    
    We consider the idealized setting of gradient flow on the population risk for infinitely wide two-layer ReLU neural networks (without bias), and study the effect of symmetries on the learned parameters and predictors. We first describe a general class of symmetries which, when satisfied by the target function $f^*$ and the input distribution, are preserved by the dynamics. We then study more specific cases. When $f^*$ is odd, we show that the dynamics of the predictor reduces to that of a (non-linearly parameterized) linear predictor, and its exponential convergence can be guaranteed. When $f^*$ has a low-dimensional structure, we prove that the gradient flow PDE reduces to a lower-dimensional PDE. Furthermore, we present informal and numerical arguments that suggest that the input neurons align with the lower-dimensional structure of the problem.
\end{abstract}

\section{Introduction}

The ability of neural networks to learn rich representations---or features---of their input data is commonly observed in state-of-the art models~\cite{zeiler2014visualizing, cammarata2020thread} and often thought to be the reason behind their good practical performance~\cite[Chap.~1]{Goodfellow-et-al-2016}. Yet, our theoretical understanding of how feature learning arises from simple gradient-based training algorithms remains limited. Much progress (discussed in Section~\ref{sec:related-work}) has been made recently to understand the power and limitations of gradient-based learning with neural networks, showing in particular their superiority over fixed-feature methods on some difficult tasks. However, positive results are often obtained for algorithms that differ in substantial ways from plain (stochastic) gradient descent (e.g.~the layers trained separately, or the algorithm makes just one truly non-linear step, etc). 

In this work, we take the algorithm as a given and instead adopt a descriptive approach. Our goal is to improve our understanding of how neural networks behave in the presence of symmetries in the data with plain gradient descent (GD) on two-layer fully-connected ReLU neural networks. To this end, we investigate situations with strong symmetries on the data, the target function and on the initial parameters, and study the properties of the training dynamics and the learned predictor in this context.

\subsection{Problem setting}\label{sec:pb-setting}
We denote by $d$ the input dimension, $\rho$ the input data distribution which we assume to be uniform over the unit sphere $\mathbb{S}^{d-1}$ of $\mathbb{R}^d$, and by $\mathcal{P}_2(\Omega)$ the space of probability measures with finite second moments over a measurable space $\Omega$.
We call $\sigma$ the activation function, which we take to be \text{ReLU}, that is $\sigma(z) = \max(0, z)$, 
$\ell : \mathbb{R} \times \mathbb{R} \to \mathbb{R}$ the loss function, which we assume to be continuous in both arguments and continuously differentiable \wrt~its second argument and we denote by $\partial_2 \ell$ this derivative. 

\paragraph{Mean-field limit of two-layer networks.}

In this work, we consider the infinite-width limit in the mean-field regime of the training dynamics of two-layer networks without intercept with a \text{ReLU} activation function. Given a measure $\mu \in \mathcal{P}_2(\mathbb{R} \times \mathbb{R}^d)$, we consider the infinitely wide two-layer network parameterized by $\mu$, defined, for any input $x \in \mathbb{R}^d$, by
\begin{align}\label{eq:predictor}
    f(\mu; x) = \int_{c \in \mathbb{R}^{1+d}} \phi(c; x) \straightd\mu(c),
\end{align}
where, for any $c=(a,b) \in \mathbb{R} \times \mathbb{R}^d$, $\phi(c;x) = a \sigma \left(b^\top x \right)$. Note that width-$m$ two-layer networks with input weights $(b_j)_{j \in [1, m]} \in (\mathbb{R}^d)^{m}$ and output weights $(a_j)_{j \in [1, m]} \in \mathbb{R}^m$ can be recovered by a measure $\mu_m = (1/m) \sum_{j=1}^m \delta_{(m a_j, b_j)}$ with $m$ atoms. 

\paragraph{Objective and Wasserstein gradient flow.}
We consider the problem of minimizing the \textit{population loss} objective for a given target function $f^*:\mathbb{R}^d \rightarrow \mathbb{R}$, which we assume to be bounded on the unit sphere, that is
\begin{align}\label{eq:objective}
    \min_{\mu \in \mathcal{P}_2(\mathbb{R} \times \mathbb{R}^d)} \Big(F(\mu) := \mathbb{E}_{x \sim \rho} \left[ \ell \left(f^*(x), f(\mu; x) \right) \right] \Big).
\end{align}
The Fréchet derivative of the objective function $F$ at $\mu$ is given by the function $F^{\prime}_{\mu}(c) = \mathbb{E}_{x \sim \rho} \left[\partial_2 \ell \left(f^*(x), f(\mu; x) \right) \phi(c;x) \right]$ for any $c=(a,b) \in \mathbb{R} \times \mathbb{R}^d$ (for more details, see Appendix~\ref{app:first-variation}). Starting from a given measure $\mu_0 \in \mathcal{P}_2(\mathbb{R} \times \mathbb{R}^d)$, we study the Wasserstein gradient flow (GF) of the objective~\eqref{eq:objective} which is a path $(\mu_t)_{t \geq 0}$ in the space of probability measures satisfying, in the sense of distributions, the partial differential equation (PDE) known as the continuity equation:
\begin{equation}\label{eq:w-gf}
    \begin{aligned}
        \partial_t \mu_t &= -\text{div} \left(v_t \, \mu_t \right), \\
        v_t(c) :&= -\nabla F^{\prime}_{\mu_t}(c).
    \end{aligned}
\end{equation}
\paragraph{Initialization.} 

We make the following assumption on the initial measure $\mu_0 \in \mathcal{P}_2(\mathbb{R} \times \mathbb{R}^d)$: $\mu_0$ decomposes as $\mu_0 = \mu_0^1 \otimes \mu_0^2$ where $\mu_0^1, \mu_0^2 \in \mathcal{P}_2(\mathbb{R}) \times \mathcal{P}_2(\mathbb{R}^d)$. This follows the standard initialization procedure at finite width. Because no direction should \textit{a priori} be favored, we assume $\mu_0^2$ to have spherical symmetry, \ie, it is invariant under any orthogonal transformation, and we additionally assume that $|a| = ||b||$ almost surely at initialization. It is shown in~\cite[Lemma 26]{chizat2020implicitBias}, and \cite[Section 2.5]{wojtowytsch2020convergence}, that with this assumption, $\mu_t$ stays supported on the set $\{|a| = ||b||\}$ for any $t \geq 0$. 

\paragraph{Comment on the assumptions.}
The assumption that $\mu_0$ decomposes as a product of two measures is to stay as close as possible to what is done in practice (independent initialization for different layers). The assumption that 
$|a| = ||b||$ is of a technical nature, and, along with the regularity conditions on the loss $\ell$ and the input data distribution $\rho$, ensures that the Wasserstein GF~\eqref{eq:w-gf} is well-defined~\cite[Lemma 3.1, Lemma 3.9]{wojtowytsch2020convergence} when using ReLU as a activation function (which bears technical difficulties because of its non-smoothness). The results of Section~\ref{sec:sym} hold for others activation function which potentially require less restrictive assumptions on $\mu_0$ and $\rho$ but still requires $\mu_0$ to decompose as a product of measures. In contrast, the results of Sections~\ref{sec:odd-case} and~\ref{sec:low-dim} are specific to $\sigma=\text{ReLU}$ and thus require the assumptions above on $\mu_0$ and $\rho$. Since our work focuses mostly on ReLU, we choose to state the results of all sections with the (more restrictive) assumptions stated above on $\mu_0$ and $\rho$.

\paragraph{Relationship with finite-width GD.}
If $\mu_0 = (1/m) \sum_{j=1}^m \delta_{(a_j(0), b_j(0))}$ is discrete, the Wasserstein GF~\eqref{eq:w-gf} is exactly continuous-time GD on the parameters of a standard finite-width neural network, and discretization errors (\wrt~the number of neurons) can be provided~\cite{mei2018mean, nguyen2020rigorous}.

\subsection{Summary of contributions}

Our main object of study is the gradient flow of the \emph{population} risk of \emph{infinitely wide} two-layer ReLU neural networks without intercept. Our motivation to consider this idealistic setting---infinite data and infinite width---is that it allows, under suitable choices for $\rho$ and $\mu_0$, the emergence of exact symmetries which are only approximate in the non-asymptotic setting\footnote{In contrast, our focus on GF is only for theoretical convenience and most of our results could be adapted to the case of GD.}.
%We also assume that the initial distribution of the parameters is spherically symmetric and that the response variable is a function $f^*$ of the input.

\paragraph{Symmetries, structure, and convergence.}
In this work, we are interested in the structures learned by the predictor $f(\mu_t; \cdot)$ under GF as $t$ grows large. Specifically, we make the following contributions:
\begin{itemize}
    \item In Section~\ref{sec:sym}, we prove that if $f^*$ is invariant under some orthogonal linear map $T$, then $f(\mu_t; \cdot)$ inherits this invariance under GF (\autoref{th:learning-inv}).
    % we assume $f^*$ is \textit{invariant} under some orthogonal transformation $T$ and study when the learned predictor $f(\mu_t; \cdot)$ enjoys the same invariance: .
    % under which assumptions on $\mu_0$ and $\rho$ the predictor $f(\mu_t; \cdot)$ enjoys the same invariance. We prove that if the input data distribution $\rho$ is invariant under some orthogonal transformation of the input (Definition~\ref{def:func-inv}), then the learnt predictor inherits this invariance (\autoref{th:learning-inv})
    
    \item In Section~\ref{sec:odd-case}, we study the case when $f^*$ is an \textit{odd} function and show that the network converges to the best linear approximator of $f^*$ at an exponential rate (\autoref{th:lin-exp-cv}). Linear predictors are optimal over the hypothesis class in that case, in particular because there is no intercept in our model.
    % the dynamics are degenerate and converge to the best linear approximator of $f^*$ at an exponential rate.
    
    \item
    In Section~\ref{sec:low-dim}, we consider the \textit{multi-index model} where $f^*$ depends \textit{only} on the orthogonal projection of its input onto some sub-space $H$ of dimension $d_H$. We prove that the dynamics can be reduced to a PDE in dimension $d_H$. If in addition, $f^*$ is the Euclidean norm of the projection of the input, we show that the dynamics reduce to a one-dimensional PDE (\autoref{th:1d-theta}). In the latter case, we were not able to prove theoretically the convergence of the neurons of the first layer towards $H$, and leave this as an open problem but we provide numerical evidence in favor of this result.
\end{itemize}
The code to reproduce the results of the numerical experiments can be found at: \\ \url{https://github.com/karl-hajjar/learning-structure}.

\subsection{Related work}\label{sec:related-work}

\paragraph{Infinite-width dynamics.}
It has been shown rigourously that for infinitely wide networks there is a clear distinction between a feature-learning regime and a kernel regime~\cite{chizat2019lazyTraining, yang2020featureLearning}. For shallow networks, this difference stems from a different scale (\wrt~width) of the initialization where a large initialization leads to the Neural Tangent Kernel (NTK) (\aka~the \quoting{lazy regime}) which is equivalent to a kernel method with random features~\cite{jacot2018ntk} whereas a small initialization leads to the so-called
\textit{mean-field} (MF) limit where features are learned from the first layer~\cite{chizat2019lazyTraining, yang2020featureLearning}. However, it is unclear in this setting exactly what those features are and what underlying structures are learned by the network. The aim of the present work is to study this phenomenon from a theoretical perspective for infinitely wide networks and to understand the relationship between the ability of networks to learn specific structures and the symmetries of a given task. 

A flurry of works study the dynamics of infinitely wide two-layer neural networks.~\cite{chizat2018global, mei2018mean, rotskoff2018parameters, wojtowytsch2020convergence, sirignano2020mean} study the gradient flow dynamics of the MF limit and show that they are well-defined in general settings and lead to convergence results (local or global depending on the assumptions). On the other hand, Jacot et.~al~\cite{jacot2018ntk} study the dynamic of the NTK parameterization in the infinite-width limit and show that it amounts to learning a \textit{linear} predictor on top of random features (fixed kernel), so that there is no feature learning.

\paragraph{Convergence rates.} 
% \citet{jacot2018ntk} describe the evolution of the learned predictor in the NTK parameterization as a \quoting{kernel descent} and the convergence to the target function at an exponential rate is readily obtained. 

In the MF limit, convergence rates are in general difficult to obtain in a standard setting. For instance,
\cite{chizat2018global, wojtowytsch2020convergence} show the convergence of the GF to a global optimum in a general setting but this does not allow convergence rates to be provided. 
To illustrate the convergence of the parameterizing measure to a global optimum in the MF limit, E et.~al~\cite{e2020continuousML} prove \textit{local} convergence (see Section 7) for one-dimensional inputs and a specific choice of target function in $O(t^{-1})$ where $t$ is the time step. At finite-width, Daneshmand and Bach~\cite{daneshmand2022polynomial} also prove convergence of the parameters to a global optimum in $O(t^{-1})$ using an algebraic idea which is specific to the ad-hoc structure they consider (inputs in two dimensions and target functions with finite number of atoms).
%but their analysis is limited to inputs in two dimensions, specific target functions which only have a finite number of values, and predictors which have a finite number of neurons which is also that of the target function.

In Section~\ref{sec:odd-case}, we show convergence of the MF limit at an exponential rate when the target function is odd. In the setting of this section, the training dynamics are degenerate and although input neurons move, the symmetries of the problem imply that the predictor is linear.

%
% In a different setting, \citet{daneshmand2022polynomial} show convergence of the parameters to a global optimum in $O(t^{-1})$ where $t$ is the time step, but their analysis is limited to inputs in two dimensions, specific target functions which only have a finite number of values, and predictors which have a finite number of neurons which is also that of the target function. Similarly, for one-dimensional inputs and a specific choice of target function,~\citet{e2020continuousML} also show \textit{local} convergence (see Section 7) of the parameterizing measure to a global optimum in $O(t^{-1})$.~\citet{chen2022feature} show that the convergence occurs at an exponential rate for two-layer networks for much more general target functions and input dimensions but only when the number of samples does not exceed the input dimension and the output layer weights are fixed, which does not cover our setting. 

% \citet{chen2022feature} are able to prove convergence at a linear rate for two-layer networks for much more general target functions and input dimensions, but only if the number of samples does not exceed the dimension of the inputs. In contrast, our results are valid without restriction on the dimension but for a class of target functions which bear specific symmetries.  

\paragraph{Low-dimensional structure.} 

Studying how neural networks can adapt to hidden low-dimensional structures is a way of approaching theoretically the feature-learning abilities of neural networks. Bach~\cite{bach2017breaking} studies the statistical properties of infinitely wide two-layer networks, and shows that when the target function only depends on the projection on a low-dimensional sub-space, these networks circumvent the curse of dimensionality with generalization bounds which only depend on the dimension of the sub-space. In a slightly different context, Chizat and Bach~\cite{chizat2020implicitBias} show that for a binary classification task, when there is a low-dimensional sub-space for which the projection of the data has sufficiently large inter-class distance, only the dimension of the sub-space (and not that of the ambient space) appears in the upper bound on the probability of misclassification. Whether or not such a low-dimensional sub-space is actually learned by GD is not addressed in these works.

Similarly,~\cite{cloninger2021deep, damian2022neural} focus on learning functions which have a hidden low-dimensional structure with neural networks. They consider a single step of GD on the input layer weights and show that the approximation / generalization error adapts to the structure of the problem: they provide bounds on the number of data points / parameters needed to achieve negligible error, which depend on the reduced dimension and not the dimension of the ambient space. In a similar context, Mousavi-Hosseini et.~al~\cite{mousavi2022neural} consider (S)GD on the first layer only of a finite-width two-layer network and show that with sufficient $L_2$-regularization and with a standard normal distribution on the input data the first layer weights align with the lower-dimensional sub-space when trained for long enough. They then use this property to then provide statistical results on networks trained with SGD. 

In a setting close to ours but on a classification task with finite-data and at finite-width, Paccolat et.~al~\cite{Paccolat_2021} compare the feature learning regime with the NTK regime in the presence of hidden low-dimensional structure and quantify for each regime the scaling law of the test error \wrt~the number of training samples, mostly focusing on the case $d_H=1$.

In a similar setting to that of~\cite{bach2017breaking}, Abbe et.~al~\cite{abbe2022staircase} study how GF for infinitely wide two-layer networks can learn specific classes of functions which have a hidden low-dimensional structure when the inputs are Rademacher variables. This strong symmetry assumption ensures that the learned predictor shares the same low-dimensional structure at any time step (from the $t=0$) and this allows them to characterize precisely what classes of target functions can or cannot be learned by GF in this setting. In contrast, we are interested in how infinitely wide networks \textit{learn} those low-dimensional structures during training, and in
the role of symmetries in enabling such a behaviour after initialization.

\paragraph{Learning representations.}
An existing line of work~\cite{yehudai2019power, allen2019learning, abbe2021staircase, damian2022neural, ba2022high} studies in depth the representations learned by neural networks trained with (S)GD at finite-width from a different perspective focusing on the advantages of feature-learning in terms of performance comparatively to using random features. In contrast, our aim is to describe the representations themselves in relationship with the symmetries of the problem.

\paragraph{Symmetries.} 
We stress that the line of work around symmetries of neural networks dealing with finding network architectures for which the output is invariant (\wrt~to its input or parameters) by some \textit{group} of transformations (see~\cite{bloem2020probabilisticSymmetries, ganev2021qr, gluch2021noether}, and references therein) is entirely different from what we are concerned with in the present work. In contrast, the setting of Mei et.~al~\cite{mei2018mean} is much closer to ours as they study how the invariances of the target function / input data can lead to simplifications in the dynamics of infinitely wide two-layer networks in the mean-field regime which allows them to prove global convergence results.

\subsection{Notations}

We denote by $\mathcal{M}_+(\Omega)$ the space of non-negative measures over a measurable space $\Omega$. For any measure $\mu$ and measurable map $T$, $T_{\#} \mu$ denotes the pushforward measure of $\mu$ by $T$. We denote by $\mathcal{O}(p)$ and $\text{id}_{\mathbb{R}^p}$ respectively the orthogonal group and the identity map of $\mathbb{R}^p$ for any $p \in \mathbb{N}$. Finally, $\innerprod{\cdot}{\cdot}$ is the Euclidean inner product and $||\, \cdot \,||$ the corresponding norm.

\section{Invariance under orthogonal symmetries}\label{sec:sym}

In this section, we demonstrate that if the target function $f^*$ is invariant under some orthogonal transformation $T$, since the input data distribution is also invariant under $T$, then $f(\mu_t; \cdot)$ is  invariant under $T$ as well for any $t \geq 0$. This invariance property of the dynamics \wrt~orthogonal symmetries is possible with an infinite number of neurons but is only approximate at finite-width. It is noteworthy that the results of this section hold for any activation function $\sigma$ and input data distribution $\rho$ which has the same symmetries as $f^*$,  provided that the Wasserstein GF~\eqref{eq:w-gf} is unique. We start with a couple of definitions:

\begin{definition}[Function invariance]\label{def:func-inv}
Let $T$ be a map from $\mathbb{R}^d$ to $\mathbb{R}^d$, and $f: \mathbb{R}^d \rightarrow \mathbb{R}$. Then, $f$ is said to be \textit{invariant} (\resp~\textit{anti-invariant}) under $T$ if for any $x \in \mathbb{R}^d$, $f(T(x)) = f(x)$ (\resp~$f(T(x)) = -f(x)$).
\end{definition}

\begin{definition}[Measure invariance]\label{def:meas-inv}
Let $\Omega \subset \mathbb{R}^d$, $T$ be a measurable map from $\Omega$ to $\Omega$, and $\mu$ be a measure on $\Omega$. Then, $\mu$ is said to be invariant under $T$ if $T_{\#} \mu = \mu$, or equivalently, if for any continuous and compactly supported $\varphi: \Omega \to \mathbb{R}$, $\int \varphi(x) \straightd \mu(x) = \int \varphi(T(x)) \straightd \mu(x)$.
\end{definition}
We are now ready to state the two main results of this section.

\begin{proposition}[Learning invariance]\label{th:learning-inv}
Let $T \in \mathcal{O}(d)$, and assume that $f^*$ is invariant under $T$. Then, for any $t \geq 0$, the Wasserstein GF $\mu_t$ of Equation~\eqref{eq:w-gf} is invariant under $\Tilde{T}: (a, b) \in \mathbb{R} \times \mathbb{R}^d \mapsto (a, T(b))$, and the corresponding predictor $f(\mu_t; \cdot)$ is invariant under $T$. 
\end{proposition}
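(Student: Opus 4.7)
The plan is to argue via uniqueness of the Wasserstein gradient flow: show that the pushforward $\nu_t := \tilde{T}_{\#} \mu_t$ satisfies the same continuity equation \eqref{eq:w-gf} with the same initial condition as $\mu_t$, so that $\nu_t = \mu_t$ for all $t \geq 0$. Note that $\tilde{T}$ is itself an orthogonal transformation of $\mathbb{R}^{1+d}$ since it acts as the identity on the first coordinate and as $T \in \mathcal{O}(d)$ on the last $d$.

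First I would check the initial condition. Since $\mu_0 = \mu_0^1 \otimes \mu_0^2$ and $\tilde{T}$ acts trivially on the $a$-coordinate, $\tilde{T}_{\#} \mu_0 = \mu_0^1 \otimes T_{\#} \mu_0^2 = \mu_0^1 \otimes \mu_0^2 = \mu_0$ by the invariance assumption on $\mu_0^2$. Next I would establish the key symmetry of the objective: for any $\mu \in \mathcal{P}_2(\mathbb{R}\times\mathbb{R}^d)$,
\begin{equation*}
f(\tilde{T}_{\#}\mu;x) = \int a\,\sigma((Tb)^\top x)\,\straightd\mu(a,b) = \int a\,\sigma(b^\top T^{-1} x)\,\straightd\mu(a,b) = f(\mu;T^{-1}x),
\end{equation*}
using $T^\top = T^{-1}$. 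Combined with the $T$-invariance of $f^\star$ and $\rho$, a change of variables $x \mapsto T x$ gives $F(\tilde{T}_{\#}\mu) = F(\mu)$. The same manipulation applied to the first variation yields
\begin{equation*}
F'_{\tilde{T}_{\#}\mu}(\tilde{T}(c)) = F'_{\mu}(c) \qquad \text{for all } c \in \mathbb{R}\times\mathbb{R}^d,
\end{equation*}
i.e.\ $F'_{\tilde{T}_{\#}\mu} = F'_{\mu} \circ \tilde{T}^{-1}$. Taking gradients and using orthogonality of $\tilde T$ (so $(\tilde T^{-1})^\top = \tilde T$) gives the equivariance of the velocity field:
\begin{equation*}
v^{\tilde{T}_{\#}\mu}(\tilde{T}(c)) = \tilde{T}\, v^{\mu}(c).
\end{equation*}

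With this in hand, I would apply the pushforward to the continuity equation satisfied by $\mu_t$. Using the standard identity that if $\mu_t$ satisfies $\partial_t\mu_t + \mathrm{div}(v_t\mu_t)=0$, then for the linear (hence smooth) map $\tilde T$ the pushforward $\nu_t$ satisfies $\partial_t\nu_t + \mathrm{div}((\tilde T\, v_t\circ\tilde T^{-1})\,\nu_t)=0$. Plugging in the equivariance relation with $v_t = v^{\mu_t}$ gives $\tilde T\, v^{\mu_t}\circ \tilde T^{-1} = v^{\tilde T_{\#}\mu_t} = v^{\nu_t}$, so $\nu_t$ solves the very same Wasserstein gradient flow \eqref{eq:w-gf} with initial condition $\nu_0 = \mu_0$. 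Invoking uniqueness of the Wasserstein GF (as granted by the initialization assumptions and the references cited in Section~\ref{sec:pb-setting}), one concludes $\nu_t = \mu_t$ for every $t \geq 0$, i.e.\ $\mu_t$ is $\tilde T$-invariant. The invariance of the predictor then follows in one line: $f(\mu_t;Tx) = f(\tilde T_{\#}\mu_t;Tx) = \int a\,\sigma((Tb)^\top Tx)\,\straightd\mu_t(a,b) = f(\mu_t;x)$.

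The routine parts are the initialization and predictor computations; the conceptual step is identifying the equivariance $v^{\tilde T_{\#}\mu}\circ\tilde T = \tilde T\, v^{\mu}$, which reduces the statement to an invocation of uniqueness. The only mildly delicate point I would want to verify carefully is that the hypotheses of the uniqueness result of \citet{wojtowytsch2020convergence} transfer to $\nu_t$ (in particular, that $\nu_t$ remains supported on $\{|a|=\|b\|\}$, which is immediate since $\tilde T$ preserves this set, and that $\nu_0 \in \mathcal{P}_2$, which is immediate since $\tilde T$ is an isometry). No additional smoothness or interchange-of-limit arguments are required.
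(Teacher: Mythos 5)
Your proposal is correct and follows essentially the same strategy as the paper's proof: push the flow forward by the orthogonal map $\tilde T$, verify the initial condition via the product structure and invariance of $\mu_0^2$, establish the equivariance of the velocity field from the orthogonality of $T$ together with the invariance of $f^*$ and $\rho$, and conclude by uniqueness of the Wasserstein GF. The only (immaterial) difference is that the paper works with $\tilde T^{-1}_{\#}\mu_t$ rather than $\tilde T_{\#}\mu_t$.
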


\begin{proposition}[Learning anti-invariance]\label{th:learning-anti-inv}
Under the same assumptions as in Proposition~\ref{th:learning-inv} except now we assume $f^*$ is anti-invariant under $T$, and assuming further that $\partial_2 \ell(-y, -\hat{y}) = -\partial_2 \ell(y, \hat{y})$ for any $y, \hat{y} \in \mathbb{R}$, and that $\mu_0^1$ is symmetric around $0$ (\ie, invariant under $:a \in \mathbb{R} \mapsto - a$), we then have that  for any $t \geq 0$, the Wasserstein GF $\mu_t$ in Equation~\eqref{eq:w-gf} is invariant under $\Tilde{T}: (a, b) \in \mathbb{R} \times \mathbb{R}^d \mapsto (-a, T(b))$, and the corresponding predictor $f(\mu_t; \cdot)$ is anti-invariant under $T$.
\end{proposition}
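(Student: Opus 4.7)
The plan is to mirror the proof strategy of Proposition~\ref{th:learning-inv}: establish that $\tildeT_{\#}\mu_t$ solves the same Wasserstein gradient flow equation as $\mu_t$ with the same initial condition, and conclude by the uniqueness assumption on~\eqref{eq:w-gf}. The key algebraic ingredient is a sign-flip bookkeeping: the $-a$ factor in $\tildeT$ will interact with the anti-invariance of $f^*$ and the antisymmetry of $\partial_2 \ell$ to produce an overall invariance of $F'_\mu$ under $\tildeT$.

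First, I would verify that the initial measure is $\tildeT$-invariant: since $\mu_0 = \mu_0^1 \otimes \mu_0^2$ with $\mu_0^1$ symmetric around $0$ and $\mu_0^2$ invariant under $T$, we have $\tildeT_{\#}\mu_0 = (-\mathrm{id}_{\mathbb{R}})_{\#}\mu_0^1 \otimes T_{\#}\mu_0^2 = \mu_0$. Next, for any $\mu$ invariant under $\tildeT$, a change of variable gives, for every $x \in \mathbb{R}^d$,
\begin{equation*}
    f(\mu; T(x)) = \int a\sigma(b^\top T(x)) \straightd\mu(a,b) = \int (-a)\sigma\bigl((T(b))^\top T(x)\bigr)\straightd\mu(a,b) = -f(\mu;x),
\end{equation*}
using $T^\top T = \mathrm{id}_{\mathbb{R}^d}$, so $f(\mu;\cdot)$ is anti-invariant under $T$. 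This proves the predictor statement once the measure statement is established.

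The main step is to upgrade this invariance of $\mu_t$ from $t=0$ to all $t \geq 0$. For this, I would show that whenever $\mu$ is $\tildeT$-invariant, the first variation $F'_\mu$ is \emph{invariant} under $\tildeT$. Indeed, using the $T$-invariance of $\rho$, the anti-invariance of $f^*$, the previous paragraph, and the hypothesis $\partial_2\ell(-y,-\hat y) = -\partial_2\ell(y,\hat y)$,
\begin{equation*}
\begin{aligned}
    F'_\mu(\tildeT(c)) &= \mathbb{E}_{x\sim\rho}\bigl[\partial_2\ell(f^*(x), f(\mu;x))\,(-a)\sigma((T(b))^\top x)\bigr] \\
    &= \mathbb{E}_{x\sim\rho}\bigl[\partial_2\ell(f^*(T(x)), f(\mu;T(x)))\,(-a)\sigma(b^\top x)\bigr] \\
    &= \mathbb{E}_{x\sim\rho}\bigl[-\partial_2\ell(f^*(x), f(\mu;x))\,(-a)\sigma(b^\top x)\bigr] = F'_\mu(c).
\end{aligned}
\end{equation*}
Since $\tildeT$ is a linear orthogonal map on $\mathbb{R}^{1+d}$ (both $-\mathrm{id}_{\mathbb{R}}$ and $T$ being orthogonal), differentiating $F'_\mu \circ \tildeT = F'_\mu$ yields $\nabla F'_\mu(\tildeT(c)) = \tildeT(\nabla F'_\mu(c))$, so the velocity field $v_t = -\nabla F'_{\mu_t}$ is $\tildeT$-equivariant whenever $\mu_t$ is $\tildeT$-invariant.

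Finally, I would feed this equivariance into the continuity equation. A direct computation with the pushforward (or equivalently, testing against $\varphi \circ \tildeT$ in the weak formulation) shows that $\nu_t := \tildeT_{\#}\mu_t$ satisfies $\partial_t \nu_t = -\operatorname{div}(w_t\,\nu_t)$ with $w_t(c) = \tildeT\bigl(v_t(\tildeT^{-1}(c))\bigr)$; by the equivariance just proved, if $\mu_t$ is $\tildeT$-invariant then $w_t(c) = v_t(c)$, so $(\nu_t)_{t\ge 0}$ solves~\eqref{eq:w-gf} with initial datum $\tildeT_{\#}\mu_0 = \mu_0$. Uniqueness of the Wasserstein gradient flow then forces $\nu_t = \mu_t$ for all $t \geq 0$. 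The main obstacle I anticipate is the bootstrapping issue: the equivariance of $v_t$ requires $\mu_t$ to already be $\tildeT$-invariant, so one should either argue inductively along the flow or, more cleanly, observe that both $(\mu_t)$ and $(\tildeT_{\#}\mu_t)$ are solutions of the same autonomous PDE with the same initial condition and invoke uniqueness directly, which sidesteps the circularity.
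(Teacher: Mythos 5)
Your overall strategy is the same as the paper's: push $\mu_t$ forward by $\tildeT$ (the paper works with $\tildeT^{-1}_{\#}\mu_t$, which is immaterial), check that the initial condition is preserved, show that the pushforward is again a Wasserstein gradient flow of $F$, and conclude by uniqueness. The initialization step, the anti-invariance of the predictor for an invariant measure, and the sign bookkeeping with $\partial_2\ell(-y,-\hat y)=-\partial_2\ell(y,\hat y)$ are all correct. The issue is that the one computation you actually display, $F'_{\mu}(\tildeT(c))=F'_{\mu}(c)$, is established only under the hypothesis that $\mu$ is already $\tildeT$-invariant (you need it to write $f(\mu;T(x))=-f(\mu;x)$), and you rightly flag the resulting circularity. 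But your proposed escape---\quoting{observe that both $(\mu_t)$ and $(\tildeT_{\#}\mu_t)$ are solutions of the same autonomous PDE}---is not an observation: that $\nu_t:=\tildeT_{\#}\mu_t$ solves~\eqref{eq:w-gf} is exactly the point left to prove, since it requires identifying $w_t(c)=\tildeT\bigl(v_t(\tildeT^{-1}(c))\bigr)$ with $-\nabla F'_{\nu_t}(c)$, and your conditional identity does not deliver this.

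The repair is short and uses only ingredients you already have, applied unconditionally: for an \emph{arbitrary} $\mu$, orthogonality of $T$ gives $f(\tildeT_{\#}\mu;x)=-f(\mu;T^{-1}(x))$, and then the change of variables $x\mapsto T(x)$ under $\rho$, combined with the anti-invariance of $f^*$ and the antisymmetry of $\partial_2\ell$, yields the two-measure identity $F'_{\tildeT_{\#}\mu}=F'_{\mu}\circ\tildeT^{-1}$ with no invariance assumption on $\mu$. Taking gradients and using orthogonality of $\tildeT$ then gives $-\nabla F'_{\nu_t}=\tildeT\circ v_t\circ\tildeT^{-1}=w_t$, so $(\nu_t)_{t\ge 0}$ genuinely is a gradient flow of $F$ started at $\nu_0=\mu_0$, and uniqueness closes the argument without any bootstrapping. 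This is precisely what the paper does in Appendix~\ref{app:sym-prelim} and~\ref{app:proof-learning-anti-inv}, where $F'_{\mu_t}\circ\tildeT$ is compared with $F'_{\nu_t}$ rather than with $F'_{\mu_t}$. In short: right approach and right ingredients, but the key displayed identity must be replaced by its unconditional, two-measure version to avoid assuming the conclusion.
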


\begin{remark}
The results above also hold for networks with intercepts at both layers. The conditions of Proposition~\ref{th:learning-anti-inv} are satisfied by both the squared loss and the logistic loss (\aka~the cross-entropy loss).
\end{remark}

Essentially, those results show that training with GF preserves the orthogonal symmetries of the problem:
the invariance of the target function under an orthogonal transformation  leads to the same invariance for $\mu_t$ and $f(\mu_t; \cdot)$. 
The proof, presented in Appendix~\ref{app:sym}, relies crucially on the fact that $T$ is an orthogonal map which combines well with the structure of $\phi(c;x)$ involving an inner product. The idea is essentially that the orthogonality of $T$ allows us to relate the gradient of $\phi$ (and consequently of $F^\prime_{\mu_t}$) \wrt~$c$ at $(T(c); x)$ to the same gradient at $(c;T^{-1}(x))$ and then to use the invariance of $f^*$ and $\rho$ to conclude. 

In the following sections we discuss the particular cases where functions are (anti-)invariant under $-\text{id}_{\mathbb{R}^d}$ (\ie, even or odd functions) or some sub-group of $\mathcal{O}(d)$.

\section{Exponential convergence for odd target functions}\label{sec:odd-case}
We consider here an odd target, function, \ie, for any $x \in \mathbb{R}^d$, $f^*(-x) = -f^*(x)$.

\paragraph{Linearity of odd predictors.}
Proposition~\ref{th:learning-anti-inv} ensures that the predictor $f(\mu_t; \cdot)$ associated with the Wasserstein GF of Equation~\eqref{eq:w-gf} is also odd at any time $t \geq 0$, and we can thus write, for any $x$, $f(\mu_t; x) = \frac{1}{2} \left(f(\mu_t; x) -  f(\mu_t; -x) \right)$, which yields 
\begin{align*}
    f(\mu_t; x) = \frac{1}{2} \left( \int_{a,b} a \left[\sigma(b^\top x) - \sigma(-b^\top x) \right] \straightd\mu_t(a,b)  \right) = \frac{1}{2} \int_{a,b} a \left(b^\top x \right) \straightd\mu_t(a,b),
\end{align*}
where the last equality stems from the fact that for ReLU, $\sigma(x) - \sigma(-x) = x$. 
Put differently, the predictor is \textbf{linear}:~it is the same as replacing $\sigma$ by $\frac{1}{2} \text{id}_{\mathbb{R}^d}$, and $f(\mu_t; x) = {w(t)}^\top x$, where 
\begin{align}\label{eq:w(t)}
    w(t) :&= \frac{1}{2} \int_{a,b} a\,b \, \straightd\mu_t(a,b) \in \mathbb{R}^d.
\end{align}
This degeneracy is not surprising as in fact, a linear predictor is the best one can hope for in this setting. Indeed, consider the following assumption and the next lemma:
\begin{assumption}[Squared loss function]\label{ass:squared-loss}
The loss function $\ell$ is the squared loss, \ie, $\ell(y, \hat{y}) = \frac{1}{2} (y - \hat{y})^2$, and thus satisfies the condition of Proposition~\ref{th:learning-anti-inv}.
\end{assumption}
We make this assumption in order to provide an explicit convergence rate in \autoref{th:lin-exp-cv} below.
\begin{lemma}[Optimality of odd predictors]\label{th:opt-lin}
Let $f$ be a predictor in the hypothesis class $\mathcal{F} := \left\{ :x \mapsto \int a\sigma(b^\top x) \straightd \mu(a,b) ; \mu \in \mathcal{P}_2(\mathbb{R} \ \times \mathbb{R}^d)  \right)\}$. Then, denoting $f_{\text{odd}}(x) := \frac{1}{2}(f(x) - f(-x))$ (\resp~$f_{\text{even}} := \frac{1}{2}(f(x) + f(-x))$) the odd (\resp~even) part of $f$, one has:
\begin{align*}
    &(i) \quad f_{\text{odd}} \in \mathcal{F}, \\
    &(ii) \ \  L(f):= \mathbb{E}_{x \sim \rho} \left[ \left(f^*(x) - f(x) \right)^2 \right] \geq \mathbb{E}_{x \sim \rho} \left[ \left(f^*(x) - f_{\text{odd}}(x) \right)^2 \right] =: L(f_{\text{odd}}), \\
    &(iii) \ \, \text{equality holds if and only if } f \text{ is odd} \ \rho\text{-almost surely}.
\end{align*}
\end{lemma}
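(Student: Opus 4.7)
The plan is to exploit two structural facts: the ReLU identity $\sigma(z)-\sigma(-z)=z$, which collapses any odd predictor in $\mathcal{F}$ into a linear one, and the invariance $\rho = (-\mathrm{id})_{\#}\rho$ inherited from the spherical symmetry of the input distribution.

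For (i), I would first compute
\[
f(x) - f(-x) = \int a\bigl[\sigma(b^\top x) - \sigma(-b^\top x)\bigr]\,\straightd\mu(a,b) = \int a\,(b^\top x)\,\straightd\mu(a,b),
\]
so that $f_{\text{odd}}(x) = w^\top x$ with $w := \tfrac{1}{2}\int a\,b\,\straightd\mu(a,b) \in \mathbb{R}^d$ (well-defined since $\mu \in \mathcal{P}_2$). I would then exhibit a specific measure realizing $f_{\text{odd}}$: setting $\mu' := \tfrac{1}{2}\delta_{(2,w)} + \tfrac{1}{2}\delta_{(-2,-w)}$ gives $\int a\sigma(b^\top x)\,\straightd\mu'(a,b) = \sigma(w^\top x) - \sigma(-w^\top x) = w^\top x$, and $\mu' \in \mathcal{P}_2(\mathbb{R}\times\mathbb{R}^d)$, proving $f_{\text{odd}} \in \mathcal{F}$.

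For (ii), write $f = f_{\text{odd}} + f_{\text{even}}$ and expand:
\[
L(f) = \mathbb{E}_{x\sim\rho}\bigl[\bigl((f^*-f_{\text{odd}}) - f_{\text{even}}\bigr)^2\bigr] = L(f_{\text{odd}}) - 2\mathbb{E}_{x\sim\rho}\bigl[(f^*-f_{\text{odd}})\,f_{\text{even}}\bigr] + \mathbb{E}_{x\sim\rho}\bigl[f_{\text{even}}(x)^2\bigr].
\]
The key step is to show the cross term vanishes: since $f^*$ and $f_{\text{odd}}$ are odd while $f_{\text{even}}$ is even, the product $(f^*-f_{\text{odd}})f_{\text{even}}$ is odd, and invariance of $\rho$ under $x\mapsto -x$ yields $\mathbb{E}_{x\sim\rho}[(f^*-f_{\text{odd}})f_{\text{even}}] = -\mathbb{E}_{x\sim\rho}[(f^*-f_{\text{odd}})f_{\text{even}}]$, hence it is $0$. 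Therefore $L(f) = L(f_{\text{odd}}) + \mathbb{E}_{x\sim\rho}[f_{\text{even}}(x)^2] \geq L(f_{\text{odd}})$.

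For (iii), the inequality in (ii) is an equality iff $\mathbb{E}_{x\sim\rho}[f_{\text{even}}(x)^2] = 0$, i.e. $f_{\text{even}} = 0$ $\rho$-almost surely, which is exactly the statement that $f$ is odd $\rho$-a.s. The main (very mild) obstacle is simply the explicit construction in (i) showing that a linear function indeed lies in $\mathcal{F}$; everything else is a direct parity argument using the symmetry of $\rho$.
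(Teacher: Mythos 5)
Your proof is correct. Parts (ii) and (iii) follow exactly the paper's route: decompose $f = f_{\text{odd}} + f_{\text{even}}$, observe that the cross term $\mathbb{E}_{x\sim\rho}[(f^*-f_{\text{odd}})f_{\text{even}}]$ vanishes because the integrand is odd and $\rho$ is symmetric under $x\mapsto -x$ (you spell out what the paper dismisses as ``$0$ by symmetry''; note this step does require the standing assumption of the section that $f^*$ is odd, which you correctly invoke), and conclude that equality holds iff $f_{\text{even}}=0$ $\rho$-a.s. Where you genuinely diverge is part (i). The paper symmetrizes the parameterizing measure, taking $\nu := \tfrac{1}{2}(\mu + S_{\#}\mu)$ with $S(a,b)=(-a,-b)$ and checking $f(\nu;\cdot)=f_{\text{odd}}$; this argument is activation-agnostic and never needs to know what $f_{\text{odd}}$ looks like. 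You instead use the ReLU identity $\sigma(z)-\sigma(-z)=z$ to collapse $f_{\text{odd}}$ to the linear map $x\mapsto w^\top x$ with $w=\tfrac12\int a\,b\,\straightd\mu$, and then realize that linear map by the explicit two-atom measure $\tfrac12\delta_{(2,w)}+\tfrac12\delta_{(-2,-w)}$; the arithmetic checks out and the measure has finite second moment. Your version is more concrete and dovetails with the linearity discussion that immediately follows the lemma in the paper, but it is tied to ReLU, whereas the paper's symmetrization would prove $(i)$ for any activation.
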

\begin{proof}
The result readily follows from the decomposition $f = f_{\text{odd}} + f_{\text{even}}$ which leads to
\begin{align*}
    L(f) = L(f_{\text{odd}}) \ + \  \underbrace{\mathbb{E}_{x \sim \rho} \left[\left(f_{\text{even}}(x)\right)^2 \right]}_{\geq 0} \ - \ 2 \underbrace{\mathbb{E}_{x \sim \rho} \left[\left(f^*(x) - f_{\text{odd}}(x) \right)f_{\text{even}}(x) \right]}_{0 \text{ by symmetry}}.
\end{align*}
% The last expectation is $0$ because the integrand is odd as the product of an odd function with an even function, and the measure $\rho$ is symmetric.
We then get that $L(f) \geq L(f_{\text{odd}})$ with equality if and only if $\mathbb{E}_{x \sim \rho} \left[\left(f_{\text{even}}(x)\right)^2 \right] = 0$, \ie, $f_{\text{even}}(x) = 0$ for $\rho$-almost every $x$. Finally, if $\mu \in \mathcal{P}_2(\mathbb{R}^{d+1})$, then $\nu := \frac{1}{2}(\mu + S_{\#}\mu) \in \mathcal{P}_2(\mathbb{R}^{d+1})$, where $S: (a,b) \in \mathbb{R}^{d+1} \mapsto (-a, -b)$, and $f(\nu; \cdot) = f_{\text{odd}}(\mu; \cdot)$, which shows $f_{\text{odd}}(\mu; \cdot) \in \mathcal{F}$.
\end{proof}
Since, as shown above, any odd predictor turns out to be linear because of the symmetries of ReLU, in this context, the best one can expect is thus to learn the best linear predictor.

\paragraph{Exponential convergence for linear networks.}
\looseness=-1
We are thus reduced to studying the dynamics of linear networks (which in our case are infinitely wide), which is an interesting object of study its own right (Ji and Telegarsky~\cite{ji2018gradient} show a result similar to our result below in the finite-width case with the logistic loss on a binary classification task). In this case, the Wasserstein GF~\eqref{eq:w-gf} (with ReLU replaced by $\frac{1}{2} \text{id}_{\mathbb{R}^d}$) is defined for more general input distributions $\mathbb{P} \in \mathcal{P}_2(\mathbb{R}^d)$ (\eg, empirical measures) and target functions $f^*$. The objective in this context is thus to learn:
\begin{equation}\label{eq:Q}
\begin{aligned}
    w^\star \in \argmin_{w \in \mathbb{R}^d} \left( Q(w) := \frac{1}{2} \mathbb{E}_{x \sim \mathbb{P}} \left[\left(f^*(x) - \innerprod{w}{x} \right)^2 \right] \right)
\end{aligned}
\end{equation}
with the dynamics of linear infinitely wide two-layer networks described by the Wasserstein GF~\eqref{eq:w-gf} where the activation function $\sigma$ is replaced by $\frac{1}{2} \text{id}_{\mathbb{R}^d}$.
% Expanding the square, the function $Q$ to minimize is equal to 
% \begin{align*}
%     Q(w) = \frac{1}{2} \left(\mathbb{E}[f^*(x)^2] - 2 \Big \langle w, \, \mathbb{E}[f^*(x)x] \Big \rangle + w^\top \mathbb{E}[x x^\top] w\right),
% \end{align*}
\autoref{th:lin-exp-cv} below shows exponential convergence to a global minimum of $Q$ as soon as the problem is strongly convex. 
Note that although in this case both $\phi(\cdot; \cdot)$ (see Equation~\eqref{eq:predictor}) and the predictor in the objective $Q$ are linear \wrt~the input, only the predictor in $Q$ is linear in the parameters (ordinary least squares).
\begin{theorem}\label{th:lin-exp-cv}
Assume that the smallest eigenvalue $\lambda_{\text{min}}$ of $\mathbb{E}_{x \sim \mathbb{P}}[x x^\top]$ is positive. Let $(\mu_t)_{t \geq 0}$ be the Wasserstein GF associated to~\eqref{eq:w-gf} with activation function $\frac{1}{2} \text{\normalfont{ id}}_{\mathbb{R}^d}$ instead of $\sigma = \text{\normalfont{ ReLU}}$, and call $w(t) = \frac{1}{2} \int a b \, \straightd \mu_t(a,b) \in \mathbb{R}^d$. Then, there exits $\eta > 0$ and $t_0 > 0$ such that, for any $t \geq t_0$,
\begin{align*}
    \Big(Q(w(t)) - Q(w^\star) \Big) \leq e^{-2 \eta \lambdamin (t-t_0)} \Big(Q(w(t_0)) - Q(w^\star) \Big).
\end{align*}
\end{theorem}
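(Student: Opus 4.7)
The plan is to reduce the Wasserstein gradient flow to a non-autonomous ODE on the finite-dimensional predictor $w(t)\in\mathbb{R}^d$, establish a Polyak-\L{}ojasiewicz-type inequality with a time-dependent constant, and then show that this constant is bounded below for $t$ large. With the linear activation $\phi(a,b;x)=\tfrac{a}{2}b^\top x$, applying the continuity equation to the test function $h(a,b):=\tfrac12 ab$ yields
\[
\dot w(t) \;=\; -\tfrac14\bigl[M(t) + N(t)\,I\bigr]\, g(t),\qquad M(t) := \int bb^\top d\mu_t,\ \ N(t) := \int a^2\,d\mu_t,\ \ g(t) := \Sigma\bigl(w(t) - w^\star\bigr),
\]
where $\Sigma := \mathbb{E}_{x\sim\mathbb{P}}[xx^\top]$. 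Because the balanced condition $|a|=\|b\|$ is preserved by the flow, $\mathrm{tr}(M(t))=N(t)$ for all $t\geq 0$, and Cauchy-Schwarz gives the key bound
\[
\|w(t)\| \;\leq\; \tfrac12\sqrt{\textstyle \int a^2\,d\mu_t \cdot \int\|b\|^2\,d\mu_t} \;=\; N(t)/2. \qquad (\star)
\]

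Setting $V(t):=Q(w(t))-Q(w^\star)=\tfrac12 u^\top\Sigma u$ with $u:=w-w^\star$, one computes $\dot V = g^\top \dot w = -\tfrac14 g^\top(M+NI)g$. Using $M\succeq 0$ and $\|g\|^2 = u^\top \Sigma^2 u \geq \lambdamin\,u^\top \Sigma u = 2\lambdamin V$, this becomes the PL-type inequality
\[
\dot V(t) \;\leq\; -\tfrac{N(t)\,\lambdamin}{2}\,V(t).
\]
It remains to bound $N(t)$ below past some $t_0$. Because $\mu_0 = \mu_0^1\otimes\mu_0^2$ with $\mu_0^2$ spherically symmetric, one has $\int b\,d\mu_0^2 = 0$ and thus $w(0)=0$; if in addition $w^\star=0$, uniqueness of the flow forces $w(t)\equiv 0$ and the claim is trivial. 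Otherwise $w^\star\neq 0$, and I would argue by contradiction that $\liminf_{t\to\infty}N(t)>0$: if some sequence $t_n\to\infty$ had $N(t_n)\to 0$, then $(\star)$ would force $w(t_n)\to 0$ and hence $V(t_n)\to Q(0)-Q(w^\star) = V(0)$; since $V$ is non-increasing with $V(t)\leq V(0)$, this would force $V\equiv V(0)$ and in particular $\dot V(0)=0$ --- contradicting the PL inequality at $t=0$, because $N(0)>0$ and $V(0)>0$. Consequently there exist $\eta>0$ and $t_0>0$ with $N(t)\geq 4\eta$ for all $t\geq t_0$, and Grönwall applied on $[t_0,t]$ yields the claimed exponential rate $2\eta\lambdamin$.

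The main obstacle is the lower bound $\liminf_{t\to\infty}N(t)>0$ when $w^\star\neq 0$. The contradiction above hinges on $(\star)$, itself a direct consequence of the balanced initialization $|a|=\|b\|$: without this conservation law, a vanishing $N(t)$ would not pin $w(t)$ to the origin and the argument would collapse. A secondary technical check is that $N(0)>0$, which holds whenever $\mu_0$ is nondegenerate, as required for the dynamics to be nontrivial.
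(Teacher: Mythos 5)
Your proof is correct and follows essentially the same route as the paper: the reduction to $w'(t)=-H(t)\nabla Q(w(t))$ with $H(t)=\tfrac14(M(t)+N(t)I)$, the balancedness bound $\|w(t)\|\le N(t)/2$ used to lower-bound $N(t)$ after some $t_0$, and strong convexity plus Gr\"onwall for the exponential rate. The only (cosmetic) difference is that you rule out $\liminf_{t\to\infty}N(t)=0$ by a sequential contradiction using $V(t_n)\to V(0)$ and monotonicity of $V$, whereas the paper runs an explicit $\varepsilon$--$\delta$ version of the same argument anchored at a fixed $t_0$ with $Q(w(t_0))<Q(0)$.
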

\begin{remark}
Note that as soon as $\mathbb{P}$ has spherical symmetry, the problem becomes strongly convex by Lemma~\ref{th:spherical-sym-id}. Note that although $F(\mu_t) = Q(w(t))$, $(w(t))_{t \geq 0}$ is \textbf{not} a gradient flow for the (strongly) convex objective $Q$ (which would immediately guarantee exponential convergence to the global minimum).
\end{remark}
\begin{wrapfigure}{r}{0.35\textwidth}
\vspace{-2.5em}
\centering
    \includegraphics[width=0.38\textwidth]{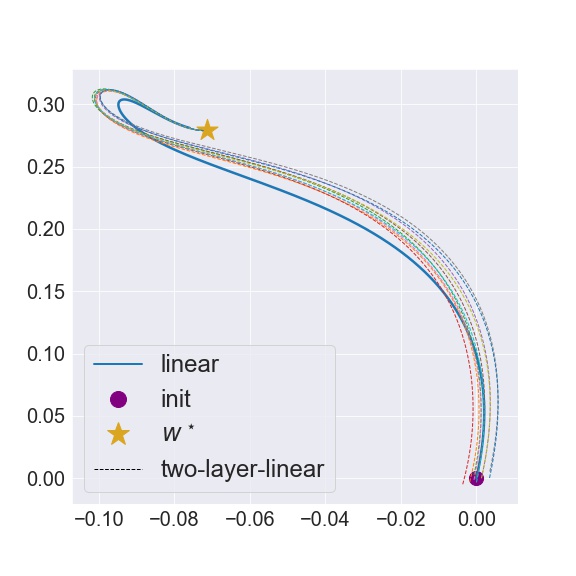}
    \vspace{-2.8em}
    \caption{
    %Two-layer linear network \textit{vs} pure linear model: GD path for two coordinates.
    GD path for two coordinates: two-layer linear network \textit{vs} pure linear model.
    }
\label{fig:lin-vs-2-layer-cv}
\end{wrapfigure}
\noindent
The proof, provided in Appendix~\ref{app:lin-exp-cv}, proceeds in two steps: first it is shown that $w^\prime(t) = - H(t) \nabla Q(w(t))$ for some positive definite matrix $H(t)$ whose smallest eigenvalue is always lower-bounded by a positive quantity, then we prove that this leads to exponential convergence.
Figure~\ref{fig:lin-vs-2-layer-cv} illustrates that the dynamics of GF on $F$ remain non-linear in that they do not reduce to GF on $Q$ (although the paths are close). To simulate GF on $F$ we use a large (but finite) number of neurons $m=1,024$ and a small (but positive) step-size $10^{-2}$ and simply proceed to do GD on the corresponding finite-dimensional objective (see comment in Section~\ref{sec:pb-setting} on relationship between the Wasserstein GF and finite-width GD).

\section{Learning the low-dimensional structure of the problem}\label{sec:low-dim}

Consider a linear sub-space $H$ of dimension $d_H < d$ (potentially much smaller than the ambient dimension), and assume $f^*$ has the following structure: $f^*(x) = f_H(p_H(x))$ where $p_H$ is the orthogonal projection onto $H$ (which we also write $x^H$ for simplicity, and we reserve sub-scripts for denoting entries of vectors) and $f_H:H \rightarrow \mathbb{R}$ is a given function. 

In this context it is natural to study whether the learned function shares the same structure as $f^*$. As observed in Figure~\ref{fig:perp-dependence} this is not the case in finite time, but it is reasonable however to think that the learned predictor $f(\mu_t; \cdot)$ shares the same structure as $f^*$ as $t \to \infty$, and we give numerical evidence in this direction. On the other hand, we prove rigorously that the structure of the problem allows to reduce the dynamics to a lower-dimensional PDE. In this section, we consider for simplicity that $\mu_0^1$ is the uniform distribution over $\{-1,+1\}$ and that $\mu_0^2$ is the uniform distribution over $\mathbb{S}^{d-1}$.

\paragraph{Comment on the assumptions for this section.}
The assumptions that $|a| = ||b||$ on the  support of $\mu_0$ is crucial here. This ensures that the Wasserstein GF~\eqref{eq:w-gf} is well-defined and that $\mu_t$ stays supported on the set $\{|a| = ||b||\}$ for any $t \geq 0$, a fact which is used in the proofs. The assumption that $\rho$ is the uniform distribution over the unit sphere bears some importance but could likely be replaced by other measures with spherical symmetry provided that the dynamics would still be well-defined and at the cost of more technical proofs.

\subsection{Symmetries and invariance}\label{sec:low-dim-gen}

The structure of $f^*$ implies that it is invariant by any $T \in \mathcal{O}(d)$ which preserves $H$, \ie, such that its restrictions to $H$ and $H^\perp$ are $T_{|H} = \text{id}_H$ and $T_{|H^\perp} \in \mathcal{O}(d_\perp)$, where $\mathcal{O}(d_\perp)$ is the orthogonal group of $H^\perp$ whose dimension is $d_\perp = d - d_H$. By Proposition~\ref{th:learning-inv}, such transformations also leave the predictor $f(\mu_t; \cdot)$ invariant for any $t \geq 0$ since $\rho$ is spherically symmetric. Lemma~\ref{th:inv-sub-orthogonal} below then ensures that $f(\mu_t; x)$ depends on the projection $x^\perp$ onto $H^\perp$ only through its norm, that is $f(\mu_t; x) = \Tilde{f}_t(x^H, ||x^\perp||)$ for some $\Tilde{f}_t : H \times \mathbb{R}_+ \rightarrow \mathbb{R}$.
\begin{lemma}[Invariance by a sub-group of $\mathcal{O}(d)$]\label{th:inv-sub-orthogonal}
Let $f:\mathbb{R}^d \rightarrow \mathbb{R}$ be invariant under any $T \in \mathcal{O}(d)$ such that $T_{|H} = \normalfont{\text{id}}_H$ and $T_{|H^\perp} \in \mathcal{O}(d_\perp)$. Then, there exists some $\Tilde{f}: H \times \mathbb{R}_+ \rightarrow \mathbb{R}$ such that for any $x \in \mathbb{R}^d$, $f(x) = \Tilde{f}(x^H, ||x^\perp||)$.
\end{lemma}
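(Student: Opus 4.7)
The plan is to exploit the transitivity of the action of $\mathcal{O}(d_\perp)$ on spheres of $H^\perp$ in order to show that $f(x)$ only depends on $x^H$ and $\|x^\perp\|$, and then to define $\tilde f$ from this observation.

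First I would write any $x \in \mathbb{R}^d$ uniquely as $x = x^H + x^\perp$ with $x^H \in H$ and $x^\perp \in H^\perp$, and reduce the statement to the following claim: for any $x, y \in \mathbb{R}^d$ with $x^H = y^H$ and $\|x^\perp\| = \|y^\perp\|$, one has $f(x) = f(y)$. Indeed, if this holds, then the map $\tilde f: H \times \mathbb{R}_+ \to \mathbb{R}$ defined by
\[
    \tilde f(h, r) := f(h + r e),
\]
for any fixed unit vector $e \in H^\perp$ (or just $\tilde f(h, 0) := f(h)$ when $r=0$), is well-defined and satisfies $f(x) = \tilde f(x^H, \|x^\perp\|)$ for every $x \in \mathbb{R}^d$. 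The case $d_\perp = 0$ is trivial, so I would assume $d_\perp \geq 1$.

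Next I would establish the claim. Given $x, y$ as above with $\|x^\perp\| = \|y^\perp\| =: r$, both $x^\perp$ and $y^\perp$ lie on the sphere of radius $r$ in $H^\perp \cong \mathbb{R}^{d_\perp}$. Since $\mathcal{O}(d_\perp)$ acts transitively on spheres of $\mathbb{R}^{d_\perp}$ (a standard fact, \eg, by extending a unit vector to an orthonormal basis), there exists $T_\perp \in \mathcal{O}(d_\perp)$ such that $T_\perp(x^\perp) = y^\perp$. Define $T \in \mathcal{O}(d)$ by $T_{|H} = \text{id}_H$ and $T_{|H^\perp} = T_\perp$; this is a valid orthogonal transformation of $\mathbb{R}^d$ because $H$ and $H^\perp$ are orthogonal and $T$ acts as an isometry on each summand. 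By construction,
\[
    T(x) = T(x^H) + T(x^\perp) = x^H + y^\perp = y^H + y^\perp = y.
\]
By the assumed invariance of $f$ under any such $T$, $f(x) = f(T(x)) = f(y)$, which proves the claim and hence the lemma.

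There is no real obstacle here: everything reduces to the transitivity of the orthogonal group on spheres, which is standard. The only minor care needed is to check that the piecewise definition of $T$ by block on $H \oplus H^\perp$ really yields an element of $\mathcal{O}(d)$ satisfying the hypothesis of the lemma, and to handle the degenerate cases $r = 0$ and $d_\perp = 0$ cleanly in the definition of $\tilde f$.
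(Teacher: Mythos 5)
Your proof is correct and follows essentially the same route as the paper: both define $\tilde f(h,r) = f(h + r e)$ for a fixed unit vector $e \in H^\perp$ and then use an orthogonal map that is the identity on $H$ and rotates $x^\perp/\|x^\perp\|$ onto $e$ within $H^\perp$ (i.e., the transitivity of $\mathcal{O}(d_\perp)$ on spheres). Your version just states the well-definedness step a bit more explicitly; no substantive difference.
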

\begin{proof}
Consider $\Tilde{f}:(x^H, r) \in H \times \mathbb{R}_+ \mapsto f(x^H + r e^\perp_1)$ where $e^\perp_1$ is the first vector of an orthonormal basis of $H^\perp$, and let $x \in \mathbb{R}^d$. If $x^\perp = 0$, the result is obvious. Otherwise, consider an orthogonal linear map $T_x$ such that ${T_x}_{|H} = \normalfont{\text{id}}_H$ and $T_x$ sends $x^\perp / ||x^\perp||$ on $e^{\perp}_{1}$. The invariance of $f$ under $T_x$ implies $f(x) = f(T_x(x)) = f(x^H + ||x^\perp|| e^\perp_1) = \Tilde{f}(x^H, ||x^\perp||)$.
\end{proof}

Figure~\ref{fig:perp-dependence} shows that the dependence in $||x^\perp||$ cannot be removed in finite time: $f(\mu_t; u_H +  r e^{\perp}_{1})$ does depend on the distance $r \in \mathbb{R}_+$ to $H$, but this dependence tends to vanish as $t \to \infty$. The plots of Figure~\ref{fig:perp-dependence} are obtained by discretizing the initial measure $\mu_{0,m} = \frac{1}{m} \sum_{j=1}^m \delta_{(a_j(0), b_j(0))}$ with $m=1,024$ atoms, and sampling $a_j(0) \sim \mathcal{U}(\{-1, +1\})$ and $b_j(0) \sim \mathcal{U}(\mathbb{S}^{d-1})$. We perform GD with a finite step-size $\eta=$ and a finite number $n=256$ of fresh \iid~samples from the data distribution per step with $f^*(x) = ||x^H||$, $d=20$ and $d_H=5$.

\begin{figure}[!htb]
\centering
    \subfloat[$f(\mu_t; u_H + r e^{\perp}_{1})$ \textit{vs} $r$]{{\includegraphics[width=0.3\linewidth]{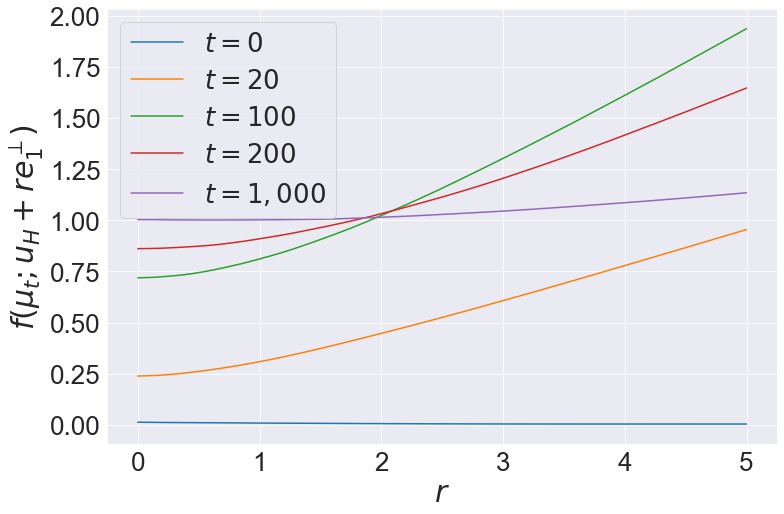}}}
    \quad
    \centering
    \subfloat[$f(\mu_t; u_H +  r e^{\perp}_{1})$ \textit{vs} $t$]{{\includegraphics[width=0.3\linewidth]{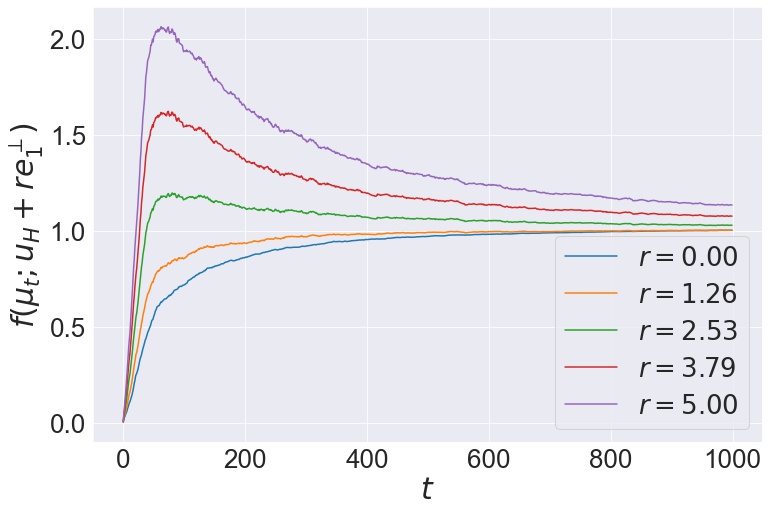} }}
    \caption{$f(\mu_t; u_H + r e^{\perp}_{1})$ \textit{vs} $r$ and $t$ for a random $u_H \in \mathbb{S}^{d_H-1}$ with $d=20$, $d_H=5$.}
\label{fig:perp-dependence}
\end{figure}
%In addition, recall that Proposition~\ref{th:learning-inv} also states that $\mu_t$ has similar symmetries which suggest that the dynamics of $\mu_t$ reduce to simpler dynamics on a measure over $d_H + 1$ variables, which is precisely what we show below.
\paragraph{Dynamics over the sphere $\mathbb{S}^{d-1}$.}
Using the positive $1$-homogeneity of ReLU, and with the assumptions on $\mu_0$, the dynamics on $\mu_t \in \mathcal{P}_2(\mathbb{R}^{d+1})$ can be reduced to dynamics on the space $\mathcal{M}_+(\mathbb{S}^{d-1})$ of non-negative measures over $\mathbb{S}^{d-1}$: only the direction of neurons matter and their norm only affects the total mass. From  this point of view, neurons with positive and negative output weights behave differently and have separate dynamics. Indeed, consider the pair of measures $(\nu_t^+, \nu_t^-) \in \mathcal{M}_+(\mathbb{S}^{d-1})^2$ characterized by the property that for any continuous test function $\varphi: \mathbb{S}^{d-1} \rightarrow \mathbb{R}$,
\begin{align}\label{eq:nu-pm}
    \int_u \varphi(u) \straightd \nu_t^\pm(u) = \int_{\pm a \geq 0, b} |a| \, ||b|| \varphi \left( \frac{b}{||b||} \right) \straightd \mu_t(a,b),
\end{align}
where we have used the superscript $^\pm$ to denote either or $\nu_t^+$ or $\nu_t^-$ and the right-hand side is changed accordingly (the integration domain) depending on the sign $+$ or $-$. Because ReLU is positively $1$-homogeneous, we have $f(\mu_t; x) = \int \sigma(u^\top x) \straightd(\nu_t^+ - \nu_t^-)(x)$. It is shown in Appendix~\ref{app:low-dim-nu} that $\nu_t^\pm$ satisfies, in the sense of distributions, the equation
\begin{align}\label{eq:nu_t}
    \partial_t \nu_t^{\pm} =  - \text{div} \left(\pm \Tilde{v}_t \nu_t^\pm \right) \pm 2 g_t \nu_t^\pm,
\end{align}
where, for any $u \in \mathbb{S}^{d-1}$,
\begin{equation}\label{eq:advec-reac-general}
    \begin{aligned}
        g_t(u) &= -\int_y \partial_2 \ell \Big(f^*(y), \,  f(\mu_t; y) \Big) \sigma(u^\top y) \straightd\rho(y), \\
        \Tilde{v}_t(u) &= - \int_y \partial_2 \ell \Big(f^*(y), \,  f(\mu_t; y) \Big) \sigma^\prime(u^\top y) \left[y - (u^\top y) u\right] \straightd\rho(y).
    \end{aligned}
\end{equation}
Equation~\eqref{eq:nu_t} can be interpreted as a Wasserstein-Fisher-Rao GF~\cite{gallouet2019unbalanced} on the sphere since $\Tilde{v}_t(u) = \text{proj}_{\{u\}^\perp}(\nabla g_t(u))$. 

\paragraph{Closed dynamics over $[0, \pi/2] \times \mathbb{S}^{d_H-1}$.}
The dynamics on the pair $(\nu_t^+, \nu_t^-)$ can be further reduced to dynamics over $[0, \pi/2] \times \mathbb{S}^{d_H-1}$. Indeed, by positive 1-homogeneity of $f(\mu_t; \cdot)$ we may restrict ourselves to inputs $u \in \mathbb{S}^{d-1}$, and $f(\mu_t; u)$ depends only on $u^H$ and $||u^\perp||$. However, because $||u^H||^2 + ||u^\perp||^2 = 1$, this dependence translates into a dependence on the direction $u^H / ||u^H||$ of the projection onto $H$ and the norm $||u^H||$. The former is an element of $\mathbb{S}^{d_H-1}$ while the latter is given by the angle $\theta$ between $u$ and $H$, that is $\theta := \arccos(u^\top u^H/||u^H||) = \arccos(||u^H||)$. This simplification leads to the following lemma:

\begin{lemma}\label{th:red-d_H}
Define the measures $\tau_t^+, \tau_t^-$ by $\tau_t^\pm = P_{\#} \nu_t^\pm \in \mathcal{M}_{+}([0, \pi/2] \times \mathbb{S}^{d_H-1})$ via $P: u \in \mathbb{S}^{d-1} \backslash H^\perp \mapsto (\arccos(||u_H||), u_H/||u_H||) \in [0, \pi/2] \times \mathbb{S}^{d_H-1}$. Then, the measures $\tau_t^+, \tau_t^-$ satisfy the equation
\begin{align}\label{eq:tau_t-general}
    \partial \tau_t^\pm &= -\text{\normalfont{div}} \left(\pm V_t \tau_t^\pm \right) \pm 2 G_t \tau_t^\pm,
\end{align}
where $G_t: [0, \pi/2] \times \mathbb{S}^{d_H-1} \rightarrow \mathbb{R}$, and $V_t: [0, \pi/2] \times \mathbb{S}^{d_H-1} \rightarrow \mathbb{R}^{d_H+1}$ are functions depending \textbf{only} on $(\tau_t^+, \tau_t^-)$, and furthermore, $f(\mu_t; \cdot)$ can be expressed solely using $\tau_t^+, \tau_t^-$ (exact formulas are provided in Appendix~\ref{app:low-dim-tau}).
\end{lemma}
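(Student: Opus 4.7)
The plan is to realize $P$ as the quotient map by the symmetry subgroup $G_H := \{T \in \mathcal{O}(d) : T|_H = \mathrm{id}_H\}$ and to push forward the reaction--advection equation \eqref{eq:nu_t} along this quotient. First I would invoke Proposition~\ref{th:learning-inv}: since $\rho$, $f^*$, and $\mu_0^2$ are all $G_H$-invariant (the last because $\mu_0^2$ is spherically symmetric), the flow $\mu_t$, and hence each measure $\nu_t^\pm$ defined in \eqref{eq:nu-pm}, is $G_H$-invariant for every $t \geq 0$. The $G_H$-orbits on $\mathbb{S}^{d-1} \setminus H^\perp$ coincide with the fibers of $P$, so $P$ realizes the quotient $(\mathbb{S}^{d-1} \setminus H^\perp)/G_H \cong [0, \pi/2) \times \mathbb{S}^{d_H-1}$. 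The set $H^\perp \cap \mathbb{S}^{d-1}$ has codimension $d_H$, hence zero uniform measure; because $\nu_0^\pm$ is absolutely continuous with respect to the uniform measure and the reaction--advection flow preserves this property, $\nu_t^\pm$ assigns zero mass to $H^\perp$ for every $t$, so $\tau_t^\pm = P_\# \nu_t^\pm$ loses no information.

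Next I would establish the representation of the predictor in terms of $\tau_t^\pm$. Writing any $u \in \mathbb{S}^{d-1} \setminus H^\perp$ as $u = \cos(\theta) v + \sin(\theta) w$ with $(\theta, v) = P(u) \in [0, \pi/2) \times \mathbb{S}^{d_H-1}$ and $w \in \mathbb{S}^{d_\perp-1} \subset H^\perp$, the $G_H$-invariance of $\nu_t^\pm$ yields a disintegration in which the conditional law on the fiber $P^{-1}(\theta, v)$ is the uniform probability measure on the $w$-sphere, and the base measure is $\tau_t^\pm$. Substituting this into $f(\mu_t; x) = \int \sigma(u^\top x)\, \straightd(\nu_t^+ - \nu_t^-)(u)$ and using rotational invariance in $H^\perp$ to integrate out $w$, the inner integral reduces to a function $\Phi(\theta,\, v^\top x^H,\, \|x^\perp\|)$, which furnishes an explicit formula for $f(\mu_t; \cdot)$ depending only on $\tau_t^\pm$ and on $(x^H, \|x^\perp\|)$.

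Next I would push forward \eqref{eq:nu_t}. The reaction coefficient $g_t$ in \eqref{eq:advec-reac-general} is $G_H$-invariant as a function of $u$, since it is built from the $G_H$-invariants $\rho$, $f^*$, and $f(\mu_t;\cdot)$; hence $g_t = G_t \circ P$ for a well-defined $G_t$ on $[0,\pi/2] \times \mathbb{S}^{d_H-1}$. For the velocity, $\tilde v_t$ is $G_H$-equivariant as a tangent field on $\mathbb{S}^{d-1}$, because $G_H$ acts on the sphere and on the integrand in \eqref{eq:advec-reac-general} compatibly. Testing \eqref{eq:nu_t} against $\varphi \circ P$ for smooth $\varphi$ on the quotient then only uses $\mathrm{d}P \cdot \tilde v_t$: the fiber-tangent components of $\tilde v_t$ are annihilated by $\mathrm{d}P$, and the transverse components descend, by $G_H$-equivariance, to a well-defined vector field $V_t$ on the quotient. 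A direct change of variables then produces exactly \eqref{eq:tau_t-general}, and since $G_t$, $V_t$ are functionals of $f(\mu_t;\cdot)$, which in turn is a functional of $\tau_t^\pm$ by the previous step, both coefficients depend only on $\tau_t^\pm$.

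The main obstacle I anticipate is producing concrete expressions for $V_t$ and $G_t$ in the $(\theta,v)$-coordinates: this requires working out $\mathrm{d}P$ explicitly, decomposing $\tilde v_t$ into its $\partial_\theta$-component and its $\mathbb{S}^{d_H-1}$-tangent component, and integrating out $w \in \mathbb{S}^{d_\perp-1}$ in \eqref{eq:advec-reac-general} against the uniform fiber measure. Handling the singular behavior of $P$ at $\theta = \pi/2$ is not serious, since $\tau_t^\pm$ charges $\{\theta = \pi/2\}$ with zero mass by the argument of the first step. The remaining spherical-coordinate bookkeeping that yields the explicit formulas for $G_t$ and $V_t$ is routine but tedious, and is best deferred to Appendix~\ref{app:low-dim-tau}.
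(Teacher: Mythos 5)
Your proposal follows essentially the same route as the paper's proof in Appendix~\ref{app:low-dim-tau}: invoke Proposition~\ref{th:learning-inv} to obtain invariance of $\nu_t^\pm$ under the subgroup of orthogonal maps fixing $H$ pointwise, disintegrate along $P$ with uniform conditionals on the $\mathbb{S}^{d_\perp-1}$-fibers to express $f(\mu_t;\cdot)$ through $(\tau_t^+,\tau_t^-)$, and push the advection--reaction equation forward by testing against $\varphi\circ P$, using the invariance of $g_t$ and the equivariance of $\tilde{v}_t$. The paper carries out the same steps in explicit coordinates (introducing, for each $u$, an orthogonal map $T^u$ adapted to $u^\perp$, and computing $\nabla_u(\varphi\circ P)$ and its inner product with $\tilde{v}_t$ by hand), which is precisely what produces the exact formulas for $G_t$ and $V_t$ that the lemma advertises; your quotient-map phrasing defers this bookkeeping but does not change the argument.

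The one step you assert rather than prove is that the advection--reaction flow preserves absolute continuity of $\nu_t^\pm$, which you use to conclude that $\nu_t^\pm$ charges $H^\perp$ with zero mass for all $t$. This is not immediate for ReLU: $\tilde{v}_t$ involves $\sigma'(u^\top y)$, and one would need to check that integration against $\rho$ yields a velocity field regular enough that the induced flow map preserves $\tilde{\omega}_d$-null sets. The paper sidesteps this issue: it first proves the representation of $\int\varphi\,\straightd\nu_t^\pm$ as an integral against $\tau_t^\pm$ for measures admitting a density \wrt~$\tilde{\omega}_d$ (via Lemma~\ref{th:inv-density}, Lemma~\ref{th:inv-sub-orthogonal} and the disintegration Theorem~\ref{th:disintegration}), and then extends to arbitrary invariant measures by a weak-convergence approximation (third step of Appendix~\ref{app:low-dim-tau}). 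Either fix --- a regularity argument for $\tilde{v}_t$, or the paper's approximation argument --- closes this gap, so it is a missing justification rather than a flaw in the strategy.
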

Abbe et.~al~\cite{abbe2022staircase} show a similar result with a lower-dimensional dynamics in the context of infinitely wide two-layer networks when the input data have i.i.d~coordinates distributed uniformly over $\{-1, +1\}$ (\ie, Rademacher variables), except that they do not have the added dimension due to the angle $\theta$ as we do thanks to their choice of input data distribution.

Lemma~\ref{th:red-d_H} above illustrates how the GF dynamics of infinitely wide two-layer networks adapts to the lower-dimensional structure of the problem: the learned predictor and the dynamics can described only in terms of the angle $\theta$ between the input neurons and $H$ and their projection on the unit sphere of $H$. 

%shows that when $f^*$ only depends on the orthogonal projection on $H$ the dynamics over the initial parameters $(a,b) \in \mathbb{R}^{1+d}$ reduces to dynamics over the angle $\theta$

\subsection{One dimensional reduction}\label{sec:1d-red}

Since the predictors we consider are positively homogeneous, one cannot hope to do better than learn a positively homogeneous function. A natural choice of such a target function to learn is the Euclidean norm. With the additional structure that the target only depends on the projection onto $H$, this leads to considering $f^*(x) = ||x^H||$ which has additional symmetries compared to the general case presented above: it is invariant by any linear map $T$ such that $T_{|H} \in \mathcal{O}(d_H)$ and $T_{|H^\perp} \in \mathcal{O}(d_\perp)$. By Proposition~\ref{th:learning-inv} those symmetries are shared by $\mu_t$ and $f(\mu_t; \cdot)$, and we show that in this case the dynamic reduces to a one-dimensional dynamic over the angle $\theta$ between input neurons and $H$. 

We prove a general disintegration result for the uniform measure on the sphere in the Appendix (see Lemma~\ref{th:disintegration}) which allows, along with some spherical harmonics analysis, to describe the reduced dynamics and characterize the objective that they optimize. This leads to the following result:

\begin{theorem}[1d dynamics over the angle $\theta$]\label{th:1d-theta}
Assume that $f^*(x) = ||x^H||$, and define the measures $(\tau_t^+, \tau_t^-) \in {\mathcal{M}_+([0, \pi/2])}^2$ from $(\nu_t^+, \nu_t^-)$ via $P: u \in \mathbb{S}^{d-1} \mapsto \arccos(||u_H||) \in [0, \pi/2]$: $\tau_t^\pm = P_{\#} \nu_t^\pm$. Then, the pair $(\tau_t^+, \tau_t^-)$ follows the Wasserstein-Fisher-Rao GF for the objective $A(\tau^+, \tau^-) := \mathbb{E} \left[\ell \left(f(\tau^+, \tau^-; x), f^*(x)\right) \right]$ over the space $\mathcal{M}_+([0, \pi/2]) \times \mathcal{M}_+([0, \pi/2])$, where $f(\tau^+, \tau^-; x)$ is the expression (with a slight overloading of notations) of $f(\mu;x)$ in function of $(\tau^+, \tau^-)$ (see Appendix~\ref{app:1d} for more details):
\begin{align}
    \straightd \tau_0^{\pm}(\theta) &= \frac{1}{B \left(\frac{d_H}{2}, \frac{d_\perp}{2} \right)} \cos(\theta)^{d_H-1} \sin(\theta)^{d_\perp-1} \straightd \theta, \nonumber \\
    \partial_t \tau_t^\pm &= -{\normalfont\text{div}} \left(\pm V_t \tau_t^\pm \right) \pm 2 G_t \tau_t^\pm, \label{eq:1d-WFR}
\end{align}
where $B$ is the Beta function, and
\begin{align*}
    G_t(\theta) &= - \int_y \partial_2 \ell \Big(f^*(y), \,  f(\mu_t; y) \Big) \sigma \left(\cos(\theta) y^H_1 + \sin(\theta)y^\perp_1 \right) \straightd \rho(y), \\  
    V_t(\theta) &= G_t'(\theta).
\end{align*}
Additionally, $f(\mu_t; \cdot)$, $G_t$, and $V_t$ \textbf{only depend} on the pair $(\tau_t^+, \tau_t^-)$, and for any $t \geq 0$, it holds that $F(\mu_t) = A(\tau_t^+, \tau_t^-)$.
\end{theorem}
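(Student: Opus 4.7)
The plan is to leverage the extra symmetries of $f^*(x)=\|x^H\|$ to reduce the sphere-level dynamics~\eqref{eq:nu_t} to a one-dimensional dynamics over $\theta$. Let $\mathcal{G}$ denote the subgroup of $\mathcal{O}(d)$ consisting of maps $T$ with $T_{|H}\in\mathcal{O}(d_H)$ and $T_{|H^\perp}\in\mathcal{O}(d_\perp)$. Then $f^*$ and $\rho$ are $\mathcal{G}$-invariant, so by Proposition~\ref{th:learning-inv} (applied coordinate-wise in the product structure), $\mu_t$ and hence $\nu_t^\pm$ are invariant under the induced action on $\mathbb{S}^{d-1}$. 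The orbits of this action are exactly the level sets of $P(u)=\arccos(\|u^H\|)$, so $\nu_t^\pm$ is completely determined by $\tau_t^\pm=P_\#\nu_t^\pm$.

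With this in hand, the proof breaks into four steps. First, I would derive the initial distribution: $\nu_0^\pm$ is a constant multiple of the uniform measure on $\mathbb{S}^{d-1}$ (from~\eqref{eq:nu-pm} using the product form of $\mu_0$ and $|a|=\|b\|$), and the push-forward of uniform measure on $\mathbb{S}^{d-1}$ under $P$ yields the stated Beta density; this follows from the disintegration of the uniform sphere measure along the fibers $\{u:\|u^H\|=\cos\theta\}\simeq \mathbb{S}^{d_H-1}\times\mathbb{S}^{d_\perp-1}$, with fiber volumes $\cos(\theta)^{d_H-1}\sin(\theta)^{d_\perp-1}$. Second, I would show $f(\mu_t;\cdot)$, $g_t$, and $\tilde v_t$ descend to functionals of $\tau_t^\pm$ alone: by $\mathcal{G}$-invariance, $f(\mu_t;x)$ is a function of $(\|x^H\|,\|x^\perp\|)$, and averaging the kernel $\sigma(u^\top x)$ over a $\mathcal{G}$-orbit of $u$ at angle $\theta$ reduces the integral defining $f(\mu_t;x)$ to one against $\tau_t^+-\tau_t^-$ on $[0,\pi/2]$; taking the orbit representative $u=\cos\theta\, e^H_1+\sin\theta\, e^\perp_1$ and specializing $y$ accordingly by a change of variables on $\mathbb{S}^{d-1}$ yields the stated formula for $G_t$. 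The identity $F(\mu_t)=A(\tau_t^+,\tau_t^-)$ is then a direct substitution.

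Third, I would push the PDE~\eqref{eq:nu_t} forward by $P$. The reaction term is immediate: $g_t$ is constant on $\mathcal{G}$-orbits, hence descends to $G_t(\theta)$, producing the $\pm 2G_t\tau_t^\pm$ term. For the transport term, I would decompose $\tilde v_t(u)$ at a point $u$ into its component along $\nabla\theta$ (which is tangent to $\mathbb{S}^{d-1}$ and orthogonal to the $\mathcal{G}$-orbit) and components tangent to the orbit. Since $\nu_t^\pm$ is $\mathcal{G}$-invariant, advection by orbit-tangential velocity fields preserves each measure and contributes nothing to $P_\#\nu_t^\pm$; only the $\nabla\theta$-component survives. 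Using $\tilde v_t=\mathrm{proj}_{\{u\}^\perp}(\nabla g_t(u))$, the chain rule, and $g_t=G_t\circ P$ gives the surviving coefficient $G_t'(\theta)$, i.e.\ $V_t=G_t'$, which matches~\eqref{eq:1d-WFR}. Finally, recognizing~\eqref{eq:1d-WFR} as the WFR gradient flow of $A$ is a routine check: the Fréchet derivative of $A$ with respect to $\tau^\pm$ equals $\mp G_t$, and the WFR GF with such a potential is precisely $\partial_t\tau^\pm=-\mathrm{div}(\pm G_t'\tau^\pm)\pm 2 G_t\tau^\pm$.

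The main obstacle is the rigorous push-forward of the transport term, specifically the claim that orbit-tangential components of $\tilde v_t$ disappear under $P_\#$. This is morally a co-area/disintegration argument combined with $\mathcal{G}$-invariance of $\nu_t^\pm$, but must be justified against test functions in the distributional sense since $\tau_t^\pm$ are merely non-negative measures. Subsidiary care is also needed at the boundary $\theta=0$ (where $u\in H\cap\mathbb{S}^{d-1}$ and the fiber collapses) and at $\theta=\pi/2$, to ensure that the WFR PDE on $[0,\pi/2]$ is understood with the correct no-flux conditions inherited from the sphere.
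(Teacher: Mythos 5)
Your plan is correct and follows essentially the same route as the paper: disintegration of the uniform measure on $\mathbb{S}^{d-1}$ for the initial Beta density, reduction of $g_t$, $\tilde v_t$ and $f(\mu_t;\cdot)$ to functions of $\theta$ via invariance under the subgroup preserving $H$ and $H^\perp$ (Proposition~\ref{th:learning-inv}), pushforward of the sphere dynamics by $P$ against test functions, and identification of $\mp G_t$ as the first variation of $A$ (the paper makes this last step explicit via the Funk--Hecke formula, which yields the symmetric kernel $\tilde\phi(\theta;\varphi)$). One remark: the step you single out as the main obstacle --- showing that orbit-tangential components of $\tilde v_t$ do not contribute --- requires no invariance or co-area argument at all, since testing the continuity equation against $\varphi\circ P$ produces $\int \langle \nabla(\varphi\circ P), \tilde v_t\rangle\, \straightd\nu_t^{\pm}$ and $\nabla(\varphi\circ P)$ is pointwise proportional to $\nabla P$ (a vector in $H$ minus its projection on $u$), so the tangential components are annihilated inside the inner product; this is exactly how the paper's proof in Appendix~\ref{app:1d-WFR-equation} disposes of them.
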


\begin{remark}

The result should still hold for general $\rho$ which are spherically symmetric as long as the Wasserstein GF~\eqref{eq:w-gf} is well-defined but the proof is more technical. In addition, this result shows that even with more structure than in Lemma~\ref{th:red-d_H}, the dynamics of infinitely wide two-layer networks are still able to adapt to this setting: these dynamics, as well as the learned predictor, can be \textit{fully characterized} solely by the one-dimensional dynamics over the angle $\theta$ between input neurons and $H$.
This is noteworthy since this angle determines the alignment of the neurons with $H$, and thus measures how much the representations learned by the network have adapted to the structure of the problem. Furthermore, as discussed below, this reduction with exact formulas enables efficient numerical simulation in one dimension.
\end{remark}

Daneshmand and Bach~\cite{daneshmand2022polynomial} prove the global convergence of a reduced one-dimensional dynamics in a context similar to ours but their original problem is two-dimensional and with a  choice of activation function that leads to specific algebraic properties.

\paragraph{Expression of $f(\mu_t; \cdot)$.}

Because of the symmetries of $f(\mu_t; \cdot)$, which result from that of $f^*$, $f(\mu_t; x)$ depends only on $||x^H||$ and $||x^\perp||$. What is more, since $f(\mu_t; \cdot)$ is positively $1$-homogeneous (because ReLU is) it actually holds that $f(\mu_t; x) = ||x|| \Tilde{f}_t(\varphi_x)$ where $\varphi_x = \arccos(||x^H|| / ||x||)$ is the angle between $x$ and $H$, and $\Tilde{f}_t(\varphi) := \int_{\theta} \Tilde{\phi}(\theta; \varphi) \straightd(\tau_t^+ - \tau_t^-)(\theta)$, $\Tilde{\phi}$ depending only on $\sigma$ and fixed probability measures (see Appendix~\ref{app:1d-WFR-GF} for an exact formula).

\paragraph{Learning the low-dimensional structure as $t \to \infty$.}
Although, as shown in Figure~\ref{fig:perp-dependence}, $f(\mu_t;\cdot)$ does not learn the low-dimensional structure in finite-time, it is reasonable to expect that as $t \to \infty$, the measures $\tau_t^\pm$ put mass only on $\theta=0$, indicating that the only part of the space that the predictor is concerned with for large $t$ is the sub-space $H$. Since we assume here that the target function $f^*$ is non-negative, the most natural limits for $\tau_t^+$ and $\tau_t^-$ are $\tau_t^+ \to \alpha \delta_{0}$ with $\alpha > 0$, and $\tau_t^- \rightarrow 0$ (in the sense that $\tau_t^-([0, \pi/2]) \rightarrow 0$) as $t \rightarrow \infty$, because then the \quoting{negative} output weights do not participate in the prediction in the large $t$ limit. 

The global convergence result of Chizat and Bach~\cite{chizat2018global, wojtowytsch2020convergence} still holds but is not quantitative and moreover does not guarantee that the limit is the one described above. We leave the proof of this result as an open problem, but we provide numerical evidence supporting this conjecture.
Indeed, we take advantage of the one-dimensional reduction from \autoref{th:1d-theta}, and numerically simulate the resulting dynamics by parameterizing $\tau_t^\pm$ via weight and position~\cite{chizat2022sparse} as $\mu_{m,t} = (1/m) \sum_{j=1}^m c_j^\pm(t) \delta_{\theta_j^\pm(t)}$, and simulating the corresponding dynamics for $c_j^\pm(t)$ and $\theta_j^\pm(t)$. 
The corresponding results are depicted in Figure~\ref{fig:dist-total-var} which are again obtained by discretizing the initial measures $\tau_0^+, \tau_0^-$ and performing GD with finite step-size (see more details in Appendix~\ref{app:numerical}).
Figures~\ref{fig:plus-dist} and~\ref{fig:minus-dist} show that the mass of $\tau_t^+$ tends to concentrate around $0$ while that of $\tau_t^-$ tends to concentrate around $\pi/2$, indicating that $\tau_t^+$ adapts to the part of the space relevant to learning $f^*$ while $\tau_t^-$ puts mass close to the orthogonal to that space. 

\paragraph{Total mass of particles at convergence.}
If $\tau_\infty^- = 0$ and $\tau_\infty^+ = \alpha \delta_{0}$ as described above, we have $f(\mu_\infty; x) = \alpha ||x|| \Tilde{\phi}(0; \varphi_x) = \alpha \frac{\Gamma(d_H/2)}{2 \sqrt{\pi} \Gamma((d_H+1)/2)} ||x|| \cos(\varphi_x) =  \frac{\alpha \Gamma(d_H/2)}{2 \sqrt{\pi} \Gamma((d_H+1)/2)} ||x^H||$. To recover exactly $f^*$, it must hold that $\alpha = \tau_\infty^+([0, \pi/2]) = \frac{2 \sqrt{\pi} \Gamma((d_H+1)/2)}{\Gamma(d_H/2)}$. 
Defining the normalized probability measure $\tilde{\tau}_t^\pm = \tau_t^\pm / \tau_t^\pm([0, \pi/2])$, we thus expect $\tilde{\tau}_t^+$ to grow close to $\delta_{0}$ and $\tilde{\tau}_t^-$ to $\delta_{\pi/2}$. In terms of total mass, we expect that $\tau_t^+([0,\pi/2])$ gets closer to $\alpha$ while $\tau_t^-([0,\pi/2])$ gets closer to $0$. 

The numerical behaviour depicted in Figure~\ref{fig:pos-mass-dist} seems to follow our intuitive description, at least until a critical time $t^*$ in the numerical simulation which corresponds to the first time $t$ where $\tau_t^+([0, \pi/2]) > \alpha$. While the total mass of $\tau_t^\pm$ (dashed lines) seems to approach its limit rapidly before $t^*$ it slowly  moves further away from it for $t \geq t^*$. On the other hand, while the angles only slowly change before $t^*$, they start converging fast towards the corresponding Dirac measures after $t^*$. It is unclear whether this slight difference in behaviour (around the critical time $t^*$) between what we intuitively expected and the numerical simulation is an artefact of the finite width and finite step size or if it actually corresponds to some phenomenon present in the limiting model. For more details concerning the numerical experiments, see Appendix~\ref{app:numerical}.
\begin{figure}[!htb]
\centering
    \subfloat[$\tau_t^+$ distributions]{\label{fig:plus-dist}{\includegraphics[width=0.3\linewidth]{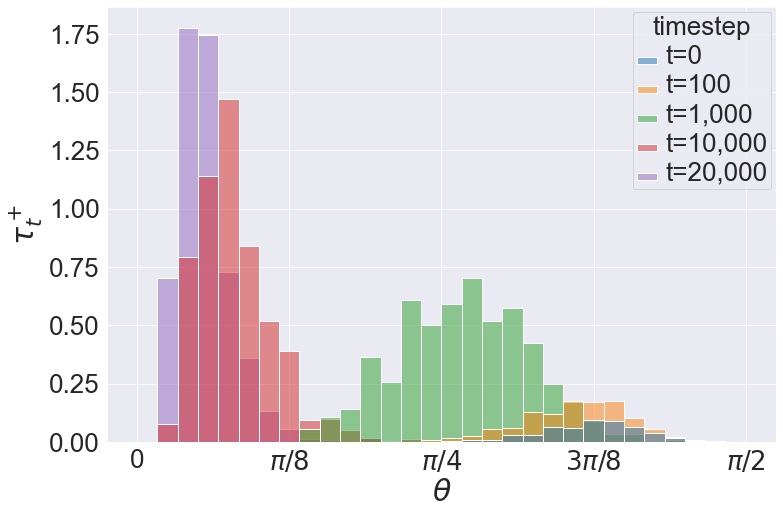}}}
    \quad
    \centering
    \subfloat[$\tau_t^-$ distributions]{\label{fig:minus-dist}{\includegraphics[width=0.3\linewidth]{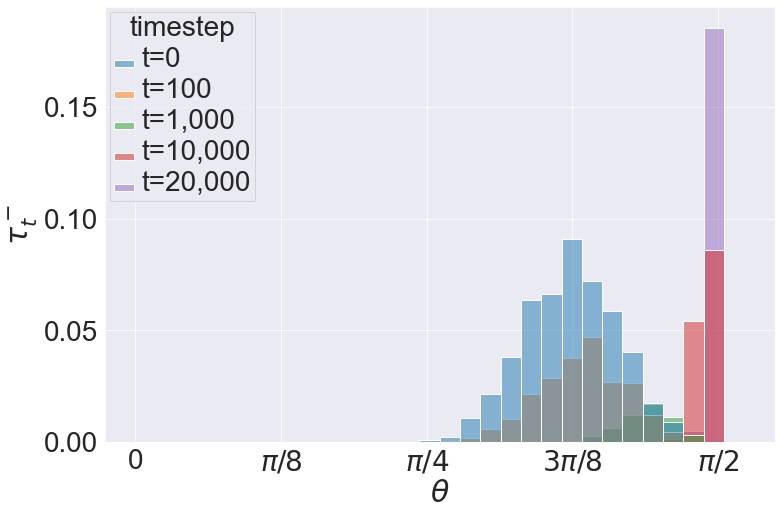} }}
    \quad
    \centering
    \subfloat[Position / mass distances ]{{\label{fig:pos-mass-dist} \includegraphics[width=0.3\linewidth]{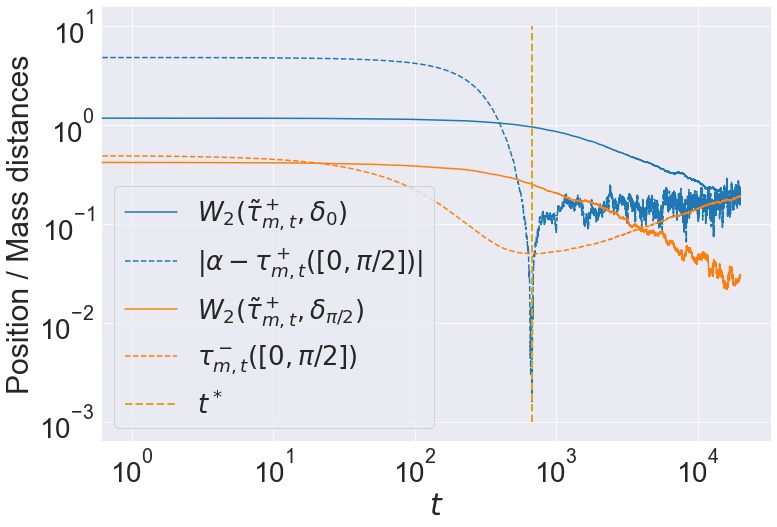}}}
    %\qquad
    \caption{Angle distributions $\tau_t^+ / \tau_t^-$ and position / mass distances with $m=1024$, $d=30$ and $d_H=5$. (a) (\textit{resp.}~(b)) $\tau_t^+$ (\textit{resp.}~$\tau_t^-$) as a histogram for different $t$.  $(c)$ distances (in log-log scales) of the mass and positions of positive (blue) / negative (orange) particles to the intuitively expected limits: the distance in position is the Wasserstein-2 distance of the normalized (probability) measures $\tilde{\tau}_t^\pm$ to the corresponding Dirac measures while the distance in mass is the absolute error to the expected mass as $t \to \infty$.}
\label{fig:dist-total-var}
\end{figure}
Note that there is \textit{a priori} not a unique global optimum: $\tau_\infty^+$ and $\tau_\infty^-$ (if they exist) can compensate on parts of the space $[0, \pi/2]$ and lead to the same optimal predictor for different choices of measures. Our numerical experiments suggest that the GF dynamics select a \quoting{simple} solution where $\tau_\infty^+$ is concentrated on $\{\theta=0\}$ and $\tau_\infty^-$ vanishes (puts $0$ mass everywhere), which is a form of implicit bias.

\section{Conclusion}\label{sec:conclusion}
We have explored the symmetries of infinitely wide two-layer ReLU networks and we have seen that: $(i)$ they adapt to the orthogonal symmetries of the problem, $(ii)$ they reduce to the dynamics of a linear network in the case of an odd target function and lead to exponential convergence, and $(iii)$ when the target function depends only on the orthogonal projection onto a lower-dimensional sub-space $H$, the dynamics can be reduced to a lower-dimensional PDE. In particular, when $f^*$ is the Euclidean norm, this PDE is over a one-dimensional space corresponding to the angle $\theta$ between the particles and $H$. We have presented numerical experiment indicating that the positive particles converge to the subspace $H$ in this case and leave the proof of this result as an open problem.
We also leave as an open question whether the results of Section~\ref{sec:sym} extend to deeper networks.

\section*{Acknowledgments}
Karl Hajjar received full support from
the Agence Nationale de la Recherche (ANR), reference ANR-19-CHIA-0021-01 “BiSCottE". The authors thank Christophe Giraud for useful discussions and comments.

\bibliography{biblio}

\begin{thebibliography}{34}
\providecommand{\natexlab}[1]{#1}
\providecommand{\url}[1]{\texttt{#1}}
\expandafter\ifx\csname urlstyle\endcsname\relax
  \providecommand{\doi}[1]{doi: #1}\else
  \providecommand{\doi}{doi: \begingroup \urlstyle{rm}\Url}\fi

\bibitem[Abbe et~al.(2021)Abbe, Boix-Adsera, Brennan, Bresler, and
  Nagaraj]{abbe2021staircase}
Emmanuel Abbe, Enric Boix-Adsera, Matthew~S. Brennan, Guy Bresler, and Dheeraj
  Nagaraj.
\newblock The staircase property: How hierarchical structure can guide deep
  learning.
\newblock \emph{Advances in Neural Information Processing Systems},
  34:\penalty0 26989--27002, 2021.

\bibitem[Abbe et~al.(2022)Abbe, Boix-Adsera, and
  Misiakiewicz]{abbe2022staircase}
Emmanuel Abbe, Enric Boix-Adsera, and Theodor Misiakiewicz.
\newblock The merged-staircase property: a necessary and nearly sufficient
  condition for sgd learning of sparse functions on two-layer neural networks.
\newblock \emph{arXiv preprint arXiv:2202.08658}, 2022.

\bibitem[Allen-Zhu et~al.(2019)Allen-Zhu, Li, and Liang]{allen2019learning}
Zeyuan Allen-Zhu, Yuanzhi Li, and Yingyu Liang.
\newblock Learning and generalization in overparameterized neural networks,
  going beyond two layers.
\newblock In \emph{Proceedings of the 33rd International Conference on Neural
  Information Processing Systems}, pages 6158--6169, 2019.

\bibitem[Atkinson and Han(2012)]{atkinsonSpherical}
Kendall Atkinson and Weimin Han.
\newblock \emph{Spherical Harmonics and Approximations on the Unit Sphere: An
  Introduction}, volume 2044.
\newblock Springer, 01 2012.
\newblock ISBN 978-3-642-25982-1.
\newblock \doi{10.1007/978-3-642-25983-8}.

\bibitem[Ba et~al.(2022)Ba, Erdogdu, Suzuki, Wang, Wu, and Yang]{ba2022high}
Jimmy Ba, Murat~A Erdogdu, Taiji Suzuki, Zhichao Wang, Denny Wu, and Greg Yang.
\newblock High-dimensional asymptotics of feature learning: How one gradient
  step improves the representation.
\newblock \emph{arXiv preprint arXiv:2205.01445}, 2022.

\bibitem[Bach(2017)]{bach2017breaking}
Francis Bach.
\newblock Breaking the curse of dimensionality with convex neural networks.
\newblock \emph{The Journal of Machine Learning Research}, 18\penalty0
  (1):\penalty0 629--681, 2017.

\bibitem[Bloem-Reddy and Teh(2020)]{bloem2020probabilisticSymmetries}
Benjamin Bloem-Reddy and Yee~Whye Teh.
\newblock Probabilistic symmetries and invariant neural networks.
\newblock \emph{J. Mach. Learn. Res.}, 21:\penalty0 90--1, 2020.

\bibitem[Cammarata et~al.(2020)Cammarata, Carter, Goh, Olah, Petrov, Schubert,
  Voss, Egan, and Lim]{cammarata2020thread}
Nick Cammarata, Shan Carter, Gabriel Goh, Chris Olah, Michael Petrov, Ludwig
  Schubert, Chelsea Voss, Ben Egan, and Swee~Kiat Lim.
\newblock Thread: Circuits.
\newblock \emph{Distill}, 2020.
\newblock \doi{10.23915/distill.00024}.
\newblock https://distill.pub/2020/circuits.

\bibitem[Chizat(2022)]{chizat2022sparse}
Lenaic Chizat.
\newblock Sparse optimization on measures with over-parameterized gradient
  descent.
\newblock \emph{Mathematical Programming}, 194\penalty0 (1):\penalty0 487--532,
  2022.

\bibitem[Chizat and Bach(2018)]{chizat2018global}
L{\'e}na{\"\i}c Chizat and Francis Bach.
\newblock On the global convergence of gradient descent for over-parameterized
  models using optimal transport.
\newblock In \emph{Proceedings of the 32nd International Conference on Neural
  Information Processing Systems}, pages 3040--3050, 2018.

\bibitem[Chizat and Bach(2020)]{chizat2020implicitBias}
Lenaic Chizat and Francis Bach.
\newblock Implicit bias of gradient descent for wide two-layer neural networks
  trained with the logistic loss.
\newblock In \emph{Conference on Learning Theory}, pages 1305--1338. PMLR,
  2020.

\bibitem[Chizat et~al.(2019)Chizat, Oyallon, and Bach]{chizat2019lazyTraining}
L\'{e}na\"{\i}c Chizat, Edouard Oyallon, and Francis Bach.
\newblock On lazy training in differentiable programming.
\newblock In H.~Wallach, H.~Larochelle, A.~Beygelzimer, F.~d\textquotesingle
  Alch\'{e}-Buc, E.~Fox, and R.~Garnett, editors, \emph{Advances in Neural
  Information Processing Systems}, volume~32. Curran Associates, Inc., 2019.
\newblock URL
  \url{https://proceedings.neurips.cc/paper/2019/file/ae614c557843b1df326cb29c57225459-Paper.pdf}.

\bibitem[Cloninger and Klock(2021)]{cloninger2021deep}
Alexander Cloninger and Timo Klock.
\newblock A deep network construction that adapts to intrinsic dimensionality
  beyond the domain.
\newblock \emph{Neural Networks}, 141:\penalty0 404--419, 2021.

\bibitem[Damian et~al.(2022)Damian, Lee, and Soltanolkotabi]{damian2022neural}
Alexandru Damian, Jason Lee, and Mahdi Soltanolkotabi.
\newblock Neural networks can learn representations with gradient descent.
\newblock In \emph{Conference on Learning Theory}, pages 5413--5452. PMLR,
  2022.

\bibitem[Daneshmand and Bach(2022)]{daneshmand2022polynomial}
Hadi Daneshmand and Francis Bach.
\newblock Polynomial-time sparse measure recovery.
\newblock \emph{arXiv preprint arXiv:2204.07879}, 2022.

\bibitem[E et~al.(2020)E, Ma, and Wu]{e2020continuousML}
Weinan E, Chao Ma, and Lei Wu.
\newblock Machine learning from a continuous viewpoint, i.
\newblock \emph{Science China Mathematics}, 63\penalty0 (11):\penalty0
  2233--2266, sep 2020.
\newblock \doi{10.1007/s11425-020-1773-8}.
\newblock URL \url{https://doi.org/10.1007%2Fs11425-020-1773-8}.

\bibitem[Gallou{\"e}t et~al.(2019)Gallou{\"e}t, Laborde, and
  Monsaingeon]{gallouet2019unbalanced}
Thomas Gallou{\"e}t, Maxime Laborde, and Leonard Monsaingeon.
\newblock An unbalanced optimal transport splitting scheme for general
  advection-reaction-diffusion problems.
\newblock \emph{ESAIM: Control, Optimisation and Calculus of Variations},
  25:\penalty0 8, 2019.

\bibitem[Ganev and Walters(2021)]{ganev2021qr}
Iordan Ganev and Robin Walters.
\newblock The qr decomposition for radial neural networks.
\newblock \emph{arXiv preprint arXiv:2107.02550}, 2021.

\bibitem[G{\l}uch and Urbanke(2021)]{gluch2021noether}
Grzegorz G{\l}uch and R{\"u}diger Urbanke.
\newblock Noether: The more things change, the more stay the same.
\newblock \emph{arXiv preprint arXiv:2104.05508}, 2021.

\bibitem[Goodfellow et~al.(2016)Goodfellow, Bengio, and
  Courville]{Goodfellow-et-al-2016}
Ian Goodfellow, Yoshua Bengio, and Aaron Courville.
\newblock \emph{Deep Learning}.
\newblock MIT Press, 2016.
\newblock \url{http://www.deeplearningbook.org}.

\bibitem[Jacot et~al.(2018)Jacot, Gabriel, and Hongler]{jacot2018ntk}
Arthur Jacot, Franck Gabriel, and Cl{\'{e}}ment Hongler.
\newblock Neural tangent kernel: Convergence and generalization in neural
  networks.
\newblock \emph{CoRR}, abs/1806.07572, 2018.
\newblock URL \url{http://arxiv.org/abs/1806.07572}.

\bibitem[Ji and Telgarsky(2018)]{ji2018gradient}
Ziwei Ji and Matus Telgarsky.
\newblock Gradient descent aligns the layers of deep linear networks.
\newblock \emph{arXiv preprint arXiv:1810.02032}, 2018.

\bibitem[Mei et~al.(2018)Mei, Montanari, and Nguyen]{mei2018mean}
Song Mei, Andrea Montanari, and Phan-Minh Nguyen.
\newblock A mean field view of the landscape of two-layer neural networks.
\newblock \emph{Proceedings of the National Academy of Sciences}, 115\penalty0
  (33):\penalty0 E7665--E7671, 2018.

\bibitem[Mousavi-Hosseini et~al.(2022)Mousavi-Hosseini, Park, Girotti,
  Mitliagkas, and Erdogdu]{mousavi2022neural}
Alireza Mousavi-Hosseini, Sejun Park, Manuela Girotti, Ioannis Mitliagkas, and
  Murat~A Erdogdu.
\newblock Neural networks efficiently learn low-dimensional representations
  with sgd.
\newblock \emph{arXiv preprint arXiv:2209.14863}, 2022.

\bibitem[Nguyen and Pham(2020)]{nguyen2020rigorous}
Phan{-}Minh Nguyen and Huy~Tuan Pham.
\newblock A rigorous framework for the mean field limit of multilayer neural
  networks.
\newblock \emph{CoRR}, abs/2001.11443, 2020.
\newblock URL \url{https://arxiv.org/abs/2001.11443}.

\bibitem[Paccolat et~al.(2021)Paccolat, Petrini, Geiger, Tyloo, and
  Wyart]{Paccolat_2021}
Jonas Paccolat, Leonardo Petrini, Mario Geiger, Kevin Tyloo, and Matthieu
  Wyart.
\newblock Geometric compression of invariant manifolds in neural networks.
\newblock \emph{Journal of Statistical Mechanics: Theory and Experiment},
  2021\penalty0 (4):\penalty0 044001, apr 2021.
\newblock \doi{10.1088/1742-5468/abf1f3}.
\newblock URL \url{https://dx.doi.org/10.1088/1742-5468/abf1f3}.

\bibitem[Rotskoff and Vanden-Eijnden(2018)]{rotskoff2018parameters}
Grant Rotskoff and Eric Vanden-Eijnden.
\newblock Parameters as interacting particles: long time convergence and
  asymptotic error scaling of neural networks.
\newblock In S.~Bengio, H.~Wallach, H.~Larochelle, K.~Grauman, N.~Cesa-Bianchi,
  and R.~Garnett, editors, \emph{Advances in Neural Information Processing
  Systems}, volume~31, 2018.

\bibitem[Santambrogio(2015)]{santambrogio2015optimal}
Filippo Santambrogio.
\newblock Optimal transport for applied mathematicians.
\newblock \emph{Birk{\"a}user, NY}, 55\penalty0 (58-63):\penalty0 94, 2015.

\bibitem[Santambrogio(2017)]{santambrogio2017euclidean}
Filippo Santambrogio.
\newblock $\{$Euclidean, metric, and Wasserstein$\}$ gradient flows: an
  overview.
\newblock \emph{Bulletin of Mathematical Sciences}, 7\penalty0 (1):\penalty0
  87--154, 2017.

\bibitem[Sirignano and Spiliopoulos(2020)]{sirignano2020mean}
Justin Sirignano and Konstantinos Spiliopoulos.
\newblock Mean field analysis of neural networks: A law of large numbers.
\newblock \emph{SIAM Journal on Applied Mathematics}, 80\penalty0 (2):\penalty0
  725--752, 2020.

\bibitem[Wojtowytsch(2020)]{wojtowytsch2020convergence}
Stephan Wojtowytsch.
\newblock On the convergence of gradient descent training for two-layer
  relu-networks in the mean field regime.
\newblock \emph{arXiv preprint arXiv:2005.13530}, 2020.

\bibitem[Yang and Hu(2021)]{yang2020featureLearning}
Greg Yang and Edward~J. Hu.
\newblock Tensor programs iv: Feature learning in infinite-width neural
  networks.
\newblock In Marina Meila and Tong Zhang, editors, \emph{Proceedings of the
  38th International Conference on Machine Learning}, volume 139 of
  \emph{Proceedings of Machine Learning Research}, pages 11727--11737. PMLR,
  18--24 Jul 2021.
\newblock URL \url{https://proceedings.mlr.press/v139/yang21c.html}.

\bibitem[Yehudai and Shamir(2019)]{yehudai2019power}
Gilad Yehudai and Ohad Shamir.
\newblock On the power and limitations of random features for understanding
  neural networks.
\newblock \emph{Advances in Neural Information Processing Systems}, 32, 2019.

\bibitem[Zeiler and Fergus(2014)]{zeiler2014visualizing}
Matthew~D Zeiler and Rob Fergus.
\newblock Visualizing and understanding convolutional networks.
\newblock In \emph{European conference on computer vision}, pages 818--833.
  Springer, 2014.

\end{thebibliography}

\appendix

\section*{Appendix}\label{app:appendix}

\section{Additional notations and preliminary results}

\subsection{Notations for the appendix}\label{app:notations}

We introduce in this section additional notation that we use throughout the Appendix.

\paragraph{Residual:} we call $R_t(y) := - \partial_2 \ell(f^*(y), f(\mu_t; y))$, the \quoting{residual}, which is equal to the difference $f^*(y) - f(\mu_t; y)$ when $\ell$ is the squared loss.

\paragraph{Identity matrix:} we denote by $I_p$ the identity matrix in $\mathbb{R}^{p \times p}$ for any $p \in \mathbb{N}$.

\paragraph{Indicator functions:} we denote by $\mathbf{1}_{A}$ the indicator of a set $A$, that is $\mathbf{1}_{A}(z) = 1 \iff z \in A$, and $\mathbf{1}_{A}(z) = 0$ otherwise.

\paragraph{Total variation:} for any measure $\nu$, we denote by $|\nu|$ its total variation, which should cause no confusion with the absolute value given the context.

\paragraph{Beta / Gamma function and distribution:} for $\alpha, \beta > 0$, we denote by $B(\alpha, \beta)$ the Beta function equal to $\Gamma(\alpha) \Gamma(\beta) / \Gamma(\alpha + \beta)$ where $\Gamma$ is the Gamma function, and by $\text{Beta}(\alpha, \beta)$ the beta law with density equal to $u^{\alpha-1} u^{\beta-1} / B(\alpha, \beta)$ on $[0,1]$.

\paragraph{Gaussian / spherical measures:} we call $\rho_p$ the standard Gaussian measure in $\mathbb{R}^p$ (corresponding to $\mathcal{N}(0, I_p)$) for any $p \in \mathbb{N}$. 

Whenever $\tau \in \mathcal{M}_+(\Omega)$ has finite and non-zero total variation, we denote by $\Tilde{\tau} \in \mathcal{P}_2(\Omega)$ its normalized counterpart (which is a probability measure), that is $\Tilde{\tau} = \tau / \tau(\Omega) = \tau  / |\tau|$. 

For any $p \in \mathbb{N}$, we call $\omega_p$ the Lebesgue (spherical) measure over the unit sphere  $\mathbb{S}^{p-1}$ of $\mathbb{R}^p$, that is the measure such that $\Tilde{\omega}_p$ is the \textit{uniform} measure on $\mathbb{S}^{p-1}$. We then denote by $|\mathbb{S}^{p-1}|$ the surface area of $\mathbb{S}^{p-1}$, that is $|\mathbb{S}^{p-1}| := |\omega_p| = \omega_p(\mathbb{S}^{p-1}) = 2\pi^{p/2} / \Gamma(p/2)$.

\paragraph{Smooth functions:} we denote by $\mathcal{C}(\Omega)$ (\resp~$\mathcal{C}^1_c(\Omega)$) the set of continuous (\resp~continuously differentiable and compactly supported) functions from a set $\Omega$ to $\mathbb{R}$. 

\subsection{General results on invariance for measures and functions}\label{app:prelim-gen}

In this section, we list a number of lemmas related to symmetries of measures and functions which will prove helpful in the proofs presented in the Appendix.

\begin{lemma}[Invariance under invertible maps]\label{th:inv-invertible}
Let $\mu$ be a measure invariant under some measurable and invertible map $T$. Then, assuming $T^{-1}$ is also measurable, one has that $\mu$ is also invariant under $T^{-1}$.
\end{lemma}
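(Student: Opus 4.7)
The plan is to reformulate invariance in terms of pushforwards and then exploit the invertibility of $T$ in a single line. Recall that by Definition~\ref{def:meas-inv}, the hypothesis $T_{\#}\mu = \mu$ is equivalent to the pointwise identity $\mu(T^{-1}(A)) = \mu(A)$ for every measurable $A$, and the goal is to show $(T^{-1})_{\#}\mu = \mu$, \ie, $\mu((T^{-1})^{-1}(A)) = \mu(A)$, which by invertibility means $\mu(T(A)) = \mu(A)$.

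First I would record why the quantity $\mu(T(A))$ even makes sense: since $T$ is a bijection, $T(A) = (T^{-1})^{-1}(A)$, and by the standing assumption that $T^{-1}$ is measurable, this set is measurable whenever $A$ is. This is the one place the hypothesis on $T^{-1}$ is used, and if it were dropped the statement would fail for a silly reason (the equation would not even be well-posed). Then I would apply the invariance of $\mu$ under $T$ to the measurable set $B := T(A)$, giving
\[
\mu(T(A)) \,=\, \mu(B) \,=\, \mu(T^{-1}(B)) \,=\, \mu(T^{-1}(T(A))) \,=\, \mu(A),
\]
where the last step uses that $T$ is a bijection so $T^{-1} \circ T = \mathrm{id}$.

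Alternatively, and perhaps cleaner, I would argue via the integral characterization from Definition~\ref{def:meas-inv}: for any bounded measurable $\psi$, the function $\varphi := \psi \circ T^{-1}$ is measurable (again by measurability of $T^{-1}$), hence applying invariance of $\mu$ under $T$ to $\varphi$ gives
\[
\int \psi \, \mathrm{d}\mu \;=\; \int \psi \circ T^{-1} \circ T \, \mathrm{d}\mu \;=\; \int \varphi \circ T \, \mathrm{d}\mu \;=\; \int \varphi \, \mathrm{d}\mu \;=\; \int \psi \circ T^{-1} \, \mathrm{d}\mu,
\]
which is exactly $(T^{-1})_{\#}\mu = \mu$. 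There is no real obstacle here; the only subtlety worth flagging is that bi-measurability of $T$ is necessary, and this is precisely what the hypothesis supplies. I would keep the proof to a few lines, likely preferring the set-theoretic version for its transparency.
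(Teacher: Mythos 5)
Your set-theoretic argument is exactly the paper's proof: note that $T(A) = (T^{-1})^{-1}(A)$ is measurable by the measurability of $T^{-1}$, then apply the invariance $\mu(B) = \mu(T^{-1}(B))$ to $B = T(A)$. Both your version and the integral variant are correct, and the first coincides with the paper's argument essentially line for line.
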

\begin{remark}
A similar result holds for a function $f$ invariant under an invertible map.
\end{remark}
\begin{proof}
Because $\mu$ is invariant under $T$, we have for any measurable set $A$, $\mu(A) = \mu(T^{-1}(A))$. Since $T^{-1}$ is assumed to be measurable, for any measurable set $A$, $T(A)$ is also measurable ($T(A) = (T^{-1})^{-1}(A)$) and thus $\mu(T(A)) = \mu(T^{-1}(T(A)) = \mu(A)$ which shows $\mu$ is invariant under $T^{-1}$.
\end{proof}

% \begin{lemma}[Invariance by a sub-group of $\mathcal{O}(d)$]\label{th:inv-sub-orthogonal}
% Let $\mathcal{O}_H$ be the sub-group of $\mathcal{O}(d)$ composed of all the orthogonal linear maps $T$ such that their restrictions to $H$ and $H^\perp$ are $T_{|H} = \normalfont{\text{id}}_H$ and $T_{|H^\perp} \in \mathcal{O}(d_\perp)$. Then, if $f:\mathbb{R}^d \rightarrow \mathbb{R}$ is invariant under any $T \in \mathcal{O}_H$, there exists some $\Tilde{f}: H \times \mathbb{R}_+ \rightarrow \mathbb{R}$ such that for any $x \in \mathbb{R}^d$, $f(x) = \Tilde{f}(x^H, ||x^\perp||)$.
% \end{lemma}
% \begin{proof}
% Consider $\Tilde{f}:(x^H, r) \in H \times \mathbb{R}_+ \mapsto f(x^H + r e^\perp_1)$ where $e^\perp_1$ is the first vector of the canonical orthonormal basis of $H^\perp$. Now let $x \in \mathbb{R}^d$. If $x^\perp = 0$, then $x=x^H$ and $f(x) = \Tilde{f}(x^H, 0)$. If $x^\perp \neq 0$, consider an orthogonal linear map $T_x \in \mathcal{O}(d)$ such that $T_{|H} = \normalfont{\text{id}}_H$ and $T_{|H^\perp}$ sends some orthonormal basis $(b_1, \ldots, b_{d_\perp})$ of $H^\perp$ with $b_1 = x^\perp / ||x^\perp||$ on the canonical orthonormal basis $(e^\perp_1, \ldots, e^\perp_{d_\perp})$ of $H^\perp$. Because $f$ is invariant under $T_x$ by assumption, one has that $f(x) = f(T_x(x)) = f(x^H + ||x^\perp|| e^\perp_1) = \Tilde{f}(x^H, ||x^\perp||)$.
% \end{proof}

\begin{lemma}[Invariance of the density]\label{th:inv-density}
Let $\nu$ be a measure with density $p$ \wrt~some measure $\mu$, and assume both $\nu$ and $\mu$ are $\sigma$-finite and invariant under some measurable and invertible map $T$, whose inverse $T^{-1}$ is also measurable. Then $p$ is also invariant under $T$ $\mu$-almost everywhere, \ie, $p(T(x)) = p(x)$ for $\mu$-almost every $x$.
\end{lemma}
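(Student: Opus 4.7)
The plan is to leverage the uniqueness of Radon--Nikodym densities (ensured by $\sigma$-finiteness) together with the change-of-variables formula under the pushforward. Set $q := p \circ T^{-1}$, which is measurable since $T^{-1}$ is measurable. I will show that $q$ is also a density of $\nu$ with respect to $\mu$, and hence equals $p$ up to a $\mu$-null set.

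First, fix an arbitrary measurable set $A$. By the defining property of $p$ and the invariance of $\nu$ under $T$ (using Lemma~\ref{th:inv-invertible} to invoke invariance under $T^{-1}$),
\begin{align*}
    \nu(A) \;=\; \nu(T^{-1}(A)) \;=\; \int_{T^{-1}(A)} p(x) \, \straightd\mu(x).
\end{align*}
Applying the standard change of variables $\int \mathbf{1}_{T^{-1}(A)}(x)\, p(x)\, \straightd\mu(x) = \int \mathbf{1}_{A}(T(x))\, p(x)\, \straightd\mu(x)$ and then using $T_{\#}\mu = \mu$ to rewrite this as an integral against $\mu$ of the composition with $T^{-1}$,
\begin{align*}
    \nu(A) \;=\; \int_A p(T^{-1}(x))\, \straightd\mu(x) \;=\; \int_A q(x)\, \straightd\mu(x).
\end{align*}
Thus $q\cdot\mu = \nu = p\cdot\mu$. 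Since $\mu$ and $\nu$ are $\sigma$-finite, the Radon--Nikodym density is unique up to $\mu$-null sets, so $p = q = p\circ T^{-1}$ $\mu$-almost everywhere.

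Finally, let $N := \{x : p(x) \neq p(T^{-1}(x))\}$, a $\mu$-null set. Then $T(N) = (T^{-1})^{-1}(N)$ is measurable, and by invariance of $\mu$ under $T^{-1}$ (again Lemma~\ref{th:inv-invertible}) we have $\mu(T(N)) = \mu(N) = 0$. For any $x \notin T(N)$, writing $x = T(y)$ with $y = T^{-1}(x) \notin N$ gives $p(y) = p(T^{-1}(y))$ and hence $p(T(y)) = p(y)$; equivalently, $p(T(x)) = p(x)$ for every $x \notin N \cup T(N)$, a $\mu$-null set. This yields the claim.

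The only delicate point is the bookkeeping of null sets when passing from $p = p\circ T^{-1}$ $\mu$-a.e.\ to $p\circ T = p$ $\mu$-a.e., which is handled cleanly by the invariance of $\mu$ under $T^{-1}$; the $\sigma$-finiteness hypothesis is essential for the uniqueness-of-density step and cannot be dropped.
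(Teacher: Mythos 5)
Your overall strategy is essentially the paper's: both arguments boil down to showing that the transported density is again a density of $\nu$ with respect to $\mu$ and invoking uniqueness (the paper tests against an arbitrary measurable $\varphi$ and obtains $\int \varphi\,(p\circ T)\,\straightd\mu=\int \varphi\,p\,\straightd\mu$ directly, which is the same uniqueness argument with $p\circ T$ in place of your $q=p\circ T^{-1}$). Your main computation, $\nu(A)=\int_A p\circ T^{-1}\,\straightd\mu$, and the appeal to Radon--Nikodym uniqueness under $\sigma$-finiteness are correct.

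However, the final transfer from ``$p=p\circ T^{-1}$ $\mu$-a.e.'' to ``$p\circ T=p$ $\mu$-a.e.'' is wrong as written. With $N:=\{x: p(x)\neq p(T^{-1}(x))\}$, the identity $p(T(x))=p(x)$ is obtained by evaluating the relation $p(z)=p(T^{-1}(z))$ at the point $z=T(x)$, which requires $T(x)\notin N$, i.e.\ $x\notin T^{-1}(N)$ --- not $x\notin N\cup T(N)$. For $x\notin N$ you only get $p(x)=p(T^{-1}(x))$, and for $x\notin T(N)$ you only get $p(T^{-1}(x))=p(T^{-2}(x))$; neither reaches $p(T(x))$, so the step ``hence $p(T(y))=p(y)$'' does not follow from ``$p(y)=p(T^{-1}(y))$''. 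The fix is immediate and actually simpler than what you wrote: the correct exceptional set $T^{-1}(N)$ is $\mu$-null because $\mu(T^{-1}(N))=(T_{\#}\mu)(N)=\mu(N)=0$ by the invariance of $\mu$ under $T$ itself (no appeal to invariance under $T^{-1}$ is needed here). Alternatively, you could avoid the transfer altogether by showing directly that $p\circ T$ is a density, as the paper effectively does.
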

\begin{proof}
For any measurable $\varphi$ (\wrt~$\mu$, and thus \wrt~$\nu$ as well), $\varphi \circ T^{-1}$ is also measurable, and we have, on the one hand
\begin{align*}
    \int \varphi \circ T^{-1} \straightd \nu = \int \left(\varphi \circ T^{-1} \right) p \, \straightd \mu = \int \varphi \left(p \circ T \right) \straightd \mu,
\end{align*}
and on the other hand
\begin{align*}
    \int \varphi \circ T^{-1} \straightd \nu = \int \varphi \straightd \nu = \int \varphi \, p \, \straightd \mu,
\end{align*}
which shows that $\int \varphi \left(p \circ T \right) \straightd \mu = \int \varphi \, p \, \straightd \mu$, and thus that $p \circ T = p$ $\mu$-almost everywhere.
\end{proof}

\begin{lemma}[Projected variance with spherical symmetry]\label{th:spherical-sym-id}
Let $\zeta$ be a spherically symmetric measure on $\mathbb{R}^p$ (\ie, such that for any orthogonal linear map $T \in \mathcal{O}(p)$, $T_{\#} \zeta = \zeta$),   with finite second moment. Then we have the following matrix identity:
\begin{align*}
    \int_{z} z z^\top \straightd \zeta(z) = v_{\zeta} I_{p}, \qquad
    v_{\zeta} := \int_{z} (z_1)^2 \straightd \zeta(z) = \frac{1}{p} \int_z ||z||^2 \straightd \zeta(z).
\end{align*}
\end{lemma}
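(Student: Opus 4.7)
The plan is to introduce the matrix $M := \int z z^\top \straightd\zeta(z) \in \mathbb{R}^{p\times p}$ (well-defined entry-wise thanks to the finite second moment assumption) and exploit the invariance $T_\#\zeta = \zeta$ to force $M$ to commute with every orthogonal transformation. Specifically, applying the change of variables $z \mapsto T z$ inside the defining integral of $M$, together with $T_\#\zeta = \zeta$, yields $M = \int (Tz)(Tz)^\top \straightd\zeta(z) = T M T^\top$ for every $T \in \mathcal{O}(p)$, \ie~$TM = MT$ since $T^\top = T^{-1}$.

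The next step will be to conclude from this commutation relation that $M$ is a scalar multiple of $I_p$. Rather than invoking Schur's lemma, I will give a direct elementary argument: first, choosing $T$ to be the reflection that negates coordinate $i$ and fixes the others, the relation $M = TMT^\top$ forces each off-diagonal entry $M_{ij}$ ($j \neq i$) to equal $-M_{ij}$, hence to vanish; second, choosing $T$ to be the permutation of coordinates that swaps $i$ and $j$ forces $M_{ii} = M_{jj}$. Consequently $M = c\, I_p$ for some scalar $c \in \mathbb{R}$.

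Finally, $c$ can be read off in two equivalent ways: extracting the $(1,1)$ entry of $M$ gives $c = \int (z_1)^2 \straightd\zeta(z) = v_\zeta$, and taking the trace gives $pc = \mathrm{tr}(M) = \int ||z||^2 \straightd\zeta(z)$, so that $c = (1/p) \int ||z||^2 \straightd\zeta(z)$. The argument is essentially routine and I do not expect any serious obstacle; the only care required is to ensure the change-of-variable step is valid (this uses $T \in \mathcal{O}(p)$ so that $T^\top = T^{-1}$ and $T$ is a measurable, measure-preserving bijection) and that all the integrals involved are finite, which is guaranteed by the finite second moment hypothesis.
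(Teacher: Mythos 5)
Your proof is correct and follows essentially the same route as the paper's: the relation $M = TMT^\top$ instantiated at coordinate reflections kills the off-diagonal entries and at coordinate swaps equates the diagonal ones, which is exactly the argument given in the paper (stated there directly on the entries $\int z_i z_j \,\straightd\zeta$ rather than via the commutation relation). The identification of the constant via the $(1,1)$ entry and the trace is also standard and matches the statement.
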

\begin{proof}
The $(i,j)$-th entry of the matrix on the left-hand-side is $\int_{z} z_i z_j \straightd \zeta(z)$, and it is readily seen that the terms outside the diagonal are $0$. Indeed, let $(i,j) \in [1,p]^2$ with $i \neq j$, and consider the orthogonal map $T_j: z \in \mathbb{R}^p \mapsto (z_1, \ldots, z_{j-1}, -z_j, z_{j+1}, \ldots, z_p)^\top$. The spherical symmetry of $\rho$ implies that it is invariant under $T_j$, which yields $\int_z z_i z_j \straightd \rho(z) = - \int_z z_i z_j \straightd \rho(z)$, thereby showing that the latter is $0$. To see that the diagonal terms are all equal, it suffices to consider the orthogonal map $S_{i}$ which swaps the 1st and $i$-th coordinates of a vector $z$. The invariance of $\rho$ under $S_i$ yields $\int_z (z_1)^2 \straightd \rho(z) = \int_z (z_i)^2 \straightd \rho(z)$, which concludes the proof.
\end{proof}

\subsection[]{A disintegration result on the unit sphere $\mathbb{S}^{d-1}$}\label{app:prelim-disint}

Consider a $u \in \mathbb{S}^{d-1}$. $u$ is determined by: $(i)$ its angle $\theta := \arccos(||u^H||) \in [0, \pi/2]$ with $H$ (\ie, its angle with its projection $u^H$ onto $H$), $(ii)$ the direction $z^H = u^H / ||u^H|| \in \mathbb{S}^{d_H-1}$ of its projection $u^H$ onto $H$, and finally $(iii)$ the direction $z^\perp = u^\perp / ||u^\perp|| \in \mathbb{S}^{d_\perp-1}$ of its projection $u^\perp$ onto $H^\perp$. Since $||u^H||^2 + ||u^\perp||^2 = 1$, the angle $\theta$ gives both the norms of the projections onto $H$ and $H^\perp$: $||u^H|| = \cos(\theta)$ and $||u^\perp|| = \sin(\theta)$.

When $z$ ranges over the unit sphere $\mathbb{S}^{d-1}$, the angle $\theta$ and the directions $z^H, z^\perp$ range over $[0,\pi/2]$, $\mathbb{S}^{d_H-1}$, and $\mathbb{S}^{d_\perp - 1}$ respectively. We wish to understand what measures we obtain on these three sets when $z$ is distributed on the sphere according to the Lebesgue measure $\omega_d$. We show below below that after the change of coordinates described above (from $u \in \mathbb{S}^{d-1}$ to $(\theta, z^H, z^\perp) \in [0,\pi/2] \times \mathbb{S}^{d_H-1} \times \mathbb{S}^{d_\perp - 1}$), the corresponding measures over $\mathbb{S}^{d_H-1}$ and $\mathbb{S}^{d_\perp-1}$ are uniform measures and the measure over $\theta$ is given by a push-forward of a Beta distribution as defined below:
\begin{definition}[Distribution $\gamma$ of the angle $\theta$]\label{def:gamma}
We define the measure $\gamma$ on $[0, \pi/2]$ with the following density \wrt~the Lebesgue measure on $[0, \pi/2]$:
\begin{align*}
    \straightd \gamma(\theta) := \cos(\theta)^{d_H-1} \sin(\theta)^{d_\perp-1} \straightd \theta.
\end{align*}
\end{definition}
\begin{remark}
$\gamma$ is in fact simply given by $(\arccos \circ \sqrt{\cdot})_{\#} \text{Beta}(d_H/2, d_\perp/2)$. Note that the total variation of gamma is $|\gamma| = \gamma([0, \pi/2]) = \frac{1}{2} B(\frac{d_H}{2}, \frac{d_\perp}{2})$, and the corresponding normalized (probability) measure is $\straightd \tildeGamma(\theta) = \straightd \gamma(\theta) / |\gamma| = \frac{2}{B(\frac{d_H}{2}, \frac{d_\perp}{2})} \cos(\theta)^{d_H-1} \sin(\theta)^{d_\perp-1} \straightd \theta$.
\end{remark}
\noindent
We now state the disintegration theorem and give its proof:
\begin{theorem}[Disintegration of the Lebesgue measure on the sphere]\label{th:disintegration}
Let $\omega_d$ denote the Lebesgue measure on the sphere measure on the sphere of $\mathbb{R}^d$, and let $\gamma$ be the measure of Definition~\ref{def:gamma}. Then, one has
\begin{align*}
    \omega_d = \Phi_{\#} (\omega_{d_H} \otimes \omega_{d_\perp} \otimes \gamma)
\end{align*}
where
\begin{align*}
    \Phi :&\, [0, \pi/2] \times \mathbb{S}^{d_H-1} \times \mathbb{S}^{d_\perp-1}  \rightarrow \, \, \mathbb{S}^{d-1} \\
    &\, (\theta, \qquad z_H, \qquad z_\perp) \qquad \,  \mapsto \, \, \cos(\theta) z_H + \sin(\theta) z_\perp.
\end{align*}
\end{theorem}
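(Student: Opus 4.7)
My plan is to prove the identity by testing against arbitrary continuous functions on $\mathbb{S}^{d-1}$ and showing that both sides agree when multiplied by a radial factor and integrated over $\mathbb{R}^d$. The natural vehicle is the Gaussian / polar decomposition, exploiting the product structure $\mathbb{R}^d = H \oplus H^\perp$. Pick any $f \in \mathcal{C}(\mathbb{S}^{d-1})$ and any integrable $g:\mathbb{R}_+ \to \mathbb{R}_+$ (think $g(r) = e^{-r^2/2}$ as a concrete choice). The goal is to show
\begin{equation*}
\int_{\mathbb{R}^d} f\!\left(\tfrac{x}{\|x\|}\right) g(\|x\|)\,\straightd x = \left(\int_0^\infty g(r) r^{d-1}\straightd r\right) \cdot \int f\circ \Phi \,\straightd(\omega_{d_H}\otimes \omega_{d_\perp}\otimes \gamma),
\end{equation*}
after which a comparison with the standard polar decomposition $\straightd x = r^{d-1}\,\straightd r\,\straightd \omega_d$ on $\mathbb{R}^d$ and density of radial test functions will give the claim.

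First, I would split $x = x^H + x^\perp$ and use Fubini's theorem together with polar coordinates separately in $H \cong \mathbb{R}^{d_H}$ and $H^\perp \cong \mathbb{R}^{d_\perp}$: writing $r_H = \|x^H\|$, $r_\perp = \|x^\perp\|$, $z_H = x^H/r_H$, $z_\perp = x^\perp/r_\perp$, the Lebesgue measure on $\mathbb{R}^d$ becomes
\begin{equation*}
\straightd x = r_H^{d_H-1} r_\perp^{d_\perp - 1}\, \straightd r_H \,\straightd r_\perp \, \straightd \omega_{d_H}(z_H)\,\straightd \omega_{d_\perp}(z_\perp).
\end{equation*}
Then I would make the change of variables $(r_H, r_\perp) \mapsto (r, \theta) \in \mathbb{R}_+\times [0,\pi/2]$ given by $r_H = r\cos\theta$, $r_\perp = r\sin\theta$, whose Jacobian determinant is $r$. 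This substitution is the heart of the argument and produces exactly the correct angular density: the volume element reorganizes as
\begin{equation*}
r^{d_H + d_\perp - 2}\cos^{d_H-1}\theta\,\sin^{d_\perp-1}\theta \cdot r\,\straightd r\,\straightd \theta\,\straightd \omega_{d_H}\,\straightd \omega_{d_\perp} = r^{d-1}\straightd r \cdot \straightd \gamma(\theta)\,\straightd \omega_{d_H}\,\straightd \omega_{d_\perp},
\end{equation*}
using $d_H + d_\perp = d$ and the definition of $\gamma$.

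Second, I would observe that under this parametrization, $\|x\| = r$ and
\begin{equation*}
\tfrac{x}{\|x\|} = \cos\theta\,z_H + \sin\theta\,z_\perp = \Phi(\theta, z_H, z_\perp),
\end{equation*}
so the integrand $f(x/\|x\|) g(\|x\|)$ becomes $(f\circ\Phi)(\theta, z_H, z_\perp) \cdot g(r)$. Applying Fubini once more separates the radial and angular integrals, yielding the displayed identity above. Comparing with the standard polar decomposition $\int_{\mathbb{R}^d} f(x/\|x\|) g(\|x\|)\,\straightd x = \int_0^\infty g(r)r^{d-1}\straightd r \cdot \int_{\mathbb{S}^{d-1}} f\,\straightd \omega_d$, and choosing any $g$ with $\int_0^\infty g(r) r^{d-1}\,\straightd r \neq 0$ (so that the radial factor cancels), we get $\int_{\mathbb{S}^{d-1}} f\,\straightd\omega_d = \int f\circ\Phi \,\straightd(\omega_{d_H}\otimes\omega_{d_\perp}\otimes\gamma)$ for every $f \in \mathcal{C}(\mathbb{S}^{d-1})$, which is equivalent to the asserted pushforward identity.

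The only mildly technical step is the change of variables $(r_H,r_\perp)\mapsto(r,\theta)$, which is a routine Jacobian computation; the $\omega_d$-measure zero set $\{u : u^H = 0 \text{ or } u^\perp = 0\}$ (where $\Phi$'s inverse is ill-defined) is harmless for both sides. As a sanity check on total masses, $|\omega_{d_H}|\cdot|\omega_{d_\perp}|\cdot|\gamma| = \frac{2\pi^{d_H/2}}{\Gamma(d_H/2)}\cdot\frac{2\pi^{d_\perp/2}}{\Gamma(d_\perp/2)}\cdot\frac{\Gamma(d_H/2)\Gamma(d_\perp/2)}{2\Gamma(d/2)} = \frac{2\pi^{d/2}}{\Gamma(d/2)} = |\mathbb{S}^{d-1}|$, which is consistent with the identity.
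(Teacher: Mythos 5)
Your proof is correct and follows essentially the same route as the paper's: both lift the identity to $\mathbb{R}^d = H \oplus H^\perp$ via a radial weight (the paper fixes $g(r)=e^{-r^2/2}$ through the Gaussian representation $\tilde{\omega}_d = \Pi_{\#}\rho_d$, while you keep $g$ general), apply polar coordinates in each factor, and conclude with the change of variables $(r_H, r_\perp) \mapsto (r\cos\theta, r\sin\theta)$ whose Jacobian produces the density of $\gamma$.
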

\begin{proof}
Denoting $\Tilde{\omega}_d$ the uniform measure on the sphere, $|\mathbb{S}^{d-1}| := \frac{2 \pi^{d/2}}{\Gamma(d/2)}$ the surface are of the sphere in dimension $d$, and $\rho_p$ the standard Gaussian distribution in $\mathbb{R}^p$ for any $p$. Using the well-known fact that $\Tilde{\omega}_d = \Pi_{\#} \rho_d$ with $\Pi: x \in \mathbb{R}^d \backslash \{0\} \mapsto x/||x|| \in \mathbb{S}^{d-1}$, we have, for any measurable test function $\varphi : \mathbb{S}^{d-1} \rightarrow \mathbb{R}$,
\begin{align*}
    \int \varphi \straightd\omega_d &= |\mathbb{S}^{d-1}| \int \varphi d\Tilde{\omega}_d \\
    &= |\mathbb{S}^{d-1}| \int_x \varphi \left( \frac{x}{||x||} \right) \straightd\rho_d(x) \\
    &= |\mathbb{S}^{d-1}| \int_{x_H, x_\perp} \varphi \left( \frac{x_H + x_\perp}{||x_H + x_\perp||} \right) \straightd\rho_{d_H}(x_H) \straightd\rho_{d_\perp}(x_\perp) \\
    &= C_d \int \varphi \left( \frac{r_H z_H + r_\perp z_\perp}{||r_H z_H + r_\perp z_\perp||} \right)  r_H^{d_H-1} e^{-r_H^2/2} r_\perp^{d_\perp-1} e^{-r_\perp^2/2} \straightd r_H \straightd r_\perp \straightd\omega_{d_H}(z_H) \straightd\omega_{d_\perp}(z_\perp) \\
    &= C_d \int_{z_H, z_\perp} \int_{r_H, r_\perp} \varphi \left( \frac{r_H z_H + r_\perp z_\perp}{\sqrt{r_H^2 + r_\perp^2}} \right)  r_H^{d_H-1} r_\perp^{d_\perp-1} e^{-(r_H^2 + r_\perp^2)/2}  \straightd r_H \straightd r_\perp \straightd\omega_{d_H}(z_H) \straightd\omega_{d_\perp}(z_\perp),
\end{align*}
with 
\begin{align*}
    C_d := \frac{|\mathbb{S}^{d-1}|}{(2 \pi)^{d_H/2} (2 \pi)^{d_\perp/2}} = \frac{|\mathbb{S}^{d-1}|}{(2 \pi)^{d/2}} = \frac{2 \pi^{d/2}}{2^{d/2} \pi^{d/2} \Gamma(d/2)} = \frac{1}{2^{(d-2)/2} \Gamma(d/2)}.
\end{align*}
Doing the polar change of variables $(r_H, r_\perp) \in \mathbb{R}_{+}^2 \rightarrow (R, \theta) \in \mathbb{R}_{+} \times [0, \pi/2]$, we get:
\begin{align*}
    \int \varphi \straightd\omega_d &= C'_d \int_{z_H, z_\perp} \int_{ \theta} \varphi \left( \cos(\theta) z_H + \sin(\theta) z_\perp \right)  \cos(\theta)^{d_H-1} \sin(\theta)^{d_\perp-1} d\theta \straightd\omega_{d_H}(z_H) \straightd\omega_{d_\perp}(z_\perp)
\end{align*}
where 
\begin{align*}
    C'_d :=& C_d \int_{0}^{+\infty} R^{d-2} e^{-R^2/2} R dR \\
    =& C_d \int_{0}^{+\infty} R^{d-1} e^{-R^2/2} dR \\
    =& C_d  \times 2^{(d-2)/2} \Gamma(d/2) \\
    =& 1.
\end{align*}
which concludes the proof.
\end{proof}
\begin{remark}
A similar disintegration result holds for the uniform measure $\tilde{\omega}_d$ on the sphere. The corresponding measures which are then pushed-forward by the same $\Phi$ are the normalized counterparts of the measures in the theorem above: $\tilde{\omega}_d = \Phi_{\#} (\tilde{\omega}_{d_H} \otimes \tilde{\omega}_{d_\perp} \otimes \tilde{\gamma})$. This readily comes from noting that a simple calculation yields $|\omega_d| = |\omega_{d_H}| \, |\omega_{d_\perp}|\, |\gamma|$.
\end{remark}

\section{Gradient flows on the space of probability measures}\label{app:measures-gf}

\subsection{First variation of a functional over measures}\label{app:first-variation}

Given a functional $F: \mathcal{P}_2(\mathbb{R}^p) \to \mathbb{R}$, its \textit{first variation} or \textit{Fréchet derivative} at $\mu \in \mathcal{P}_2(\mathbb{R}^p)$ is defined as a measurable function, denoted $\frac{\delta F }{\delta \mu}(\mu) : \mathbb{R}^p \to \mathbb{R}$, such that, for any $\nu \in \mathcal{P}_2(\mathbb{R}^p)$ for which $\mu + t \nu \in \mathcal{P}_2(\mathbb{R}^p)$ in a neighborhood (in $t$) of $t=0$, 
\begin{align*}
    \frac{d }{dt} F(\mu + t \nu) \Big|_{t=0} = \int_{z} \frac{\delta F }{\delta \mu}(\mu)[z] \straightd \nu (z).
\end{align*}
See Santambrogio~\cite[Definition 7.12]{santambrogio2015optimal}, or~\cite[p.29]{santambrogio2017euclidean} for more details on the first variation.

In the case of the functional defined in Equation~\eqref{eq:objective} corresponding to the population loss objective, using the differentiability of the loss $\ell$ \wrt~its second argument, one readily has that
\begin{align*}
    F^\prime_{\mu}(c) :=  \frac{\delta F }{\delta \mu}(\mu)[c] = \int_{x} \partial_2 \ell(f^*(x), f(\mu;x)) \phi(c; x) \straightd \rho(x)
\end{align*}
since
\begin{align*}
    \frac{d }{dt} \ell \Big(f^*(x), f(\mu;x) + t f(\nu; x) \Big) = \partial_2 \ell \Big(f^*(x), f(\mu;x) + t f(\nu; x) \Big) \int_{c} \phi(c; x) \straightd \nu(c).
\end{align*}

\subsection[]{Wasserstein gradient flows in the space $\mathcal{P}_2(\mathbb{R}^{d+1})$}\label{app:w-gf}

A Wasserstein gradient flow for the objective $F$ defined in Equation~\eqref{eq:objective} is a path $(\mu_t)_{t \geq 0}$ in the space of probability measures $\mathcal{P}_2(\mathbb{R}^{d+1})$ which satisfies the continuity equation with a vector field $v_t$ which is equal to the opposite of the gradient of the first variation of the functional $F$. This means that we have, in the sense of distributions,
\begin{align*}
    \partial_t \mu_t = - \text{div} \left(- \nabla \left(\frac{\delta F}{\delta \mu} \right) \mu_t \right).
\end{align*}
That a pair $((\mu_t)_{t \geq 0}, v_t)$ consisting of a path in $\mathcal{P}_2(\mathbb{R}^p)$ and a (time-dependent) vector field in $\mathbb{R}^p$ satisfies the continuity equation $\partial_t \mu_t = - \text{div} (v_t \mu_t)$ in the sense of the distributions simply means that for any test function $\varphi \in \mathcal{C}^1_c(\mathbb{R}^{p})$,
\begin{align*}
    \partial_t \int \varphi \, \straightd \mu_t = \int v_t^\top \nabla \varphi \, \straightd \mu_t,
\end{align*}
where $\partial_t$ stands for the time derivative $\frac{d}{dt}$. Similarly, when we say that the advection-reaction equation $\partial_t \mu_t = - \text{div} \left(v_t \mu_t \right) + g_t \mu_t$ is satisfied for some function $g_t: \mathbb{R}^{p} \to \mathbb{R}$, we mean that it is in the sense of distributions: for any test function $\varphi \in \mathcal{C}^1_c(\mathbb{R}^{p})$,
\begin{align*}
    \partial_t \int \varphi \, \straightd \mu_t = \int (v_t^\top \nabla \varphi + g_t) \, \straightd \mu_t .
\end{align*}
An alternative description of the Wasserstein gradient flow of the objective $F$ is to consider a flow $X_{\bullet}(\cdot)$ in $\mathbb{R}_+ \times \mathbb{R}^{d+1}$ such that, for any $c \in \mathbb{R}^{d+1}$, 
\begin{align*}
    X_0(c) &= c \\
    \frac{d}{dt} X_t(c) &= - \nabla \left(\frac{\delta F}{\delta \mu} \right) \left(X_t(c) \right)
\end{align*}
and to define $\mu_t = (X_t)_{\#} \mu_0$.

For more details on Wasserstein gradient flows in the space of probability measures see Santambrogio~\cite[Section 5.3]{santambrogio2015optimal}, and~\cite[Section 4]{santambrogio2017euclidean}, and for more details on the equivalence between the continuity equation and the flow-based representation of the solution see Santambrogio~\cite[Theorem 4.4]{santambrogio2015optimal}.

\section{Proofs of the symmetry results of Section~\ref{sec:sym}}\label{app:sym}

There are two main ideas behind the proof. Call $\Tilde{T}: (a, b) \in \mathbb{R} \times \mathbb{R}^d \mapsto (\pm a, T(b))$ (depending on whether $f^*$ is invariant or anti-invariant under $T$) and consider the following two facts:

\paragraph{Structure of $\phi((a,b);x)$.}
Since $T$ is orthogonal, so is $\Tilde{T}$, and the structure of $\phi((a,b);x) = a\sigma(b^\top x)$ is such that $\phi(\Tilde{T}(a,b); x) = \pm \phi((a,b); T^{-1}(x))$ because $T$ is orthogonal (its adjoint is thus its inverse).

\paragraph{Conjugate gradients.}
Computing the gradient of a function whose input has been transformed by $\Tilde{T}^{-1}$ is the same as the conjugate action of $\Tilde{T}$ on the gradient: $\nabla (\varphi \circ \Tilde{T}^{-1}) = \Tilde{T} \circ (\nabla \varphi) \circ \Tilde{T}^{-1}$ (this is due to the fact that the adjoint of $\Tilde{T}^{-1}$ is $\tildeT$ because $\Tilde{T}$ is orthogonal). Note that we similarly get $\nabla (\varphi \circ \Tilde{T}) = \Tilde{T}^{-1} \circ (\nabla \varphi) \circ \Tilde{T}$.

\subsection{Preliminaries}\label{app:sym-prelim}

We present here arguments that are present in both the proofs of Proposition~\ref{th:learning-inv} and~\ref{th:learning-anti-inv}. Let $T$ be a linear orthogonal map such that $f^*(T(x)) = \pm f^*(x)$, where the $\pm$ is because we deal with both cases at the same time since the logic is the same. Let $t \geq 0$, and define $\nu_t := \tildeT^{-1}_\# \mu_t$. We aim to show that $(\nu_t)_{t \geq 0}$ is also a Wasserstein gradient flow for the same objective as $(\mu_t)_{t \geq 0}$.

\paragraph{Prediction function.}
Let $x \in \mathbb{R}^d$. We have, using the fact that $T$ is orthogonal (and thus that $\innerprod{T(x)}{y} = \innerprod{x}{T^{-1}(y)}$),
\begin{align*}
    f(\nu_t; x) &= \int_{a,b} a \sigma(b^\top x) \straightd \nu_t(a,b) \\
    &= \int_{a,b} \pm a \sigma(T^{-1}(b)^\top x) \straightd \mu_t(a,b) \\
    &= \pm \int_{a,b} a \sigma(b^\top T(x)) \straightd \mu_t(a,b) \\
    &= \pm f(\mu_t; T(x)).
\end{align*}

\paragraph{Time derivative.}
Let $\varphi \in \mathcal{C}^1_c(\mathbb{R}^d)$. Because $\mu_t$ satisfies the continuity Equation~\eqref{eq:w-gf} in the sense of distributions, and using the remark above on conjugate gradients as well as the orthogonality of $\tildeT$, we have:
\begin{align*}
    \partial_t \int \varphi \straightd \nu_t &= \partial_t \int \varphi \circ \tildeT^{-1} \straightd \mu_t \\
    &= \int \innerprod{\nabla(\varphi \circ \tildeT^{-1})}{v_t}  \straightd \mu_t \\
    &= \int \innerprod{\tildeT \circ \nabla \varphi \circ \tildeT^{-1}}{v_t} \straightd \mu_t \\
    &= \int \innerprod{\nabla \varphi \circ \tildeT^{-1}}{\tildeT^{-1} \circ v_t} \straightd \mu_t \\
    &= \int \innerprod{\nabla \varphi}{\tildeT^{-1} \circ v_t \circ \tildeT} \straightd \nu_t.
\end{align*}

\paragraph{Conjugate velocity field.} The equality above actually shows that $\nu_t$ satisfies the continuity equation with the conjugate velocity field $\tildeT^{-1} \circ v_t \circ \tildeT$ instead of $v_t$. We show below that the former is closely related to the latter (and is in fact equal to $-\nabla F^{\prime}_{\nu_t}$ with sufficient assumptions on $\partial_2 \ell$, which is the step proven in Appendices~\ref{app:proof-learning-inv} and~\ref{app:proof-learning-anti-inv}). Indeed, because $v_t$ is a gradient: $v_t = - \nabla F^{\prime}_{\mu_t}$, we have using again the remark above on conjugate gradients: 
\begin{align*}
    \tildeT^{-1} \circ v_t \circ \tildeT &= - \nabla \left(F^{\prime}_{\mu_t} \circ \tildeT \right).
\end{align*}
Computing the function on the right-hand-side, for any $(a,b) \in \mathbb{R} \times \mathbb{R}^d$, we get, using the remark above on the structure of $\phi$,
\begin{align*}
    F^{\prime}_{\mu_t}( \tildeT(a,b)) &= \int_y \partial_2 \ell \Big(f^*(y), f(\mu_t; y) \Big) \phi \Big(\tildeT(a,b); y \Big) \straightd \rho(y) \\
    &= \pm \int_y \partial_2 \ell \Big(f^*(y), f(\mu_t; y) \Big) \phi \Big((a,b); T^{-1}(y) \Big) \straightd \rho(y).
\end{align*}
$\rho$ is invariant under $T$ since it spherically symmetric by assumption (and thus invariant under any orthogonal map) and we can therefore replace $y$ by $T(y)$ in the integral above, which yields
\begin{align*}
    F^{\prime}_{\mu_t}( \tildeT(a,b)) &= \pm \int_y \partial_2 \ell \Big(f^*(T(y)), f(\mu_t; T(y)) \Big) \phi \Big((a,b); y \Big) \straightd \rho(y) \\
    &= \pm \int_y \partial_2 \ell \Big(\pm f^*(y), \pm f(\nu_t; y) \Big) \phi \Big((a,b); y \Big) \straightd \rho(y),
\end{align*}
and thus we get
\begin{align*}
    \nabla \left(F^{\prime}_{\mu_t} \circ \tildeT \right) (a,b) &= \pm \int_y \partial_2 \ell \Big(\pm f^*(y), \pm f(\nu_t; y) \Big) \nabla_{(a,b)} \phi \Big((a,b); y \Big) \straightd \rho(y).
\end{align*}
One can already notice that if $f^*$ is invariant under $T$ (as opposed to anti-invariant), that is if we keep the \quoting{$+$} in $\pm$, we get $\tildeT^{-1} \circ v_t \circ \tildeT = -\nabla F^{\prime}_{\nu_t}$. 

\subsection{Proof of Proposition~\ref{th:learning-inv}}\label{app:proof-learning-inv}

\begin{proof}
We first prove $\nu_0 = \mu_0$ and then prove that both $(\mu_t)_{t \geq 0}$ and $(\nu_t)_{t \geq 0}$ are Wasserstein gradient flows of the objective $F$ defined in Equation~\eqref{eq:objective}, starting from the initial condition $\mu_0$ at $t=0$. The unicity of such a gradient flow then guarantees that $\mu_t = \nu_t$ and thus $f(\mu_t; T(x)) = f(\mu_t; x)$ by the preliminaries above on the prediction function (see Appendix~\ref{app:sym-prelim}). 

\paragraph{Initialization: $\nu_0 = \mu_0$.}
By definition, $\tildeT(a,b) = (a, T(b))$. Since $\mu_0 = \mu_0^1 \otimes \mu_0^2$ by assumption, and $\mu_0^2$ is invariant under $T$ since it has spherical symmetry, it is clear that $\mu_0$ is invariant under $\tildeT$, and thus under $\tildeT^{-1}$ by Lemma~\ref{th:inv-invertible}, which gives $\nu_0 = \mu_0$ because $\nu_t = \tildeT^{-1}_\# \mu_t$ for any $t$ by definition.
\end{proof}

\paragraph{Time derivative.}
From the preliminary results above (see Appendix~\ref{app:sym-prelim}) we have 
\begin{align*}
    \partial_t \nu_t &= - \text{div} \left(-\nabla F^\prime_{\nu_t} \, \nu_t\right),
\end{align*}
which shows that $(\nu_t)_{t \geq 0}$ is also a Wasserstein gradient flow of the objective $F$. By unicity of the latter (starting from the initial condition $\mu_0$), it must hold that $\mu_t = \nu_t$ for any $t \geq 0$ which concludes the proof. é

\subsection{Proof of Proposition~\ref{th:learning-anti-inv}}\label{app:proof-learning-anti-inv}

The proof follows the exact same pattern as that of Proposition~\ref{th:learning-inv} (see Appendix~\ref{app:proof-learning-inv}). We now have by definition, $\tildeT(a,b) = (-a, T(b))$ and the added symmetry assumption on $\mu_0^1$ ensures that $\nu_0 = \mu_0$ still holds in this case. As for the time derivative, the preliminaries above (see Appendix~\ref{app:sym-prelim}) ensure that
\begin{align*}
    \nabla \left(F^{\prime}_{\mu_t} \circ \tildeT \right) (a,b) &= - \int_y \partial_2 \ell \Big(- f^*(y), - f(\nu_t; y) \Big) \nabla_{(a,b)} \phi \Big((a,b); y \Big) \straightd \rho(y) \\
    &= \int_y \partial_2 \ell \Big(f^*(y), f(\nu_t; y) \Big) \nabla_{(a,b)} \phi \Big((a,b); y \Big) \straightd \rho(y),
\end{align*}
where we have used the extra assumption that $\partial_2 \ell(-y, -\hat{y}) = - \partial_2 \ell(y, \hat{y})$. This yields 
\begin{align*}
    \partial_t \nu_t &= - \text{div} \left(-\nabla F^\prime_{\nu_t} \, \nu_t\right)
\end{align*}
and the conclusion follows from the same logic as for Proposition~\ref{th:learning-inv}.

\section{Proof of the exponential convergence for linear networks: \autoref{th:lin-exp-cv}}\label{app:lin-exp-cv}
\begin{proof}
The proof is divided in three steps: $(i)$ we derive the dynamics in time of the vector $w(t) = \frac{1}{2} \int a b \, \straightd \mu_t(a,b)$, $(ii)$ we show that the positive definite matrix $H(t)$ appearing in these dynamics has its smallest eigenvalue lower-bounded by some positive constant after some $t_0 > 0$, and $(iii)$ we show that this implies the exponential convergence to the global minimum.

\paragraph{Generalities on the objective $Q$.}
Expanding the square in the definition of $Q$~\eqref{eq:Q}, we have
\begin{align*}
    Q(w) &= \frac{1}{2} \Big[\mathbb{E}_{x \sim \mathbb{P}}[f^*(x)^2] - 2 \beta^\top w + w^\top C w \Big],\\
    C :&= \mathbb{E}_{x \sim \mathbb{P}}[x x^\top] \in \mathbb{R}^{d \times d},\\
    \beta :&= \mathbb{E}_{x \sim \mathbb{P}}[f^*(x) x] \in \mathbb{R}^{d}.
\end{align*}
If $C \neq 0$, $Q(w) \to \infty$ as $||w|| \to \infty$ and since $Q$ is lower-bounded by $0$, it thus admits at least one global minimum. This minimizer $w^\star$ is unique as soon as $Q$ is strongly convex, \ie, $C$ is definite positive, which holds in this case as we have assumed the smallest eigenvalue $\lambdamin$ of $C$ to be $>0$. Note that $\nabla Q(w) = Cw - \beta = \int_x \left((x^\top w) - f^*(x) \right) x \straightd \mathbb{P}(x) \in \mathbb{R}^d$.

\paragraph{First step: dynamics of $w(t)$.}
Let $k \in \{1, \ldots, d\}$, the $k$-th coordinate $w_k(t)$ of $w(t)$ is given by $w_k(t) = \int a b_k \, \straightd \mu_t(a,b)$, and its time derivative is given by 
\begin{align*}
    w^\prime_k(t) = \frac{1}{2} \int \Big(\nabla_{(a,b)} (a b_k) \Big)^\top v_t(a,b) \straightd \mu_t(a,b)
\end{align*}
where $v_t$ is given by Equation~\eqref{eq:w-gf} except we replace $\sigma$ by $\frac{1}{2} \text{id}_{\mathbb{R}^d}$ and $\rho$ by $\mathbb{P}$ in $F_{\mu_t}$, that is
\begin{align*}
    v_t(a,b) = \frac{1}{2} \int_y R_t(y) \begin{pmatrix}
        b^\top y \\
        a y
    \end{pmatrix} \straightd \mathbb{P}(y) \in \mathbb{R}^{1+d}.
\end{align*}
On the other hand, $\nabla_{(a,b)} (a b_k) = \begin{pmatrix}
        b_k \\
        a e_k
\end{pmatrix} \in \mathbb{R}^{1 + d}$ where $e_k$ is the $k$-th element of the canonical orthonormal basis of $\mathbb{R}^d$. Note that here, $R_t(y) = f^*(y) - \langle w(t), y \rangle$.
We thus get
\begin{align*}
    w^\prime_k(t) =& \ \frac{1}{4}  \left \langle \int_{a,b} b_k b \straightd \mu_t(a,b),  \int_y \left(f^*(y) - (w(t)^\top y) \right) y \straightd \mathbb{P}(y) \right \rangle + \ \\ & \ \frac{1}{4}  \left \langle \int_{a,b} a^2 e_k \straightd \mu_t(a,b),  \int_y \left(f^*(y) - (w(t)^\top y) \right) y \straightd \mathbb{P}(y) \right \rangle.
\end{align*}
Note that the term on the right in the inner products is in fact equal to $-\nabla Q(w(t))$, which yields the following dynamics for the vector $w(t)$:
\begin{align*}
    w^\prime(t) &= - H(t) \nabla Q(w(t)), \\
    H(t) :&= \frac{1}{4} \left(\int b b^\top \straightd \mu_t(a,b) + \int a^2 \straightd \mu_t(a,b) I_d \right) \in \mathbb{R}^{d \times d}.
\end{align*}

\paragraph{Second step: lower bound on the smallest eigenvalue of $H(t)$.}

At initialization, by symmetry one has $w(0) = 0$, and using Lemma~\ref{th:spherical-sym-id}, one has that $H(0) = \frac{1}{4} \left( \frac{1}{d} + 1\right) I_d$, so that
\begin{align*}
    \frac{d}{dt} Q(w(t)) \Big|_{t=0} &= \left \langle w^\prime(0), \nabla Q(w(0)) \right \rangle \\
    &= -\frac{d+1}{4d} ||\nabla Q(0)||^2 \\
    &= -\frac{d+1}{4d} ||\beta||^2
\end{align*}
If $\beta = 0$, then $\nabla Q(0) = 0$ and since $w(0)=0$, $w(t)$ starts at the global optimum and thus stays constant equal to $0$. Otherwise, if $||\beta|| > 0$, one has $\frac{d}{dt} Q(w(t)) \Big|_{t=0} < 0$, which ensures that there is a $t_0 > 0$ such that $Q(w(t)) < Q(w(0)) = Q(0)$ for any $t \in (0, t_0]$. Call $\varepsilon := \left[Q(0) - Q(w(t_0)) \right]/2 > 0$. The continuity of $Q$ at $0$ guarantees that there is a $\delta > 0$ such that for any $w \in \mathbb{R}^d$, if $||w|| < \delta$, then $|Q(w) - Q(0)| \leq \varepsilon$. 

Now assume that there exists $t_1 \geq t_0$ such that $\int a^2 \straightd \mu_{t_1}(a,b) \leq \delta$. Then, one has
\begin{align*}
    ||w(t_1)|| &= \left| \left| \frac{1}{2} \int a b \straightd \mu_{t_1}(a,b)  \right| \right| \\
    &\leq \frac{1}{2} \int |a| \, ||b|| \straightd \mu_{t_1}(a,b) \\
    &\leq \frac{1}{2} \int a^2 \straightd \mu_{t_1}(a,b)\\
    &\leq \frac{\delta}{2} < \delta,
\end{align*}
where we have used in the penultimate inequality that $\mu_{t_1}$ is supported on the set $\{ |a| = ||b|| \}$ because of the assumptions on the initialization $\mu_0$ (see Section~\ref{sec:pb-setting}). This ensures that $|Q(w(t_1)) - Q(0)| \leq \varepsilon$. Since $:t \mapsto Q(w(t))$ is decreasing ($Q(w(t)) = F(\mu_t)$ and it is classical that the objective is decreasing along the gradient flow path, see third step below) and $t_1 \geq t_0$, this means that
\begin{align*}
     0 < Q(0) - Q(w(t_0)) \leq Q(0) - Q(w(t_1)) \leq \varepsilon = \left[Q(0) - Q(w(t_0)) \right]/2
\end{align*}
which is a contradiction. Therefore, for any $t \geq t_0$, $\int a^2 \straightd \mu_t(a,b) \geq \delta$. Calling $\eta := \delta/4 > 0$, we thus have that for any $t \geq t_0$, the smallest eigenvalue of $H(t)$ is larger than $\eta$ because $H(t)$ is the sum of the positive semi-definite matrix $\frac{1}{4} \int b b^\top \straightd \mu_t(a,b)$ and of the positive definite matrix $\frac{1}{4} \int a^2 \straightd \mu_t(a,b) I_d$ whose smallest eigenvalue is at least $\eta$ for $t \geq t_0$.

\paragraph{Third step: exponential convergence.}
We have:
\begin{align*}
    \frac{d}{dt} Q(w(t)) &= \langle w^\prime(t), \nabla Q(w(t)) \\
    &= - \nabla Q(w(t))^\top H(t) \nabla Q(w(t)) \leq 0,
\end{align*}
which shows that because $H(t)$ is positive definite, the objective $Q$ is decreasing along the path $(w(t))_{t \geq 0}$.
Since after $t_0 > 0$, the smallest eigenvalue of $H(t)$ is lower bounded by a constant $\eta > 0$, we have that, for any $t \geq t_0$:
\begin{align}\label{eq:der-Q}
    \frac{d}{dt} Q(w(t)) &\leq - \eta ||\nabla Q(w(t))||^2.
\end{align}
Because $Q$ is $\lambdamin$-strongly convex (as the smallest eigenvalue of $C$ is $\lambdamin>0$), one has the classical inequality
\begin{align*}
    \frac{1}{2} ||\nabla Q(w)||^2 \geq \lambdamin \Big(Q(w) -Q(w^\star)\Big).
\end{align*}
Plugging this into Equation~\eqref{eq:der-Q} gives
\begin{align*}
    \frac{d}{dt} \Big(Q(w(t)) - Q(w^\star) \Big) &\leq - 2 \eta \lambdamin \Big(Q(w(t)) - Q(w^\star) \Big),
\end{align*}
which by Gronwall's lemma in turn yields for any $t \geq t_0$
\begin{align*}
    0 \leq Q(w(t)) - Q(w^\star) \leq e^{-2 \eta \lambdamin (t-t_0)} \Big(Q(w(t_0)) - Q(w^\star) \Big),
\end{align*}
thereby proving exponential convergence. 

\paragraph{Exponential convergence in distance.}
Given that $\nabla Q(\wstar) = 0$ because $\wstar$ is and optimum, it holds $C \wstar = \beta$. Using this fact, it easily follows that 
\begin{align*}
     Q(w) - Q(w^\star) = \frac{1}{2} \langle C (w - \wstar), w - \wstar \rangle,
\end{align*}
and the right-hand-side is lower bounded by $\frac{\lambdamin}{2} ||w - \wstar||^2$,
from which we conclude that
\begin{align*}
    ||w(t) - w^\star||^2 \leq \frac{2}{\lambdamin} \Big(Q(w(t)) - Q(w^\star) \Big),
\end{align*}
and the exponential decrease of the right-hand-side allows to conclude.

\end{proof}

\section[]{Proofs of Section~\ref{sec:low-dim}: $f^*$ depends only on the projection on a sub-space $H$}\label{app:low-dim}

\subsection{The general case}\label{app:low-dim-gen}

\subsubsection[]{Closed dynamics on the sphere $\mathbb{S}^{d-1}$}\label{app:low-dim-nu}
We wish to show here that the pair of measures $(\nu_t^+, \nu_t^-)$ defined through Equation~\eqref{eq:nu-pm} satisfy Equation~\eqref{eq:nu_t} and that the corresponding dynamic is closed is the sense that it can be expressed solely using $(\nu_t^+, \nu_t^-)$ (without requiring to express quantities in function of $\mu_t$). Below, we use $\kappa(z) = \max(0, z)$. We do this do differentiate it from the activation function $\sigma$ (which is also equal to ReLU) so as avoid confusion because the $\kappa$ which appears below has nothing to do with the activation function of the network and simply comes from the integration domain in the calculations.

\paragraph{Equations of the dynamics on the sphere.}
Let $\varphi \in \mathcal{C}^1_c(\mathbb{S}^{d-1})$. One has
\begin{align*}
    \partial_t \int \varphi \straightd \nu_t^\pm =& \ \partial_t \int_{\pm a \geq 0, b} |a| ||b|| \varphi \left( \frac{b}{||b||} \right) \straightd \mu_t(a,b) \\
    =& \ \partial \int_{a,b} \kappa(\pm a) ||b|| \varphi \left( \frac{b}{||b||} \right) \straightd \mu_t(a,b) \\
    =& \ \int_{a,b} \nabla_{(a,b)}  \left( \kappa(\pm a) ||b|| \varphi  \left( \frac{b}{||b||} \right) \right)^\top v_t(a,b) \straightd \mu_t(a,b)
\end{align*}
Let us compute the components of the gradient above. We have 
\begin{align*}
    \nabla_{a}  \left( \kappa(\pm a) ||b|| \varphi  \left( \frac{b}{||b||} \right) \right) = \pm \kappa^\prime(\pm a) ||b|| \varphi  \left( \frac{b}{||b||} \right) = \mathbf{1}_{\{ \pm a \geq 0 \}} ||b|| \varphi  \left( \frac{b}{||b||} \right).
\end{align*}
The Jacobian of the map $:b \in \mathbb{R}^d \mapsto b / ||b||$ is equal to $\frac{1}{||b||}(I_d - b b^\top / ||b||^2)$ which is a symmetric (or self-adjoint) matrix, so that the gradient \wrt~$b$ is 
\begin{align*}
    \nabla_{b}  \left( \kappa(\pm a) ||b|| \varphi  \left( \frac{b}{||b||} \right) \right) = \mathbf{1}_{\{ \pm a \geq 0 \}} |a| \left[ \varphi \left( \frac{b}{||b||} \right) \frac{b}{||b||} + \left(I_d - \frac{b}{||b||} \left(\frac{b}{||b||} \right)^\top \right) \nabla \varphi \left( \frac{b}{||b||} \right) \right].
\end{align*}
On the other hand, the first component of $v_t(a,b)$ (corresponding to the gradient \wrt~$a$) is
\begin{align*}
    v_t^1(a,b) = \int_y R_t(y) \kappa(b^\top y) \straightd \rho(y) = ||b|| \int_y R_t(y) \kappa \left ( \left(\frac{b}{||b||} \right)^\top y \right) \straightd \rho(y),
\end{align*}
and the last $d$ components (corresponding to the gradient \wrt~$b$) are
\begin{align*}
    v_t^2(a,b) = \int_y R_t(y) a \kappa^\prime(b^\top y) y \straightd \rho(y) = a  \int_y R_t(y) \kappa^\prime \left ( \left(\frac{b}{||b||} \right)^\top y \right) y \straightd \rho(y).
\end{align*}
When computing the inner product $\nabla_{(a,b)} \left( \kappa(\pm a) ||b|| \varphi  \left( \frac{b}{||b||} \right) \right)^\top v_t(a,b)$, we can re-arrange the terms to keep one term where $\varphi$ appears and the other where $\nabla \varphi$ appears. Using the facts that the Jacobian computed above is symmetric, that $\kappa(z) = \kappa^\prime(z) z$ for any $z \in \mathbb{R}$, and that $\mathbf{1}_{\{ \pm a \geq 0 \}} a = \pm \mathbf{1}_{\{ \pm a \geq 0 \}} |a| = \pm \kappa(\pm a)$, we get,
\begin{align*}
    \nabla_{(a,b)} \left( \kappa(\pm a) ||b|| \varphi  \left( \frac{b}{||b||} \right) \right)^\top v_t(a,b) =& \ \pm  \mathbf{1}_{\{\pm a \geq 0 \}} ||b|| \, ||b|| \varphi \left(\frac{b}{||b||} \right) g_t \left(\frac{b}{||b||} \right) + \ \\
    & \ \pm  \mathbf{1}_{\{\pm a \geq 0 \}} |a| \, |a| \varphi \left(\frac{b}{||b||} \right) g_t \left(\frac{b}{||b||} \right) + \ \\
    & \ \pm \mathbf{1}_{\{\pm a \geq 0 \}} |a| \, |a| \nabla \varphi \left(\frac{b}{||b||} \right)^\top \Tilde{v}_t \left(\frac{b}{||b||} \right),
\end{align*}
where, for $u \in \mathbb{S}^{d-1}$
\begin{align*}
    g_t(u) &:= \int_y R_t(y) \sigma(u^\top y) \straightd\rho(y), \\
    \Tilde{v}_t(u) &:= \int_y R_t(y) \sigma^\prime(u^\top y) \left[y - (u^\top y) u\right] \straightd\rho(y).
\end{align*}
Finally, because $\mu_t$ stays on the cone $\{(a,b) \in \mathbb{R}^{d+1} ; |a| \ = ||b||\}$ for any $t$ (see Chizat and Bach~\cite[Lemma 26]{chizat2020implicitBias}, Wojtowytsch~\cite[Section 2.5]{wojtowytsch2020convergence}), when integrating against $\mu_t$, we can replace $||b||$ by $|a|$ and vice-versa. We thus get that the time derivative we initially computed is the sum of two terms:
\begin{align*}
    \partial_t \int \varphi \straightd \nu_t^\pm =& \ 2 \int_{\pm a \geq 0, b} |a| ||b|| \varphi \left(\frac{b}{||b||} \right) g_t \left(\frac{b}{||b||} \right) \straightd \mu_t(a,b) \ + \\
    & \ \int_{\pm a \geq 0, b} |a| ||b|| \nabla \varphi \left(\frac{b}{||b||} \right)^\top \Tilde{v}_t \left(\frac{b}{||b||} \right) \straightd \mu_t(a,b) \\
    =& \ 2 \int_{u \in \mathbb{S}^{d-1}}  \varphi \left(u \right) g_t \left(u \right) \straightd \nu^{\pm}_t(u) \ + \\
    & \ \int_{u \in \mathbb{S}^{d-1}}  \nabla \varphi \left(u \right)^\top \Tilde{v}_t \left(u \right) \straightd \nu^{\pm}_t(u),
\end{align*}
which shows that $\nu^\pm_t$ satisfies Equation~\eqref{eq:nu_t} in the sense of distributions.

\paragraph{Closed dynamics.} We want to show that $g_t$ and $\tilde{v}_t$ can be expressed using only $\nu_t^+$ and $\nu_t^-$. Both these quantities depend on $t$ only through the residual $R_t$, which itself only depends on $t$ through $f(\mu_t; \cdot)$. We thus show that the latter can be expressed using only $\nu_t^+$ and $\nu_t^-$, which easily follows from writing, for any $y \in \mathbb{R}^d$, 
\begin{align*}
    f(\mu_t; y) =& \int a \sigma \left(b^\top y \right) \straightd \mu_t(a,b) \\
    =& \int a ||b|| \sigma \left(\left \langle \frac{b}{||b||}, \, y \right\rangle \right) \straightd \mu_t(a,b) \\
    =& \int_{a \geq 0, b} |a| \, ||b|| \sigma \left(\left \langle \frac{b}{||b||}, \, y \right\rangle \right) \straightd \mu_t(a,b) - \int_{a \leq 0, b} |a| \, ||b|| \sigma \left(\left \langle \frac{b}{||b||}, \, y \right\rangle \right) \straightd \mu_t(a,b) \\
    =& \int_{u \in \mathbb{S}^{d-1}} \sigma \left(u^\top y \right) \straightd \nu_t^+(u) - \int_{u \in \mathbb{S}^{d-1}} \sigma \left(u^\top y \right) \straightd \nu_t^-(u)
\end{align*}

\subsubsection[]{Closed dynamics in $d_H + 1$ dimensions}\label{app:low-dim-tau}

\begin{proof}
We first prove that the Equation~\eqref{eq:tau_t-general} for $\tau_t^\pm$ holds in the sense of distributions, and then show that the corresponding dynamics are \textbf{closed} because the $V_t$ and $g_t$ appearing in Equation~\eqref{eq:tau_t-general} can be expressed with $(\tau_t^+, \tau_t^-)$ (and not only with $(\nu_t^+, \nu_t^-)$ for instance). We show this by expressing $f(\mu_t; \cdot)$ only in function of the pair $(\tau_t^+, \tau_t^-)$.

\paragraph{The pair $(\tau_t^+, \tau_t^-)$ satisfy Equation~\eqref{eq:tau_t-general}.}
First, we show that $g_t$ and $\Tilde{v}_t$ defined in Equation~\eqref{eq:advec-reac-general} admit modified expressions that match the structure of the pushforward transforming $\nu_t^\pm$ into $\tau_t^\pm$. Indeed, since $\rho$ is assumed to be spherically symmetric, it is invariant by any orthogonal transformation. In particular, for a fixed $u \in \mathbb{S}^{d-1}$ such that $u^\perp \neq 0$, we consider the orthogonal map $T^u: \mathbb{R}^d \rightarrow \mathbb{R}^d$ such that $T^u_{|H} = \text{id}_H$ and $T^u_{|H^\perp}$ sends the canonical orthonormal basis $(e^\perp_1, \ldots, e^\perp_{d_\perp})$ of $H^\perp$ on $(u^\perp/||u^\perp||, u_2, \ldots, u_{d_\perp})$ where $(u_2, \ldots, u_{d_\perp}) \in (H^\perp)^{d_\perp - 1}$ is an  orthonormal family, orthogonal to $u^\perp$, so that for any $y^\perp \in H^\perp$ with coordinates $y^\perp_1, \ldots, y^\perp_{d_\perp}$ in the basis $(e^\perp_1, \ldots, e^\perp_{d_\perp})$, $T^u_{|H^\perp}(y^\perp) = y^\perp_1 u^\perp/||u^\perp|| + h_u(y_\perp)$ with $h_u(y_\perp) \perp u^\perp$. 

Note that since $f^*(y) = f_H(y^H)$ and $f(\mu_t; y) = \tilde f_t(y^H, ||y^\perp||)$, the residual $R_t(y) = f^*(y) - f(\mu_t; y)$ is invariant by any orthogonal transformation which preserves $H$ (and in particular by $T^u$). We thus have
\begin{align*}
    g_t(u) =& \int_y R_t(y) \sigma \left(\innerprod{u^H}{y^H} + y^\perp_1 ||u^\perp|| \right) \straightd \rho =: \Tilde{g}_t(u^H, ||u^\perp||), \\
    \Tilde{v}_t(u) =& \int_y R_t(y) \sigma^\prime\left(\innerprod{u^H}{y^H} + y^\perp_1 ||u^\perp|| \right) \left[y^H + T^u_{|H^\perp}(y^\perp) - \left(\innerprod{u^H}{y^H} + y^\perp_1 ||u^\perp|| \right)u \right] \straightd \rho.
\end{align*}
Now consider, for any $(\theta, z^H) \in [0, \pi/2] \times \mathbb{S}^{d_H-1}$,
\begin{align*}
    G_t(\theta, z^H) :&= \Tilde{g}_t(\cos(\theta) z^H, \sin(\theta)) \\
    V_t(\theta, z^H) :&= \int_{y} R_t(y) \sigma^\prime \left(\cos(\theta) \innerprod{z^H}{y^H} + y^\perp_1 \sin(\theta) \right)
    \begin{pmatrix}
        y^\perp_1 \cos(\theta) - \sin(\theta) \innerprod{z^H}{y^H} \\
        \frac{y^H}{\cos(\theta)}   - \innerprod{z^H}{y^H} \frac{z^H}{\cos(\theta)}
    \end{pmatrix}
    \straightd \rho
\end{align*}
We show below that $(\tau_t^+, \tau_t^-)$ satisfy Equation~\eqref{eq:tau_t-general} with the $G_t$ and $V_t$ defined above. Let $\varphi \in \mathcal{C}^1_c([0, \pi/2] \times \mathbb{S}^{d_H-1})$. Since $\tau_t^\pm$ is defined as a push-forward measure obtained from $\nu_t^\pm$ we have:
\begin{align*}
    \partial_t \int \varphi(\theta, z^H) \straightd \tau_t^\pm(\theta, z^H) =& \ \partial_t \int \varphi \left(\arccos(||u^H||), \frac{u^H}{||u^H||} \right) \straightd \nu_t^\pm(u) \\
    =& \pm 2 \int \varphi \left(\arccos(||u^H||), \frac{u^H}{||u^H||} \right) \Tilde{g}_t(u^H, ||u^\perp||) \straightd \nu_t^\pm(u) \  + \\
    &\pm \int \nabla_u \left(\varphi \left(\arccos(||u^H||), \frac{u^H}{||u^H||} \right) \right)^\top \Tilde{v}_t(u) \straightd \nu_t^\pm(u).
\end{align*}
By definition of the pushforward, and since $u^H = \cos(\arccos(||u^H||)) u^H/||u^H||$ and $||u^\perp|| = \sin(\arccos(||u^H||))$ for $u \in \mathbb{S}^{d-1}$, the first integral is equal $\int \varphi(\theta, z^H) G_t(\theta, z^H) \straightd \tau_t^\pm(u)$. For the second integral, let us first compute the gradient. One has
\begin{align*}
    \nabla_u \left(\varphi \left(\arccos(||u^H||), \frac{u^H}{||u^H||} \right) \right) =& \partial_\theta \varphi \left(\arccos(||u^H||), \frac{u^H}{||u^H||} \right) \frac{-1}{\sqrt{1 - ||u^H||^2}} \frac{u^H}{||u^H||} \ + \\ &\frac{1}{||u^H||} \left[I_{d_H} - \frac{u^H {(u^H)}^\top}{||u^H||^2} \right] \left(\nabla_{z^H} \varphi \right) \left(\arccos(||u^H||), \frac{u^H}{||u^H||} \right).
\end{align*}
We observe that the gradient above belongs to $H$ which implies that when computing its inner product with $\tilde{v}_t(u)$ we can consider only the component of the latter along $H$. Additionally, we note that $I_{d_H} - u^H (u^H)^\top / ||u^H||^2$ is actually the orthogonal projection onto $\{u^H\}^\perp$, so that it yields $0$ when applied to $u$. Using that $||u^\perp|| = \sqrt{1 - ||u^H||^2}$ for $u \in \mathbb{S}^{d-1}$, we then get:
\begin{align*}
    \nabla_u \left(\varphi \left(\arccos(||u^H||), \frac{u^H}{||u^H||} \right) \right)^\top \Tilde{v}_t(u) &= \nabla \varphi \left(\arccos(||u^H||), \frac{u^H}{||u^H||} \right)^\top V_t \left((\arccos(||u^H||), \frac{u^H}{||u^H||} \right).
\end{align*}
where $\nabla \varphi (\theta, z^H) = 
\begin{pmatrix}
    \partial_\theta \varphi (\theta, z^H) \\
    \nabla_{z^H} \varphi (\theta, z^H)
\end{pmatrix}$. This shows that 
\begin{align*}
    \partial_t \int \varphi (\theta, z^H) \straightd \tau_t^\pm(\theta, z^H) =& \pm 2 \int \varphi(\theta, z^H) G_t(\theta, z^H) \straightd \tau_t^\pm(\theta, z^H) \ + \\
    &\pm \int \nabla \varphi (\theta, z^H)^\top V_t(\theta, z^H) \straightd \tau_t^\pm(\theta, z^H),
\end{align*}
which proves that $\tau_t^\pm$ indeed satisfies Equation~\eqref{eq:advec-reac-general} in the sense of distributions. 

\paragraph{The dynamics are closed in the pair $(\tau_t^+, \tau_t^-)$.} The only thing left to prove to show that the dynamics are closed for the pair $(\tau_t^+, \tau_t^-)$ is that $G_t$ and $V_t$ can be expressed using only the pair $(\tau_t^+, \tau_t^-)$. The only dependence of these quantities on $t$ is through the residual $R_t$ which itself depends on $t$ only through $f(\mu_t; \cdot)$. Let $y \in \mathbb{R}^d$. We have already shown at the end of the previous Section~\ref{app:low-dim-nu} that by definition of $\nu_t^+$ and $\nu_t^-$, we have
\begin{align*}
    f(\mu_t; y) =& \int_{u \in \mathbb{S}^{d-1}} \sigma \left(u^\top y \right) \straightd \left(\nu_t^+ - \nu_t^-\right)(u).
\end{align*}
On the other hand, we show below that the integral of any measurable function $\varphi: \mathbb{S}^{d-1} \rightarrow \mathbb{R}$ against $\nu_t^\pm$ can be expressed as an integral against $\tau_t^\pm$ in the case where $\nu_t^\pm$ admits a density \wrt~the uniform measure on $\mathbb{S}^{d-1}$ (which is the case for $\nu_0^\pm$), the case of a general measure $\nu_t^\pm$ being a simple extension via a weak convergence argument. Thus call $p_t^\pm$ the density of $\nu_t^\pm$ \wrt~$\tilde{\omega}_d$. Since $\nu_t^\pm$ is invariant by any linear map $T$ such that $T_{|H} = \text{id}_{H}$ $T_{|H^\perp} \in \mathcal{O}(d_\perp)$ (because of the symmetries on $\mu_t$ given by Proposition~\ref{th:learning-inv}), and since this is also the case for $\tilde{\omega}_d$ because $\tilde{\omega}_d$ has spherical symmetry and $T$ is orthogonal, we have by Lemma~\ref{th:inv-density} that $p_t^\pm$ is invariant by any such $T$, which then leads to $p_t$ having the form $p_t(u) = \Tilde{p}_t^\pm(u^H, ||u^\perp||)$ by Lemma~\ref{th:inv-sub-orthogonal}.

\paragraph{First step.}
We show that $\tau_t^\pm$ has the density $q_t^\pm(\theta, z^H) = |\mathbb{S}^{d_\perp-1}| \Tilde{p}_t^\pm(\cos(\theta) z^H, \sin(\theta))$ \wrt~$\tilde{\gamma} \otimes \tilde{\omega}_{d_H}$ where the measure $\tilde{\gamma}$ is the normalized counterpart of the measure in Definition~\ref{def:gamma}. Indeed, let $\varphi: [0, \pi/2] \times \mathbb{S}^{d-1} \rightarrow \mathbb{R}$ be any measurable function \wrt~$\tau_t^\pm$. Using the disintegration Lemma~\ref{th:disintegration} on $\tilde{\omega}_d$, one has that
\begin{align*}
    \int \varphi(\theta, z^H) \straightd \tau_t^\pm(\theta, z^H) =& \int \varphi \left(\arccos(||u^H||), \frac{u^H}{||u^H||} \right) \straightd \nu_t^\pm(u) \\
    =& \int \varphi \left(\arccos(||u^H||), \frac{u^H}{||u^H||} \right) \Tilde{p}_t^\pm(u^H, ||u^\perp||) \straightd \tilde{\omega}_d(u) \\
    =& \int \varphi \left(\theta, z^H \right) \Tilde{p}_t^\pm(\cos(\theta) z^H, \sin(\theta)) \straightd \tilde{\gamma}(\theta) \straightd \tilde{\omega}_{d_H}(z^H) \straightd \tilde{\omega}_{d_\perp}(z^\perp) \\
    =& \int \varphi \left(\theta, z^H \right) \Tilde{p}_t^\pm(\cos(\theta) z^H, \sin(\theta)) \straightd \tilde{\gamma}(\theta) \straightd \tilde{\omega}_{d_H}(z^H),
\end{align*}
which proves the desired density for $\tau_t^\pm$. 
\paragraph{Second step.}
Consider a measurable $\varphi: \mathbb{S}^{d-1} \rightarrow \mathbb{R}$ \wrt~$\nu_t^\pm$. One has with similar calculations as above
\begin{align*}
    \int_{u} \varphi(u) \straightd \nu_t^\pm(u) =& \int_{u} \varphi(u)  \Tilde{p}_t^\pm(u^H, ||u^\perp||) \straightd \tilde{\omega}_d(u) \\
    =& \int  \varphi(\cos(\theta) z^H + \sin(\theta) z^\perp)  \Tilde{p}_t^\pm(\cos(\theta) z^H, \sin(\theta)) \straightd \tilde{\gamma}(\theta) \straightd \tilde{\omega}_{d_H}(z^H) \straightd \tilde{\omega}_{d_\perp}(z^\perp) \\
    =& \int_{\theta, z^H} \left(\int_{z^\perp} \varphi(\cos(\theta) z^H + \sin(\theta) z^\perp) \straightd \tilde{\omega}_{d_\perp}(z^\perp) \right) q_t^\pm(\theta, z^H) \straightd \tilde{\gamma}(\theta) \straightd \tilde{\omega}_{d_H}(z^H) \\
    =& \int_{\theta, z^H} \left(\int_{z^\perp} \varphi(\cos(\theta) z^H + \sin(\theta) z^\perp) \straightd \tilde{\omega}_{d_\perp}(z^\perp) \right)  \straightd \tau_t^\pm(\theta, z^H).
\end{align*}
Applying this to $f(\mu_t; y)$ shows that the latter quantity can be expressed solely using $(\tau_t^+, \tau_t^-)$, which proves that the dynamics is indeed closed and therefore concludes the proof when $\nu_t^\pm$ has a density.

\paragraph{Third step: extending to any measure.}
It is known that for any measure $\nu$ over $\mathbb{S}^{d-1}$, there exists a sequence of measure $(\nu_n)_{n \in \mathbb{N}}$ such that: $(i)$ $\nu_n$ has a density $p_n$ \wrt~the uniform measure $\tilde{\omega}_d$ over $\mathbb{S}^{d-1}$, and $(ii)$ the sequence $(\nu_n)_{n \in \mathbb{N}}$ converges weakly to $\nu$, that is, for any continuous (and thus automatically bounded because the unit sphere is compact) $\varphi$, $\int \varphi \straightd \nu_n \xrightarrow[n \to \infty]{} \int \varphi \straightd \nu$. Let thus $\nu \in \mathcal{M}_+(\mathbb{S}^{d-1})$, and consider a sequence $(\nu_n)_{n \in \mathbb{N}}$ with density converging weakly towards $\nu$. Let $\tau$ (\textit{resp.}~$\tau_n$) be defined from $\nu$ (\textit{resp.}~$\nu_n$) as $\tau_t^\pm$ is defined from $\nu_t^\pm$, that is for any measurable $\varphi : [0, \pi/2] \times \mathbb{S}^{d_H-1} \to \mathbb{R}$,
\begin{align*}
    \int \varphi(\theta, z^H) \straightd \tau(\theta, z^H) =& \int \varphi \left(\arccos(||u^H||), \frac{u^H}{||u^H||} \right) \straightd \nu(u), \\
    \int \varphi(\theta, z^H) \straightd \tau_n(\theta, z^H) =& \int \varphi \left(\arccos(||u^H||), \frac{u^H}{||u^H||} \right) \straightd \nu_n(u).
\end{align*}
Let thus $\varphi$ be a continuous map from $\mathbb{S}^{d-1} \to \mathbb{R}$ (having in mind the example of $:u \mapsto \sigma(u^\top y)$ for a fixed $y$). By the result of Step $2$, since $\nu_n$ has a density for every $n$, we have that 
\begin{align}\label{eq:nu-tau-n}
    \int \varphi(u) \straightd \nu_n(u) =& \int_{\theta, z^H} \left(\int_{z^\perp} \varphi(\cos(\theta) z^H + \sin(\theta) z^\perp) \straightd \tilde{\omega}_{d_\perp}(z^\perp) \right)  \straightd \tau_n(\theta, z^H),
\end{align}
and taking the limit $n \to \infty$, the left-hand-side of Equation~\eqref{eq:nu-tau-n} converges to $\int \varphi \straightd \nu$ by assumption. Now let us look at the right-hand-side of~\eqref{eq:nu-tau-n}. Calling $\psi(\theta, z^H) = \int_{z^\perp} \varphi(\cos(\theta) z^H + \sin(\theta) z^\perp) \straightd \tilde{\omega}_{d_\perp}(z^\perp)$ and $\Phi(u) = \int_{z^\perp} \varphi(u^H + ||u^\perp|| z^\perp) \straightd \tilde{\omega}_{d_\perp}(z^\perp)$, the right-hand-side is in fact $\int \psi \straightd \tau_n$ and, for any $n \in \mathbb{N}$, is equal to:
\begin{align*}
    \int \psi \straightd \tau_n
    &= \int_u \psi \left(\arccos(||u^H||), \frac{u^H}{||u^H||} \right) \straightd \nu_n(u) \\
    &= \int_u \int_{z^\perp} \varphi \left(||u^H|| \frac{u^H}{||u^H||} + ||u^\perp|| z^\perp \right) \straightd \tilde{\omega}_{d_\perp}(z^\perp)  \straightd \nu_n(u) \\
    &= \int_u \int_{z^\perp} \varphi \left(u^H + ||u^\perp|| z^\perp \right) \straightd \tilde{\omega}_{d_\perp}(z^\perp)  \straightd \nu_n(u) \\
    &= \int \Phi \straightd \nu_n,
\end{align*}
and a similar result holds for $\tau$ and $\nu$. Now, the continuity of $\Phi$ is readily obtained from that of $\varphi$, and thus the right-hand-side in the last equality above converges to $\int \Phi \straightd \nu$ which is also equal to $\int \psi \straightd \tau$ by the same calculations as above.  The right-hand-side in~\eqref{eq:nu-tau-n} therefore converges to $\int \psi \straightd \tau$, and since the limits of both sides are equal, we get $\int \varphi \straightd \nu = \int \psi \straightd \tau$, which is the claim of Step 2 for a general measure $\nu$ which does not  necessarily admit a density, thereby concluding the proof.
\end{proof}

\subsection[]{Case when $f^*$ is the euclidean norm:~\autoref{th:1d-theta}}\label{app:1d}

Here, we give the proof of~\autoref{th:1d-theta} which shows that when $f^*(x) = ||x^H||$ the dynamics can be reduced to a single variable: the angle $\theta \in [0, \pi/2]$ between particles and the subs-space $H$. 

We decompose the proof in three steps: first we show that the pair of measures $(\tau_t^+, \tau_t^-) \in \mathcal{M}_+([0, \pi/2])$ as defined in Section~\ref{sec:1d-red} indeed follows Equation~\eqref{eq:1d-WFR}; then we show that the dynamics are indeed closed by proving that the terms $V_t$ and $G_t$ appearing in the GF depend only on $(\tau_t^+, \tau_t^-)$; and finally, we show that Equation~\eqref{eq:1d-WFR} indeed corresponds to a Wasserstein-Fisher-Rao GF on a given objective functional over $\mathcal{M}_+([0, \pi/2])^2$.

\subsubsection{Proof of the GF equation}\label{app:1d-WFR-equation}
\begin{proof}
We first use the added symmetry to simplify the terms $g_t$ and $\tilde{v}_t$ which appear in the GF with $(\nu_t^+, \nu_t^-)$ (see Section~\ref{sec:low-dim-gen}) and express them only with $||u^H||$ and $||u^\perp||$. Then we use the equations satisfied by $(\nu_t^+, \nu_t^-)$ to obtain equations for $(\tau_t^+, \tau_t^-)$.

% \paragraph{Simplifying $g_t$ and $\tilde{v}_t$.}
% Since, $f^*(x) = ||x^H||$, $f^*$ is now invariant under any orthogonal map $T$ preserving $H$ and $H^\perp$, that is such that the restrictions $T_{|H} \in \mathcal{O}(d_H)$ and $T_{|H^\perp} \in \mathcal{O}(d_\perp)$. Proposition~\ref{th:learning-inv} then ensures that so is $f(\mu_t, \cdot)$. Using Lemma~\ref{th:inv-sub-orthogonal}, it is easy to see that $f(\mu_t; y)$ then reads as $\tilde{f}_t(||x^H||, ||x^\perp||)$ for some $\tilde{f}_t: \mathbb{R}_{+}^2$.

\paragraph{Equations for $(\tau_t^+, \tau_t^-)$.}
Let $\varphi \in \mathcal{C}^1_c([0,\pi/2])$. We have
\begin{align*}
    \partial_t \int \varphi \straightd \tau_t^\pm =& \partial_t \int \varphi \left( \arccos(||u^H||) \right) \straightd \nu_t^\pm(u) \\
    =& \pm \int \nabla_u \left(\varphi \left( \arccos(||u^H||) \right) \right)^\top \tilde{v}_t(u) \straightd \nu_t^\pm(u) \, \\
    &\pm 2 \int \varphi \left( \arccos(||u^H||) \right) g_t(u) \straightd \nu_t^\pm(u).
\end{align*}
One has that 
\begin{align*}
    \nabla_u \left(\varphi \left( \arccos(||u^H||) \right) \right) = \varphi^{\prime} \left( \arccos(||u^H||) \right) \times \frac{-1}{\sqrt{1 - ||u^H||^2}} \frac{u^H}{||u^H||},
\end{align*}
which belongs to $H$. We recall here the expressions of $\tilde{v}_t$ and $g_t$: for any $u \in \mathbb{S}^{d-1}$, we have
\begin{align*}
    g_t(u) &= \int_y R_t(y) \sigma(u^\top y) \straightd \rho(y), \\
    \tilde{v}_t(u) &= \int_y R_t(y) \sigma^\prime(u^\top y)[y - (u^\top y)u] \straightd \rho(y).
\end{align*}
Since, $f^*(x) = ||x^H||$, $f^*$ is now invariant under any orthogonal map $T$ preserving $H$ and $H^\perp$, that is such that the restrictions $T_{|H} \in \mathcal{O}(d_H)$ and $T_{|H^\perp} \in \mathcal{O}(d_\perp)$. Proposition~\ref{th:learning-inv} then ensures that so is $f(\mu_t, \cdot)$, which in turn implies that the residual $R_t(\cdot) = \partial_2 \ell(f(\mu_t; \cdot), f^*(\cdot))$ also shares that invariance property. Using a similar change of variable as in Appendix~\ref{app:low-dim-tau}, and because $\rho$ is spherically symmetric, one gets that $g_t$ can be re-written
\begin{align*}
    g_t(u) = \int_y R_t(y) \sigma \left(y^H_1 ||u^H|| + y^\perp_1 ||u^\perp|| \right) \straightd \rho(y).
\end{align*}
Calling
\begin{align*}
    G_t(\theta) := \int_y R_t(y) \sigma \left(y^H_1 \cos(\theta) + y^\perp_1 \sin(\theta) \right) \straightd \rho(y),
\end{align*}
one has $g_t(u) = G_t(\arccos(||u^H||))$ because $u \in \mathbb{S}^{d-1}$, so that $||u^\perp|| = \sqrt{1 - ||u^H||^2}$. Then, by definition of $\tau_t^\pm$, the second integral in the time derivative above is equal to $\int \varphi(\theta) G_t(\theta) \straightd \tau_t^\pm$. For the first integral appearing in that time derivative, we get
\begin{align*}
    \nabla_u \left(\varphi \left( \arccos(||u^H||) \right) \right)^\top \tilde{v}_t(u) &= \frac{\varphi^\prime \left( \arccos(||u^H||) \right)}{||u^\perp|| \, ||u^H||}  \int_y R_t(y) \sigma'(u^\top y) [(u^\top y) u - y]^\top u^H \straightd \rho.
\end{align*}
Expanding the inner product inside the integral, we have
\begin{align*}
    [(u^\top y) u - y]^\top u^H &= (\langle u^H, y^H \rangle + \langle u^\perp, y^\perp \rangle) ||u^H||^2 - \langle u^H, y^H \rangle \\
    &= ||u^H||^2 \langle u^\perp, y^\perp \rangle - (1 - ||u^H||^2) \langle u^H, y^H \rangle \\
    &= ||u^H||^2 \langle u^\perp, y^\perp \rangle - ||u^\perp||^2 \langle u^H, y^H \rangle.
\end{align*}
Calling
\begin{align*}
    V_t(\theta) := \int_y R_t(y) \sigma^\prime \left(y^H_1 \cos(\theta) + y^\perp_1 \sin(\theta) \right)[y^\perp_1 \cos(\theta) - y^H_1 \sin(\theta)] \straightd \rho(y) = G^\prime(\theta),
\end{align*}
and using again the spherical symmetry of $\rho$, with the same change of variable in the integral as for $g_t$, we get that 
\begin{align*}
    \nabla_u \left(\varphi \left( \arccos(||u^H||) \right) \right)^\top \tilde{v}_t(u) &= \varphi^\prime \left( \arccos(||u^H||) \right) V_t(\arccos(||u^H||)).
\end{align*}
Finally, this combined with the previous result on the integral with $g_t$ yields
\begin{align*}
    \partial_t \int \varphi \straightd \tau_t^\pm =& \pm \int \varphi^\prime(\theta) V_t(\theta) \straightd \tau_t^\pm(\theta) \pm 2 \int \varphi(\theta) G_t(\theta) \straightd \tau_t^\pm(\theta),
\end{align*}
which leads to the desired equation
\begin{align*}
    \partial  \tau_t^\pm = - \text{div} \left(\pm V_t \tau_t^\pm \right) \pm 2 G_t \tau_t^\pm.
\end{align*}

% Using Lemma~\ref{th:inv-sub-orthogonal}, it is easy to see that $f(\mu_t; y)$ then reads as $\tilde{f}_t(||x^H||, ||x^\perp||)$ for some $\tilde{f}_t: \mathbb{R}_{+}^2$.
\end{proof}

\subsubsection[]{Proof that the dynamics on the angle $\theta$ are closed}\label{app:1d-theta}
The proof follow closely that of Appendix~\ref{app:low-dim-tau} (where we prove closed dynamics), except here we take advantage of the added symmetry of the dynamics.
As in Appendix~\ref{app:low-dim-tau}, we have 
\begin{align*}
    f(\mu_t;y) =& \int_{u \in \mathbb{S}^{d-1}} \sigma \left(u^\top y \right) \straightd \left(\nu_t^+ - \nu_t^-\right)(u),
\end{align*}
and the only thing to prove is that this quantity can be expressed using only $(\tau_t^+, \tau_t^-)$. As in Appendix~\ref{app:low-dim-tau}, we first prove this when $\nu_t^\pm$ has a density, which is the case for $\nu_0^\pm$ and should thus remain so during the dynamics. 

Similarly to what occurs in Appendix~\ref{app:low-dim-tau}, $\nu_t^\pm$ is invariant by any orthogonal map $T$ which preserves $H$ and $H^\perp$ because $\mu_t$ has those symmetries given by Proposition~\ref{th:learning-inv}, and if $\nu_t^\pm$ has a density $p_{\nu_t^\pm}$ \wrt~$\tilde{\omega}_d$, then $p^\pm_t$ is also invariant by any such map $T$, and thus depends only on the norms $||u^H||$ and $||u^\perp||$ of its input $u \in \mathbb{S}^{d-1}$. But since its input is on the sphere, those norms are determined by the angle $\theta = \arccos(||u^H||)$ between the input $u$ and $H$. Calling $q_t^\pm$ such that $p_t^\pm(u) = q_t^\pm(\arccos(||u^H||))$, this will lead $\tau_t^\pm$ to have the density $q_t^\pm$ \wrt~$\tilde{\gamma}$. Then, we show below that similarly to Appendix~\ref{app:low-dim-tau}, the integral of any measurable $\varphi:\mathbb{S}^{d-1} \to \mathbb{R}$ against $\nu_t^\pm$ can be expressed as an integral against $\tau_t^\pm$. Indeed, using the disintegration Lemma~\ref{th:disintegration},
\begin{align*}
    \int \varphi d\nu_t^\pm &= \int_{\theta \in [0, \pi/2]} \varphi(u) q_t^\pm(\arccos(||u^H||)) \straightd \tilde{\omega}_d(u) \\
    &= \int_{u} \varphi \left(\cos(\theta) z^H + \sin(\theta) z^\perp \right) q_t^\pm(\theta) \straightd \tilde{\omega}_{d_H}(z^H) \straightd \tilde{\omega}_{d_\perp}(z^\perp) \straightd \tilde{\gamma}(\theta) \\
    &= \int_{\theta \in [0, \pi/2]} \tilde{\varphi}(\theta) q_t^\pm(\theta) \straightd \tilde{\gamma}(\theta) \\
    &= \int_{\theta \in [0, \pi/2]} \tilde{\varphi}(\theta) \straightd \tau_t^\pm(\theta)
\end{align*}
where
\begin{align*}
    \tilde{\varphi}(\theta) := \int_{z^H, z^\perp} \varphi \left(\cos(\theta) z^H + \sin(\theta) z^\perp \right) \straightd \tilde{\omega}_{d_H}(z^H) \straightd \tilde{\omega}_{d_\perp}(z^\perp),
\end{align*}
which concludes the proof if $\nu_t^\pm$ has a density \wrt~the uniform measure $\tilde{\omega}_d$ on the sphere $\mathbb{S}^{d-1}$. The general case is obtained by a weak convergence argument (of measures with density) as in the third step of Section~\ref{app:low-dim-tau}.

\subsubsection{Proof of the Wasserstein-Fisher-Rao GF}\label{app:1d-WFR-GF}
\begin{proof}
Recall that $\gamma$ is the measure in Definition~\ref{def:gamma}, and consider the following objective functional over $\mathcal{M}([0, \pi/2])^2$:
\begin{align*}
    A(\tau^+, \tau^-) :&= \int_{\varphi \in [0, \pi/2]}
    \ell \Big(\cos(\varphi), \, \tilde{f}(\tau^+, \tau^-; \varphi) \Big) \straightd \tilde{\gamma}(\varphi),\\
    \tilde{f}(\tau^+, \tau^-; \varphi) :&= \int_{\theta \in [0, \pi/2]} \tilde{\phi} \left(\theta; \varphi \right) \straightd(\tau^+ - \tau^-)(\theta), \\
    \tilde{\phi} \left(\theta; \varphi \right) :&= \int_{r,s \in [-1,1]} \sigma \Big( r \cos(\varphi) \cos(\theta) + s \sin(\varphi) \sin(\theta) \Big) \straightd \tilde{\gamma}_{d_H}(r) \straightd \tilde{\gamma}_{d_\perp}(s)
\end{align*}
where, for any $p \in \mathbb{N}$, $\straightd \gamma_p(r) = (1-r^2)^{(p-3)/2} \straightd r$, and $\tilde{\gamma}_p = \gamma_p / |\gamma_p|$ with the normalizing factor $|\gamma_p| = B\left(1/2, (p-1)/2 \right) = \sqrt{\pi} \Gamma((p-1)/2) / \Gamma(p/2) = |\mathbb{S}^{p-1}| / |\mathbb{S}^{p-2}|$. Note that $\tilde{\gamma}_p$ can be simply expressed as the law of $\epsilon \times \sqrt{X}$ where $\epsilon \sim \mathcal{U}(\{-1, +1\})$ and $X \sim \text{Beta}(1/2, (p-1)/2)$.

Computing the first variation or Fréchet derivative of the functional $A$ \wrt~to its first and second argument yields, for any $\theta \in [0, \pi/2]$,
\begin{align*}
    \frac{\delta A}{\delta \tau^\pm}(\tau^+, \tau^-)[\theta] &= \pm \int_{\varphi} \partial_2 \ell \Big(\cos(\varphi), \, \tilde{f}(\tau^+, \tau^-; \varphi) \Big) \tilde{\phi}(\theta; \varphi) \straightd \tilde{\gamma}(\varphi).
\end{align*}
To conclude one needs only observe that the quantity above is simply equal to $G_t(\theta)$, up to a fixed multiplicative constant. Since we have assumed $\rho$ to be the uniform measure over $\mathbb{S}^{d-1}$ to ensure that the Wasserstein GF~\eqref{eq:w-gf} is well-defined, the constant is one here but in the case of a general $\rho$ with spherical symmetry, the result should also hold (as long as the Wasserstein GF~\eqref{eq:w-gf} is well-defined) but the proof is more technical and different constants might appear.

% \paragraph{Integrating positively 2-homogeneous functions against $\rho$.}
% First, we show how to simplify the integral of a generic positively 2-homogeneous function $\Phi$ against $\rho$. Note this calculation applies to both $F(\mu_t)$ and $G_t$ by the positive 1-homogeneity of $f^*$, $f(\mu_t; \cdot)$ and $\sigma$. Using the change of variable $y = r u$ with $(r, u) \in \mathbb{R}_+ \times \mathbb{S}^{d-1}$, we have:
% \begin{align*}
%     \int_{y} \Phi(y) \straightd \rho(y) &= \int_{0}^\infty r^{d-1} \int_{u \in \mathbb{S}^{d-1}} r^2 \Phi(u) \frac{e^{-r^2/2}}{(2\pi)^{d/2}} \straightd \omega_d(u) \straightd r.
% \end{align*}
% Separating the integral in $r$ and the integral in $u$, and using the disintegration Lemma~\ref{th:disintegration}, we get:
% \begin{align*}
%     \int_{y} \Phi(y) \straightd \rho(y) &=  \int_{0}^{+\infty} r^{d+1} \frac{e^{-r^2/2}}{(2\pi)^{d/2}} \straightd r |\mathbb{S}^{d-1}| \int_{u \in \mathbb{S}^{d-1}} r^2 \Phi(u) \frac{e^{-r^2/2}}{(2\pi)^{d/2}} \straightd \tilde{\omega}_d(u) \\
%     &= C_d \int \Phi \left(\cos(\varphi) z^H + \sin(\varphi) z^\perp \right)
%     \tilde{\omega}_{d_H}(z^H) \straightd \tilde{\omega}_{d_\perp}(z^\perp) \straightd \tilde{\gamma}(\varphi),
% \end{align*}
% \begin{align*}
%     C_d := \frac{|\mathbb{S}^{d-1}|}{(2\pi)^{d/2}}\int_{0}^{+\infty} r^{d+1} e^{-r^2/2} \straightd r = |\mathbb{S}^{d-1}| \frac{2^{d/2} \Gamma \left(\frac{d}{2} + 1 \right)}{(2\pi)^{d/2}} = |\mathbb{S}^{d-1}| \frac{\Gamma \left(\frac{d}{2} + 1 \right)}{\pi^{d/2}} = d.
% \end{align*}

\paragraph{Simplifying $f(\mu_t; \cdot)$.}
Using the results from Appendix~\ref{app:1d-theta}, we have for any $\varphi, z^H, z^\perp \in [0, \pi/2] \times \mathbb{S}^{d_H-1} \times \mathbb{S}^{d_\perp-1}$ (so that $u = \cos(\varphi) z^H + \sin(\varphi) z^\perp \in \mathbb{S}^{d-1}$)
\begin{align*}
    f(\mu_t;& \cos(\varphi) z^H + \sin(\varphi) z^\perp) = \\
    &\int_{\psi} \int_{\xi^H, \xi^\perp} \sigma \left(\cos(\psi) \cos(\varphi) \langle \xi^H, z^H \rangle + \sin(\psi) \sin(\varphi) \langle \xi^\perp, z^\perp \rangle \right) \straightd \tilde{\omega}_{d_H}(\xi^H) \straightd \tilde{\omega}_{d_\perp}(\xi^\perp) \straightd (\tau_t^+ - \tau_t^-)(\psi)
\end{align*}
Now, because of the integration against uniform measures on the unit spheres, and the inner products involved, we can use some spherical harmonics theory to simplify those calculations. Using The Funk-Hecke formula (see Atkinson and Han~\cite[Theorem 2.22]{atkinsonSpherical}, $n=0$, $d=d_H$ or $d =d_\perp$), we get
\begin{align*}
    f(\mu_t;& \cos(\varphi) z^H + \sin(\varphi) z^\perp) = \\
    &\frac{|\mathbb{S}^{d_H-2}| |\mathbb{S}^{d_\perp-2}|}{|\mathbb{S}^{d_H-1}| |\mathbb{S}^{d_\perp-1}|} \int_{\psi} \int_{r, s} \sigma \left(r \cos(\psi) \cos(\varphi) + s \sin(\psi) \sin(\varphi) \right) \straightd \gamma_{d_H}(r) \straightd \gamma_{d_\perp}(s) \straightd (\tau_t^+ - \tau_t^-)(\psi) \\
    &= \frac{1}{|\gamma_{d_H}| |\gamma_{d\perp}|} |\gamma_{d_H}| |\gamma_{d_\perp}| \int_{\psi \in [0, \pi/2]} \tilde{\phi}(\psi; \varphi) \straightd (\tau_t^+ - \tau_t^-)(\psi) \\
    &=  \tilde{f}(\tau_t^+, \tau_t^-; \varphi).
\end{align*}

\paragraph{Simplifying $f^*(\cos(\varphi) z^H + \sin(\varphi) z^\perp)$.}
Because $f^*(y) = ||y^H||$, $f^*(\cos(\varphi) z^H + \sin(\varphi) z^\perp)$ is simply $||\cos(\varphi) z^H|| = \cos(\varphi)$ because $z^H \in \mathbb{S}^{d_H-1}$. 

With the previous expressions for $f(\mu_t; \cdot)$ and $f^*$ we have that for any function $\Phi: \mathbb{R}^2 \to \mathbb{R}$, 
\begin{align*}
    \Phi \Big(f^*(\cos(\varphi) z^H + \sin(\varphi) z^\perp), \, f(\mu_t; \cos(\varphi) z^H + \sin(\varphi) z^\perp) \Big) = \Phi \Big(\cos(\varphi), \, \tilde{f}(\tau_t^+, \tau_t^-; \varphi) \Big).
\end{align*}
Note that this applies both to $\Phi(y, \hat y) = \ell(y, \hat y)$ and $\Phi(y, \hat y) = -\partial_2 \ell(y, \hat y)$.

\paragraph{Proof that $F(\mu_t) = A(\tau_t^+, \tau_t^-)$.}
%The integrand defining $F(\mu_t)$ is positively 2-homogeneous in $y$ because $f(\mu_t; \cdot)$ and $f^* = ||\,-\,|| \circ p_H$ are positively 1-homogeneous. We thus use the result above to write
Using the disintegration Lemma~\ref{th:disintegration} for the uniform measure on the unit sphere $\mathbb{S}^{d-1}$, we have
\begin{align*}
    F(\mu_t) &=  \int_{y} \ell \Big(f^*(y), \,  f(\mu_t; y) \Big) \straightd \rho(y) \\
    &=  \int  \ell \circ (f^*(\cdot), \, f(\mu_t; \cdot)) \left(\cos(\varphi) z^H + \sin(\varphi) z^\perp \right) \straightd  \tilde{\omega}_{d_H}(z^H) \straightd \tilde{\omega}_{d_\perp}(z^\perp) \straightd \tilde{\gamma}(\varphi) \\
    &= \int \ell \Big(\cos(\varphi), \,  \tilde{f}(\tau_t^+, \tau_t^-; \varphi) \Big) \straightd \tilde{\omega}_{d_H}(z^H) \straightd \tilde{\omega}_{d_\perp}(z^\perp) \straightd \tilde{\gamma}(\varphi) \\
    &= \int  \ell \Big(\cos(\varphi), \,  \tilde{f}(\tau_t^+, \tau_t^-; \varphi) \Big) \straightd \tilde{\gamma}(\varphi),
\end{align*}
where we have used in the last equality the fact that the integrand does not depend on $z^H$ or $z^\perp$ and that $\tilde{\omega}_{d_H}$ and $\tilde{\omega}_{d_\perp}$ are probability measures (and thus their total mass is $1$). 

\paragraph{Simplifying $G_t$.}
Using the disintegration Lemma~\ref{th:disintegration}, we have:
\begin{align*}
    G_t(\theta) &= \int_y R_t(y) \sigma \left(y^H_1 \cos(\theta) + y^\perp_1 \sin(\theta) \right) \straightd \rho(y) \\
    &= \int R_t( \cos(\varphi) z^H + \sin(\varphi) z^\perp)  \sigma \left(z^H_1 \cos(\varphi) \cos(\theta) +  z^\perp_1 \sin(\varphi) \sin(\theta) \right) \tilde{\omega}_{d_H}(z^H) \straightd \tilde{\omega}_{d_\perp}(z^\perp) \straightd \tilde{\gamma}(\varphi).
\end{align*}
Similarly to what we did for simplifying $f(\mu_t; \cdot)$, we can simplify the integrals against $\tilde{\omega}_{d_H}$ and $\tilde{\omega}_{d_\perp}$ using spherical harmonics theory to get:
\begin{align*}
    G_t(\theta) &= - \int_{\varphi \in [0, \pi/2]} \partial_2 \ell \Big(\cos(\varphi), \, \tilde{f}(\tau_t^+, \tau_t^-; \varphi) \Big) \tilde{\phi}(\varphi; \theta) \straightd \tilde{\gamma}(\varphi).
\end{align*}
This shows that 
\begin{align*}
    -\frac{\delta A}{\delta \tau^+}(\tau^+_t, \tau^-_t)[\theta] &= G_t(\theta) \\
    \frac{\delta A}{\delta \tau^-}(\tau^+_t, \tau^-_t)[\theta] &= G_t(\theta),
\end{align*}
which proves that Equation~\eqref{eq:1d-WFR} indeed describes the evolution of the Wasserstein-Fisher-Rao for the objective functional $A$ over $\mathcal{M}([0, \pi/2])^2$, given by the pair $(\tau_t^+, \tau_t^-)$. 
\end{proof}

\section{Numerical simulations in one dimension}\label{app:numerical}

\paragraph{Measure discretization.}
Discretizing $\mu_t$ via $\mu_{m, t} = \frac{1}{m} \sum_{j=1}^m \delta_{(a_j(t), b_j(t))}$, we get that $\tau_{m,t} := \tau_{m, t}^+ - \tau_{m, t}^- = \frac{1}{m} \sum_{j=1}^m c_j(t) \delta_{\theta(t)}$ where 
\begin{align*}
    c_j(t) &= \varepsilon_j |a_j(t)| \, ||b_j(t)||, \\
    \varepsilon_j &= \text{sign}(a_j(0)), \\
    \theta_j(t) &= \arccos \left(\frac{b_j(t)}{||b_j(t)||} \right).
\end{align*}
Initializing through $a_j(0) \sim \mathcal{U}\{-1, +1 \}$ and $b_j(0) \sim \tilde{\omega}_d = \mathcal{U}(\mathbb{S}^{d-1})$, yields $c_j(0) \sim \mathcal{U}\{-1, +1 \}$ and $\theta_j(0) \sim \tilde{\gamma}$, i.i.d.~over $j$. The gradient flows of Equation~\eqref{eq:1d-WFR} translates into the following ODEs on $(c_j)_{j \in [1, m]}$ and $(\theta_j)_{j \in [1, m]}$:
\begin{align*}
    \frac{d}{dt} c_j(t) &= 2 \varepsilon_j G_t(\theta_j(t)) c_j(t), \\
    \frac{d}{dt} \theta_j(t) &=  \varepsilon_j V_t(\theta_j(t)).
\end{align*}
where $\varepsilon_j = a_j(0) \in \{-1, +1\}$ denotes whether the corresponding quantity appears in $\tau_{t,m}^+$ ($\varepsilon=+1$) or $\tau_{t,m}^-$ ($\varepsilon=-1$).

\paragraph{Time discretization.}
Simulating these ODEs via the discrete Euler scheme with step $\eta > 0$, leads, for any iteration $k \in \mathbb{N}$, to:
\begin{equation}\label{eq:num-updates}
    \begin{aligned}
             c_j(k+1) &= \Big(1 + 2 \eta \varepsilon_j G_k(\theta_j(k)) \Big) c_j(k) \\
    \theta_j(k+1) &= \theta_j(k+1) + \eta \varepsilon_j V_k(\theta_j(k)).
    \end{aligned}
\end{equation}

\paragraph{Approximating integrals numerically.}
The only thing that needs to be dealt with numerically is estimating the values of $G_t$ and $V_t$ which are defined by integrals. With the discretization of the measures, we have:
\begin{align*}
    G_k(\theta) &=  \int_{\varphi} \left( \cos(\varphi) - \tilde{f}(\tau_k^+, \tau_k^-; \varphi) \right) \tilde{\phi}(\varphi; \theta) \straightd \tilde{\gamma}(\varphi), \\
    \tilde{f}(\tau_k^+, \tau_k^-; \varphi) &= \sum_{j=1}^m c_j(k) \tilde{\phi}(\theta_j(k); \varphi), \\
    \tilde{\phi} \left(\theta; \varphi \right) &= \int_{r,s \in [-1,1]} \sigma \Big( r \cos(\varphi) \cos(\theta) + s \sin(\varphi) \sin(\theta) \Big) \straightd \tilde{\gamma}_{d_H}(r) \straightd \tilde{\gamma}_{d_\perp}(s).
\end{align*}
We thus get:
\begin{align*}
    G_k(\theta) &= \int \frac{\psi(r,s; \theta, \varphi)}{m} \sum_{j=1}^m \Big(\cos(\varphi) - c_j(k)  \psi(r', s'; \theta_j(k), \varphi) \Big) \straightd\tilde{\gamma}(\varphi) (\straightd\tilde{\gamma}_{d_H})^2(r,r'))(\straightd\tilde{\gamma}_{d_\perp})^2(s,s'),
\end{align*}
with 
\begin{align*}
        \psi(r,s; \theta, \varphi) :&= \sigma \Big( r \cos(\varphi) \cos(\theta) + s \sin(\varphi) \sin(\theta) \Big).
\end{align*}
Similarly, we have:
\begin{align*}
    V_k(\theta) &= \int \frac{\chi(r,s; \theta, \varphi)}{m} \sum_{j=1}^m \Big(\cos(\varphi) - c_j(k)  \psi(r', s'; \theta_j(k), \varphi) \Big) \straightd\tilde{\gamma}(\varphi) (\straightd\tilde{\gamma}_{d_H})^2(r,r'))(\straightd\tilde{\gamma}_{d_\perp})^2(s,s'),
\end{align*}
with 
\begin{align*}
        \chi(r,s; \theta, \varphi) :&= \frac{\partial}{\partial \theta} \psi(r,s; \theta, \varphi) \\
        &= \sigma' \Big( \cos(\theta) \cos(\varphi)  r +  \sin(\theta) \sin(\varphi) s\Big) \Big[-\sin(\theta) \cos(\varphi)  r +  \cos(\theta) \sin(\varphi) s \Big].
\end{align*}
% $\tilde{\phi}$ appears twice in the expression of $G_k$, and in $V_k$ it appears once and its derivative \wrt~$\theta$ (its first argument) once as well. 
We use Monte-Carlo estimation through sampling to approximate the integrals against the five variables $(\varphi, r, r^\prime, s, s^\prime)$ by drawing $N$ samples from the corresponding distributions. We get:
\begin{align*}
    G_k(\theta_j(k)) &\approx \frac{1}{m N} \sum_{i=1}^{N} \sum_{l=1}^m \Psi_{ji} \Big(\cos(\Phi_i) - c_l(k) \tilde{\Psi}_{li} \Big), \\
    \Psi_{ji}(k) &= \psi(R_i, S_i; \theta_j(k), \Phi_i) \\
    \tilde{\Psi}_{ji}(k) &= \psi(R^\prime_i, S^\prime_i; \theta_j(k), \Phi_i),
\end{align*}
and similarly
\begin{align*}
    V_k(\theta_j(k)) &\approx \frac{1}{m N} \sum_{i=1}^{N} \sum_{j=1}^m \chi_{ji} \Big(\cos(\Phi_i) - c_l(k) \tilde{\Psi}_{li} \Big), \\
    \chi_{ji}(k) &= \chi(R_i, S_i; \theta_j(k), \Phi_i), 
\end{align*}
where we have drawn the samples i.i.d.~over $i \in [1, N]$:
\begin{align*}
    \Phi_i &\sim \tilde{\gamma}, \\
    R_i, R^\prime_i &\sim \tilde{\gamma}_{d_H}, \\
    S_i, S^\prime_i &\sim \tilde{\gamma}_{d_\perp}.
\end{align*}

\paragraph{Iterations in the numerical simulation.}
Defining the vectors $c(k) = (c_j(k))_{j \in [1, m]}$, $\theta(k) = (\theta_j)_{j \in [1, m]}$, and $\varepsilon = (\varepsilon_j)_{j \in [1, m]}$, the update Equations~\eqref{eq:num-updates} can then be written in terms of update rules using the matrices $\Psi(k) = (\Psi_{ji}(k))_{j,i \in [1, m] \times [1, N]}$, $\tilde{\Psi}(k) = (\tilde{\Psi}_{ji}(k))_{j,i \in [1, m] \times [1, N]}$, and finally $\chi = (\chi_{ji}(k))_{j,i \in [1, m] \times [1, N]}$, and the vectors $(\Phi, R, R^\prime, S, S^\prime) = (\Phi_i, R_i, R^\prime_i, S_i, S^\prime_i)_{i \in [1, N]}$, which are re-sampled at each iteration $k \in [0, K]$, where $K \in \mathbb{N}$:
\begin{align*}
    c(k+1) &= (1 + 2 \eta \varepsilon \odot \hat{G}_k) \odot c(k), \\
    \theta(k+1) &= \theta(k) + \eta  \varepsilon \odot  \hat{V}_k,
\end{align*}
where 
\begin{align*}
    \hat{G}_k &= \frac{d}{N_b} \Psi \left(\cos(\Phi) - \frac{1}{m} \tilde{\Psi}^\top c(k) \right), \\
    \hat{V}_k &= \frac{d}{N_b} \chi \left(\cos(\Phi) - \frac{1}{m} \tilde{\Psi}^\top c(k) \right),
\end{align*}
and $\odot$ denotes the Hadamard (element-wise) product of two vectors. One can compute the loss through sampling in a similar way.

\paragraph{Experimental value for $\alpha$ and parameters of the numerical simulation.} For the numerical simulations, we fix the number of atoms of the measure (or equivalently the width of the network) to $m = 1,024$, the learning rate to $\eta = 5.10^{-3}$, the number of samples for the Monte-Carlo scheme to $N=1,000$, and the total number of iterations to $K=20,000$. The experimental value for $\alpha$ (see Section~\ref{sec:1d-red}) is computed through $\alpha_{\text{exp}} = \tau_{m, K}^+([0, \pi/2])$, that is 
\begin{align*}
    \alpha_{\text{exp}} &= \frac{1}{m} \sum_{j \in J^+} c_j(K), \\
    J^+ :&= \{j \in [1, m] \ ; \ \varepsilon_j =1\}.
\end{align*}
As mentioned in the main text, the behaviour of the numerical simulation depends a lot on the step-size $\eta$. Some of the differences between our observations and our intuitive description of the limiting model (infinite-width and continuous time) can come from too big a step-size. We have thus run the numerical simulation with $\eta = 2.10^{-5}$ as well, for $K = 230,000$ steps but the same differences still appear (\eg, $\tau_{m, k}^+([0, \pi/2])$ still grows larger than the theoretically expected limit $\alpha$ after some time, albeit by a smaller margin) and after the critical $t^*$, some negative particles seem to go slightly beyond $\pi/2$, even with a very small step-size, a fact which cannot happen for the limiting model. Consequently, in Figure~\ref{fig:dist-total-var}, the first histogram bin right after $\pi/2$ has been merged with the one before.

\end{document}